\documentclass{article}

\usepackage[nonatbib,final]{neurips_2025}

\usepackage[final]{microtype}
\usepackage[utf8]{inputenc} %
\usepackage[T1]{fontenc}    %
\usepackage{url}            %
\usepackage{booktabs}       %
\usepackage{amsfonts}       %
\usepackage{nicefrac}       %
\usepackage{xcolor}         %
\usepackage{graphicx}
\usepackage{subcaption}
\usepackage{titletoc}

\contentsmargin{2.55em}
\dottedcontents{lsection}[3.8em]{\small \itshape}{2.3em}{1pc}
\dottedcontents{lsubsection}[6.1em]{\small \itshape}{2.3em}{1pc}
\usepackage{pgfplots}
\usepackage[edges]{forest}
\pgfplotsset{compat=newest}
\usetikzlibrary{trees,positioning,calc}

\usepackage[backref=true,backend=biber,style=authoryear-comp,sortcites,sorting=ynt,maxbibnames=99,maxcitenames=3,uniquelist=true,uniquename=init,giveninits]{biblatex}
\addbibresource{bib.bib}
\DeclareNameAlias{sortname}{family-given}

\usepackage{algorithm}
\usepackage{algpseudocode}

\usepackage[textsize=tiny]{todonotes} %

\usepackage{amsmath}
\usepackage{amsthm}
\usepackage{amssymb}
\usepackage{mathtools}
\usepackage{multirow}
\usepackage{tabularx}
\usepackage{enumitem}
\usepackage{bm}

\usepackage{pifont}%
\newcommand{\cmark}{\ding{51}}%
\newcommand{\xmark}{\ding{55}}%

\newcommand*{\R}{\mathbb{R}}

\newcommand*{\C}{\mathcal{C}}

\newcommand*{\bfx}{{\bm{x}}}
\newcommand*{\bfX}{{\bm{X}}}

\newcommand*{\bfy}{{\bm{y}}}

\newcommand*{\bfz}{{\bm{z}}}

\newcommand*{\bfa}{{\bm{a}}}
\newcommand*{\bfA}{{\bm{A}}}
\newcommand*{\bfu}{{\bm{u}}}

\newcommand*{\bsf}{{\bm{f}}}

\newcommand*{\rmd}{\mathrm{d}}

\newcommand*{\var}{\mathrm{Var}}
\newcommand*{\ex}{\mathbb{E}}

\DeclareFontFamily{U}{mathx}{}
\DeclareFontShape{U}{mathx}{m}{n}{<-> mathx10}{}
\DeclareSymbolFont{mathx}{U}{mathx}{m}{n}
\DeclareMathAccent{\widecheck}{0}{mathx}{"71}

\usepackage{thmtools}
\declaretheorem[name=Theorem,parent=section]{theorem}

\declaretheorem[name=Proposition,sibling=theorem]{proposition}
\declaretheorem[name=Lemma,sibling=theorem]{lemma}
\declaretheorem[name=Corollary,parent=theorem]{corollary}
\declaretheorem[name=Assumption,parent=section]{assumption}
\declaretheorem[name=Definition,parent=section,style=definition]{definition}

\declaretheorem[name=Remark,sibling=definition,style=definition]{remark}
\declaretheorem[name={Key insight},style=definition]{insight}

\definecolor{g-red}{HTML}{cc241d}
\definecolor{g-green}{HTML}{98971a}
\definecolor{g-yellow}{HTML}{d79921}
\definecolor{g-blue}{HTML}{458588}
\definecolor{g-purple}{HTML}{b16286}
\definecolor{g-aqua}{HTML}{689d6a}
\definecolor{g-orange}{HTML}{d65d0e}

\definecolor{g-red2}{HTML}{9d0006}
\definecolor{g-green2}{HTML}{78740e}
\definecolor{g-yellow2}{HTML}{b57614}
\definecolor{g-blue2}{HTML}{076678}
\definecolor{g-purple2}{HTML}{8f3f71}
\definecolor{g-aqua2}{HTML}{427b58}
\definecolor{g-orange2}{HTML}{af3a03}

\definecolor{bg0-s}{HTML}{32302f}
\definecolor{bg0}{HTML}{282828}
\definecolor{bg1}{HTML}{3c3836}
\definecolor{bg2}{HTML}{504945}
\definecolor{bg3}{HTML}{665c54}
\definecolor{fg0}{HTML}{fbf1c7}
\definecolor{fg1}{HTML}{ebdbb2}

\PassOptionsToPackage{hyphens}{url}\usepackage[breaklinks=true,bookmarks=false]{hyperref}
\hypersetup{
    colorlinks,
    linkcolor=g-red2,
    citecolor=g-red2,
    urlcolor=g-purple2
}

\usepackage[capitalise, noabbrev]{cleveref}
\crefname{assumption}{Assumption}{Assumptions}

\usepackage{xspace}
\makeatletter
\DeclareRobustCommand\onedot{\futurelet\@let@token\@onedot}
\def\@onedot{\ifx\@let@token.\else.\null\fi\xspace}

\def\eg{\emph{e.g}\onedot} \def\Eg{\emph{E.g}\onedot}
\def\ie{\emph{i.e}\onedot} \def\Ie{\emph{I.e}\onedot}
\def\NB{\emph{N.B}\onedot} 
\def\cf{\emph{cf}\onedot} 
\def\etc{\emph{\&c}\onedot} 
\def\wrt{w.r.t\onedot}
\makeatother

\usepackage[many]{tcolorbox}

\newtcolorbox{theorembox}{
    enhanced,
    sharp corners,
    frame hidden,
    boxrule=0pt,
    boxsep=0pt,
    notitle,
    borderline west={4pt}{0pt}{g-orange},
    colback=gray!5,
}

\newtcolorbox{standoutbox}{
    enhanced,
    sharp corners,
    breakable,
    frame hidden,
    boxrule=0pt,
    left=2mm, right=2mm,
    notitle,
    colback=g-blue!5,
    colframe=g-blue!5
}

\newtcolorbox{defbox}{
    enhanced,
    sharp corners,
    frame hidden,
    boxrule=0pt,
    boxsep=0pt,
    notitle,
    borderline west={4pt}{0pt}{g-red},
    colback=gray!5,
}

\title{Greed is Good:\\A Unifying Perspective on Guided Generation}

\author{%
Zander W.~Blasingame\\
Clarkson University\\
\texttt{blasinzw@clarkson.edu} \\
\And
Chen Liu\\
Clarkson University\\
\texttt{cliu@clarkson.edu}
}

\begin{document}

\maketitle

\begin{abstract}
    Training-free guided generation is a widely used and powerful technique that allows the end user to exert further control over the generative process of flow/diffusion models.
    Generally speaking, two families of techniques have emerged for solving this problem for \textit{gradient-based guidance}: namely, \textit{posterior guidance} (\ie, guidance by projecting the current sample to the target distribution via the target prediction model) and \textit{end-to-end guidance} (\ie, guidance by performing backpropagation throughout the entire ODE solve).
    In this work, we show that these two seemingly separate families can actually be \textit{unified} by looking at the posterior guidance as a \textit{greedy strategy} of \textit{end-to-end guidance}.
    We explore the theoretical connections between these two families and provide an in-depth theoretical understanding of these two techniques relative to the \textit{continuous ideal gradients}.
    Motivated by this analysis, we then show a method for \textit{interpolating} between these two families enabling a trade-off between compute and accuracy of the guidance gradients.
    We then validate this work on several inverse image problems and property-guided molecular generation.

\end{abstract}

\section{Introduction}
Guided generation greatly extends the utility of state-of-the-art generative models by allowing the end user to exert greater control over the generative process, ultimately making the tool more useful in a wide variety of applications ranging from conditional generation, editing of samples, inverse problems \etc
We focus particularly on a subset of neural differential equations that model \textit{affine probability paths}, in other words, diffusion and flow models due to their widespread adoption in a large variety of practical tasks.
\Eg, audio \parencite{liu2023audioldm,schneider2024mousai}, images \parencite{rombach2022high,flux2024}, biometrics \parencite{blasingame2024leveraging}, molecules \parencite{hoogeboom2022equivariant,ben-hamu2024dflow}, proteins \parencite{watson2023novo,skreta2024superposition}, \etc

We can divide the guided generation techniques into two broad categories: conditional training and training-free methods.
The former of these two requires the training of the underlying diffusion/flow model on additional conditional information, either as a part of the training or at a later time as additional fine-tuning \parencite{song2021denoising,ho2021classifier,hulora}.
The latter category instead makes use of some known guidance function defined on the data distribution and incorporates this information back to the model to influence the generative process.
These training-free techniques can be further broken down into two sub-categories, \ie, posterior and end-to-end guidance.
The former class of techniques uses a simple estimation of the posterior distribution that can be easily found in diffusion models \parencite{chung2023diffusion} and \textit{some} flow models \parencite[\cf][Section 4.8]{lipman2024flow-guide}.
This simple posterior estimate can then be fed into a guidance function to construct a gradient \wrt to the current timestep.
We refer to this category as \textit{posterior guidance} as they use this posterior estimate to perform the guidance process.
This can then be used to update the ODE solve as a form of classifier guidance \parencite{chung2023diffusion,yu2023freedom}.
The latter class of techniques, in contrast, performs backpropagation throughout the entire sampling process of the flow/diffusion model \parencite{blasingame2024adjointdeis,ben-hamu2024dflow}.
We refer to this category as \textit{end-to-end guidance} as it performs backpropagation throughout the \textit{entire} sampling trajectory.

The aim of this work is to bring these two seemingly disparate family of techniques together into a \textit{single unified view}. 
\begin{standoutbox}
Our key insight is that we can \textit{bridge} between techniques that use posterior sampling and techniques that use end-to-end optimization for guidance by viewing the former as a \textit{greedy strategy} on the latter.
\end{standoutbox}

\paragraph{Contributions.}
In light of this insight, we compare several state-of-the-art techniques from this perspective, showing how this perspective yields a unified and flexible framework for viewing guided generation with flow/diffusion models.
We perform a detailed analysis of this greedy strategy, showing that it is not only a unifying view, but that it actually makes \textit{good} decisions under certain scenarios.
We then show a perspective which allows one to move between these two classes of guided generation techniques, opening up an exciting and novel design space.
Lastly, we conduct some numerical experiments on inverse image problems and molecule generation.

\section{Preliminaries}
Flow models \parencite{lipman2023flow} are a highly popular class of generative models that model the generative process as a neural \textit{ordinary differential equation} (ODE) \parencite{chen2018neural}.
Consider two $\R^d$-valued random variables: $\bfX_0 \sim p(\bfx)$ and $\bfX_1 \sim q(\bfx)$, denoting the \textit{source} (noise) and \textit{target} (data) distributions, respectively.
Then consider a time-dependent vector field $\bfu \in \mathcal{C}^{1,r}([0,1] \times \R^d;\R^d)$\footnote{For notational simplicity, we let $\mathcal{C}^{k_1,k_2,\ldots,k_n}(X_1 \times X_2 \times \cdots \times X_n ; Y)$ denote the set of continuous functions that are $k_i$-times differentiable in the $i$-th argument mapping from $(X_1 \times X_2 \times \cdots \times X_n)$ to $Y$, if $Y$ is omitted, then $Y = \R$.
}
with $r \geq 1$ which determines
a time-dependent flow $\Phi_t \in \mathcal{C}^{1,r}([0,1]\times\R^d;\R^d)$ which satisfies the ODE
\begin{equation}
    \Phi_0(\bfx) = \bfx, \quad \frac{\rmd}{\rmd t} \Phi_t(\bfx) = \bfu(t, \Phi_t(\bfx)).
\end{equation}
This is known as a $\mathcal{C}^{r}$-flow and this flow is diffeomorphism in its second argument for all $t \in [0,1]$.
For notational simplicity let $\bfu_t(\bfx) \mapsto \bfu(t, \bfx)$.
A special case of flow models are known as \textit{affine probability paths} and are defined as $\bfX_t = \alpha_t \bfX_0 + \sigma_t \bfX_1$ with schedule $(\alpha_t,\sigma_t)$. 
We provide more details on flow models in \cref{app:more_flow}.\footnote{Without loss of generality we consider flow models which subsume the ODE formulation of diffusion models.}

\begin{figure*}[t]
    \centering

    \tikzset{
        basic/.style  = {draw, text width=20mm, font=\tiny, rectangle},
        root/.style = {basic, thin, align=center},
        tnode/.style = {basic, thin, align=center, font=\tiny\bfseries},
        xnode/.style = {basic, thin, align=left, text width=30mm},
        xnnode/.style = {basic, thin, align=left, text width=20mm},
        arrow/.style = {thick, dotted, shorten >=3, shorten <=3, ->}
    }

    \begin{forest} for tree={
        grow=east,
        growth parent anchor=west,
        parent anchor=east,
        child anchor=west,
        fork sep=6mm,
        l sep=12mm,
    },
    forked edges,
    [Training-free\\guided generation, root
        [Posterior\\guidance, tnode, name=posterior
            [\parencite{chung2023diffusion}, xnnode]
            [\parencite{yu2023freedom}, xnnode]
        ]
        [End-to-end\\guidance, tnode, name=e2e
            [State\\optimization, tnode
                [\parencite{blasingame2024adjointdeis}, xnode]
                [\parencite{ben-hamu2024dflow}, xnode]
            ]
            [Control signal\\ optimization, tnode
                [\parencite{liu2023flowgrad}, xnode]
                [\parencite{wang2024training}, xnode]
            ]
        ]
    ]
    \draw[arrow] (e2e) to node[font=\scriptsize, anchor=west,align=center, yshift=0mm]{A greedy strategy} (posterior); 
    \end{forest}

    \caption{The greedy perspective as a unification of separate families in the taxonomy of training-free guided generation. We provide a more detailed version of this in \cref{fig:app:taxonomy_of_guided}.}
    \label{fig:taxonomy_of_guided}
\end{figure*}

\section{An overview of training-free guidance with gradients}
We explore techniques for solving \textit{training-free} guidance problems---this is in contrast with techniques like classifier \parencite{diff_beat_gan,song2021scorebased} and classifier-free \parencite{ho2021classifier} guidance---which use some off-the-shelf guidance function $\mathcal L \in \C^1(\R^d)$ defined on the output of the flow model.
\Ie, we wish to optimize the ODE solve such that the output $\bfx_1$ minimizes $\mathcal L$.
Suppose we have numerical scheme (Euler, RK4, DPM-Solver, \etc) denoted
\begin{equation}
    \label{eq:numerical_solver}
    \begin{aligned}
        &\bm\Phi : \R \times \R \times \R^d \times \C(\R \times \R^d; \R^d) \to \R^d,\\
        &\bm\Phi(t_n, t_{n+1}, \bfx_n, \bfu_{t}^\theta) \mapsto \bfx_{n+1}.
    \end{aligned}
\end{equation}
For simplicity we will omit the explicit dependency of the numerical scheme on $\bfu_t^\theta$ and assume it implicitly; likewise, let $\bm \Phi_{h}(t_n, \cdot, \cdot) \mapsto \bm\Phi(t_n, t_{n+1}, \cdot, \cdot)$ where $h = t_{n+1} - t_n$.
We write this objective more formally below in \cref{eq:problem_stmt}.
\begin{defbox}
    \textbf{Problem statement.}
    Given some $t_1 \in [0, 1)$ and step size regime $\{t_1 < t_2 < \ldots < t_N = 1\}$ solve:
    \begin{equation}
        \label{eq:problem_stmt}
        \begin{array}{ll@{}r@{}}
            \textrm{Find a sequence} & \{\bfx_n\}_{n=1}^N & \textrm{which minimizes}\; \mathcal L(\bfx_N),\\
            \textrm{subject to} & \bfx_{n+1} = \bm \Phi(t_{n+1}, t_n, \bfx_n).
        \end{array}
    \end{equation}
\end{defbox}
Next, we will detail two popular families of techniques for solving the problem mentioned above.
We illustrate the relationships between these different families in \cref{fig:taxonomy_of_guided}, a taxonomy of training-free guidance methods.
We note that these two seemingly separate branches can be unified back into a single branch, by the viewing posterior guidance techniques as a greedy strategy of the later.
Likewise, we provide a visual overview of the guidance mechanisms in \cref{fig:greedy-overview}.

\begin{figure}[t]
    \centering
    \includegraphics[width=\textwidth, trim={0 0 0 2.5cm}, clip]{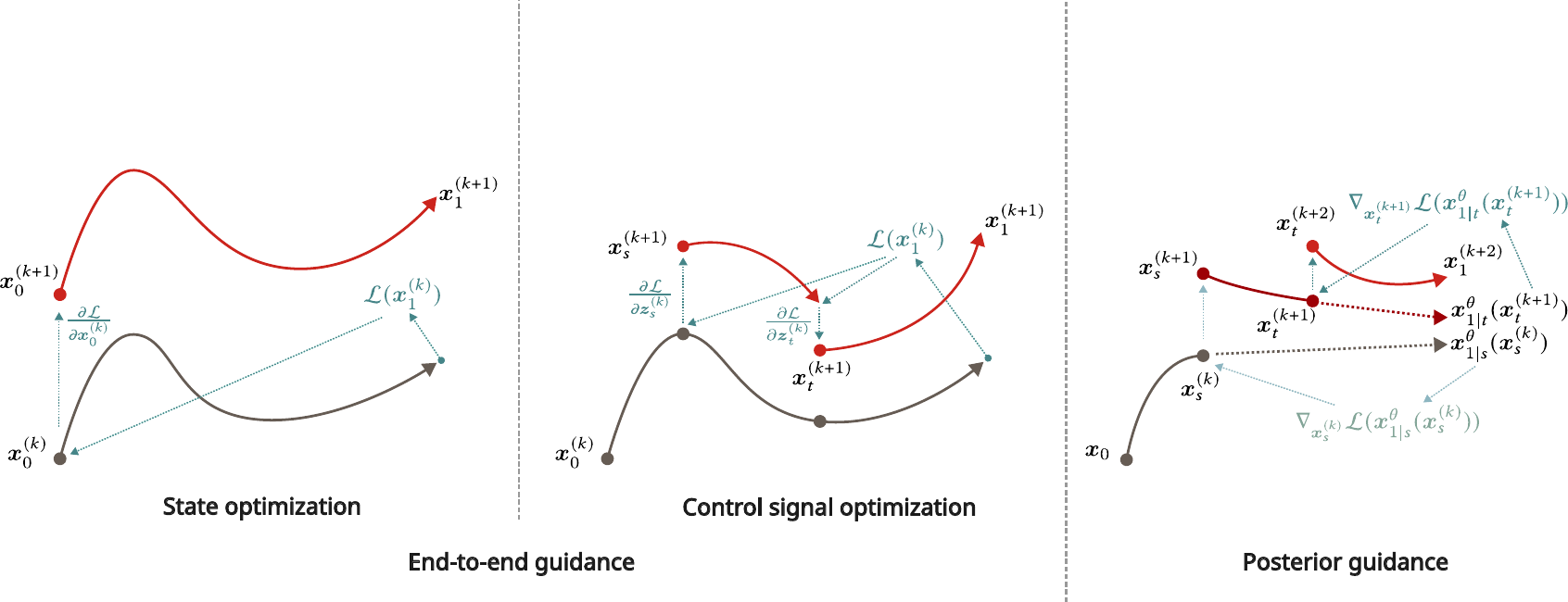}
    \caption[
        Visual comparison of different training-free guided generation techniques.
    ]{
        Visual comparison of different training-free guided generation techniques.
    }
    \label{fig:greedy-overview}
\end{figure}

\subsection{Posterior guidance}
A popular technique for \textit{training-free} guidance is what we will term \textit{posterior guidance} \parencite{chung2023diffusion,yu2023freedom}.
The key idea behind this strategy is to use the parameterized target prediction model $\bfx_{1|t}^\theta(\bfx)$, \ie, the expected value of the posterior distribution given $\bfX_t = \bfx$, to provide a guidance gradient of the form $\nabla_\bfx \mathcal L(\bfx_{1|t}^\theta(\bfx))$ for some guidance function $\mathcal L \in \C^1(\R^d)$.
For literature working with score-based generative models \parencite{song2021scorebased}, this interpretation arose from the famous Tweedie's formula \parencite{stein1981estimation,efron2011false}.
Thus, for each $\bfx_n$ in the ODE solve, we add guidance to it in the form of posterior guidance gradient.

\subsection{End-to-end optimization for guidance}
Another popular class of techniques is what we will term \textit{end-to-end guidance} \parencite{blasingame2024adjointdeis,ben-hamu2024dflow}, \ie, techniques which perform guidance by optimizing the initial condition $\bfx_0$ \wrt the guidance function $\mathcal L$; such techniques require performing backpropagation through a neural ODE.
Fittingly, we will import notations and terminology from the study of \textit{neural differential equations} \parencite{kidger_thesis} to discuss these techniques.
The first technique for performing this kind of guidance is known as \textit{discretize-then-optimize} (DTO) where the numerical scheme (\cf \cref{eq:numerical_solver}) is part of the computation graph of the model reverse-mode automatic differentiation \parencite{linnainmaa1976backprop} is applied, \ie, \textit{vanilla backpropagation}.
The memory cost of such techniques, however, is $\mathcal O(n)$, prompting researchers to explore the second method known as \textit{optimize-then-discretize} (OTD) which instead solves \textit{another} ODE in \textit{reverse-time} which models the continuous-time dynamics of reverse-mode differentiation, this is called the \textit{continuous adjoint method} \parencites{chen2018neural}[\cf][Section 5.1.2]{kidger_thesis}. 

Given a flow model $\bfu_\theta \in \mathcal{C}^{1,1}([0,1] \times \R^d;\R^d)$ that is Lipschitz continuous in its second argument and the solution $\bfx: [0,1] \to \R^d, \bfx_t \mapsto \bfx(t)$, let $\bfa_\bfx \coloneq \partial\mathcal{L}/\partial \bfx_t$ denote the \textit{adjoint state}.
Then $\bfa_\bfx(t)$ can be found by solving the continuous adjoint equation:
\begin{equation}
    \bfa_\bfx(1) = \frac{\partial \mathcal{L}}{\partial \bfx_1}, \quad \frac{\rmd \bfa_\bfx}{\rmd t}(t) = -\bfa_\bfx(t)^\top \frac{\partial \bfu_t^\theta}{\partial \bfx}(\bfx_t).
    \label{eq:continuous_adjoint_eqs}
\end{equation}
\NB, this technique was first proposed by \textcite{pontryagin1963} and popularized for neural differential equations by \textcite{chen2018neural}.
This approach has a constant memory cost $\mathcal O(1)$; however, this comes with the cost of several drawbacks related to the numerical scheme.
While these issues are not particularly relevant to our theoretical analyses, we note them in \cref{app:otd_issues} for the ML practitioner.

\section{A greedy perspective on guidance}
\label{sec:greedy_perspective}
Now returning back to our problem statement from \cref{eq:problem_stmt}, the end-to-end guidance techniques amount to optimizing the initial condition $\bfx_0$ in light of the entire solution trajectory admitted by the numerical scheme.
A natural question we consider for problems of this form is that rather than finding the full sequence $\{\bfx_n\}$, can we make use of local information instead?
\Ie,
\begin{standoutbox}
    \begin{insight}
        Rather than solving the full ODE from $\bfx_t$, what if we greedily took a locally optimal step at each $\bfx_t$ instead?
    \end{insight}
\end{standoutbox}

Formally, we define a greedy strategy is the following augmentation to the numerical scheme from \cref{eq:numerical_solver} as
\begin{align}
    \label{eq:greedy_action}
    \bfx_n^{\mathcal G} &= \mathcal G (t_n, \bfx_n, \bfu_{t_n}^\theta),\\
    \bfx_{n+1} &= \bm\Phi(t_n, t_{n+1}, \bfx_n^{\mathcal G}),
\end{align}
where $\mathcal G$ is the \textit{greedy action} which makes its decision from only information available at time $t_n$.

Now in particular we are interested in a specific greedy action, \ie, posterior guidance.
We define this greedy action as the solution to the following iterative process with initial value $\bfx_n^{(0)} = \bfx_n$ which solves
\begin{equation}
    \label{eq:action}
    \bfx_n^{(k+1)} = \bfx_n^{(k)} - \eta \nabla \mathcal L\left(\bfx_{1|t_n}^\theta(\bfx_n^{(k)})\right),
\end{equation}
for some sufficient number $k > 0$ and learning rate $\eta > 0$.

By construction this greedy action is the popular strategy of posterior guidance.
The rest of this section is then devoted to exploring the connections between this greedy action and end-to-end guidance schemes.
More, succinctly we state our insight below:
\begin{standoutbox}
    \begin{insight}
        Posterior guidance can be viewed as Euler schemes within the DTO or OTD backpropagation schemes.
    \end{insight}
\end{standoutbox}

To make our analysis simpler, let us write the flow from $s$ to $t$ in terms of the target prediction model.
The flow from time $s$ to time $t$ can then be expressed as the integral of the right-hand side of \cref{eq:vector_field_denoiser} over time.
Thus, the flow is now expressed as a semi-linear integral equation with linear term $a_t\bfx$ and non-linear term $b_t\bfx_{1|t}^\theta(\bfx)$.
Due to this semi-linear structure, we apply the same technique of \textit{exponential integrators} \parencite{hochbruck2010exponential} that has been successfully used to simplify numerical solvers for diffusion models \parencite{lu2022dpmsolver,zhangfast,gonzalez2024seeds}.
\NB, the full derivations and proofs for this section can be found in \cref{app:greedy_perspective}.

Let $\gamma_t \coloneq \alpha_t/\sigma_t$ denote the signal-to-noise ratio (SNR), then $\gamma_t$ is a monotonically increasing sequence in $t$, due to the properties of $(\alpha_t, \sigma_t)$ (\cf \cref{eq:schedule_boundaries}) and thus has an inverse $t_\gamma$ such that $t_\gamma(\gamma(t)) = t$.
With abuse of notation, we let $\bfx_\gamma \coloneq \bfx_{t_\gamma(\gamma)}$ and $\bfx_{1|\gamma}^\theta(\cdot) = \bfx_{1|t_\gamma(\gamma)}^\theta(\cdot)$.
As such, we can rewrite the solution to the flow model in terms of $\gamma$ by making use of exponential integrators, which we show in \cref{prop:exact_sol_flow} with the full proof provided in \cref{proof:exact_sol_flow}.
\begin{theorembox}
\begin{restatable}[Exact solution of affine probability paths]{proposition}{exactsolflow}
    \label{prop:exact_sol_flow}
    Given an initial value of $\bfx_s$ at time $s \in [0, 1]$ the solution $\bfx_t$ at time $t \in [0,1]$ of an ODE governed by the vector field in \cref{eq:marginal_vec} is:
    \begin{equation}
        \label{eq:exact_sol_flow}
         \bfx_t = \frac{\sigma_t}{\sigma_s} \bfx_s + \sigma_t \int_{\gamma_s}^{\gamma_t} \bfx_{1|\gamma}^\theta(\bfx_\gamma)\;\rmd \gamma.
    \end{equation}
\end{restatable}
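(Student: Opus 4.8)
The plan is to solve the ODE $\dot{\bfx}_t = \bfu_t(\bfx_t)$ exactly by treating it as a semi-linear equation and applying the method of exponential integrators, exactly as the surrounding text anticipates. First I would write the marginal vector field from \cref{eq:marginal_vec} in the target-prediction form $\bfu_t(\bfx) = a_t\bfx + b_t\,\bfx_{1|t}^\theta(\bfx)$ of \cref{eq:vector_field_denoiser} and read off the two scalar coefficients in terms of the schedule: one checks that the linear coefficient satisfies $\int_s^t a_\tau\,\rmd\tau = \log(\sigma_t/\sigma_s)$ (equivalently $a_t = \dot\sigma_t/\sigma_t$) and that $b_t/\sigma_t = (\dot\alpha_t\sigma_t - \alpha_t\dot\sigma_t)/\sigma_t^2 = \dot\gamma_t$, i.e.\ the nonlinear coefficient, once divided by $\sigma_t$, is precisely the derivative of the SNR reparameterization $t \mapsto \gamma_t$ that will be used at the end.

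Second, I would run the integrating-factor computation: multiplying $\dot{\bfx}_t = a_t\bfx_t + b_t\bfx_{1|t}^\theta(\bfx_t)$ by $\mu_t \coloneq \sigma_s/\sigma_t$ makes the left-hand side an exact derivative, since $\dot\mu_t/\mu_t = -a_t$, so that $\frac{\rmd}{\rmd t}\bigl(\mu_t\bfx_t\bigr) = \mu_t b_t\,\bfx_{1|t}^\theta(\bfx_t)$. Integrating from $s$ to $t$ and using $\mu_s = 1$ gives
\[
    \bfx_t = \frac{\sigma_t}{\sigma_s}\bfx_s + \sigma_t\int_s^t \frac{b_\tau}{\sigma_\tau}\,\bfx_{1|\tau}^\theta(\bfx_\tau)\,\rmd\tau.
\]
Third, I would change variables $\gamma = \gamma_\tau$ in this integral; since $\gamma_t \in \mathcal C^1$ is strictly increasing (by the schedule properties recorded in \cref{eq:schedule_boundaries}) its inverse $t_\gamma$ exists, and $\frac{b_\tau}{\sigma_\tau}\,\rmd\tau = \dot\gamma_\tau\,\rmd\tau = \rmd\gamma$ by the identity from step one, so with the convention $\bfx_{1|\gamma}^\theta(\bfx_\gamma) = \bfx_{1|t_\gamma(\gamma)}^\theta(\bfx_{t_\gamma(\gamma)})$ the expression collapses to exactly \cref{eq:exact_sol_flow}. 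Setting $t = s$ makes the integral vanish and recovers $\bfx_s$, so the initial condition is met; uniqueness of $\bfx_t$ is already furnished by the Lipschitz regularity of $\bfu_t$ assumed earlier, so exhibiting this one solution suffices. Equivalently, and perhaps more cleanly for a write-up, one can simply differentiate the right-hand side of \cref{eq:exact_sol_flow} in $t$, using $\frac{\rmd}{\rmd t}\int_{\gamma_s}^{\gamma_t}\bfx_{1|\gamma}^\theta(\bfx_\gamma)\,\rmd\gamma = \dot\gamma_t\,\bfx_{1|t}^\theta(\bfx_t)$ and regrouping the $\dot\sigma_t$ terms into $\frac{\dot\sigma_t}{\sigma_t}\bfx_t$, to recover $\bfu_t(\bfx_t)$.

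There is no real obstacle here --- it is a routine variation-of-constants argument --- but two points deserve a sentence of care. First, \cref{eq:exact_sol_flow} is not a closed-form solution: the unknown trajectory reappears inside the integral through $\bfx_{1|\gamma}^\theta(\bfx_\gamma)$, so what is actually established is the exact integral (Volterra) identity the solution must satisfy, in the same sense that exponential-integrator formulas for diffusion samplers are called ``exact.'' Second, the change of variables needs $\gamma_t$ to be continuously differentiable and strictly monotone; both follow from the assumptions on $(\alpha_t,\sigma_t)$, but it is worth citing \cref{eq:schedule_boundaries} explicitly at that step.
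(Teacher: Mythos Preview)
Your proposal is correct and follows essentially the same approach as the paper: rewrite the vector field in target-prediction form, apply the integrating factor $\exp\bigl(-\!\int a_u\,\rmd u\bigr)=\sigma_s/\sigma_t$, integrate, and then change variables via $\rmd\gamma=\dot\gamma_\tau\,\rmd\tau=(b_\tau/\sigma_\tau)\,\rmd\tau$. Your additional remarks (verifying the initial condition, the Volterra-identity caveat, and the monotonicity needed for the change of variables) are well taken but not present in the paper's proof.
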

\end{theorembox}

\begin{remark}
    This result bears some similarity to \textcite[Propostion 5.1]{lu2022dpm++}; however, they integrate \wrt the log-SNR; their result can be recovered, \textit{mutatis mutandis}, with the identity $\lambda_t = \log \gamma_t$.
\end{remark}

\subsection{Greedy guidance as an Euler scheme}
Now equipped with this simplified form, we begin to draw connections between end-to-end guidance and our greedy strategy.
In \cref{prop:greedy_is_explicit_euler} we show that the greedy action in \cref{eq:action} can be interpreted as backpropagation via a DTO scheme with an Euler step of size $h = \gamma_1 - \gamma_t$.
\begin{theorembox}
    \begin{restatable}[Greedy as an explicit Euler scheme within DTO]{proposition}{greedyasexplicit}
        \label{prop:greedy_is_explicit_euler}
        For some trajectory state $\bfx_t$ at time $t$, the greedy gradient given by $\nabla_{\bfx} \mathcal{L}(\bfx_{1|t}^\theta(\bfx))$ is the DTO scheme with an explicit Euler discretization with step size $h = \gamma_1 - \gamma_t$.
    \end{restatable}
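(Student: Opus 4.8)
The plan is to read the claim straight off the exact-solution formula of \cref{prop:exact_sol_flow}. First I would specialize that proposition to the sub-problem of flowing from the current time $t$ to the terminal time $1$ (the time at which $\mathcal L$ is evaluated in \cref{eq:problem_stmt}), which gives the identity $\bfx_1 = \tfrac{\sigma_1}{\sigma_t}\bfx_t + \sigma_1 \int_{\gamma_t}^{\gamma_1} \bfx_{1|\gamma}^\theta(\bfx_\gamma)\,\rmd\gamma$. Equivalently, in the rescaled state $\bfy_\gamma = \bfx_\gamma/\sigma_\gamma$ with $\gamma$ (the SNR) as the independent variable, the dynamics are the non-stiff ODE $\rmd\bfy_\gamma/\rmd\gamma = \bfx_{1|\gamma}^\theta(\sigma_\gamma \bfy_\gamma)$ to be integrated over $[\gamma_t,\gamma_1]$ — this is precisely the object on which the numerical scheme $\bm\Phi$ of \cref{eq:numerical_solver} acts after the exponential-integrator change of variables.

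Next I would apply a single explicit (forward) Euler step of this ODE over the \emph{entire} remaining interval, i.e. with step size $h = \gamma_1 - \gamma_t$. This replaces the integral by its left-endpoint quadrature, $\int_{\gamma_t}^{\gamma_1}\bfx_{1|\gamma}^\theta(\bfx_\gamma)\,\rmd\gamma \approx h\,\bfx_{1|t}^\theta(\bfx_t)$, so the one-step map is $\widehat{\bm\Phi}_h(t,\bfx_t) = \tfrac{\sigma_1}{\sigma_t}\bfx_t + \sigma_1 h\,\bfx_{1|t}^\theta(\bfx_t)$, which evaluates the network exactly once, at $(t,\bfx_t)$. The DTO gradient is then obtained by placing $\mathcal L\circ\widehat{\bm\Phi}_h(t,\cdot)$ in the computation graph and applying reverse-mode AD; since the only nonlinear dependence on the network output is through $\bfx_{1|t}^\theta(\bfx_t)$, the vector--Jacobian product splits as $\tfrac{\sigma_1}{\sigma_t}\,\nabla\mathcal L(\widehat{\bfx}_1) + \sigma_1 h\,\big(\partial \bfx_{1|t}^\theta/\partial\bfx\big)^\top \nabla\mathcal L(\widehat{\bfx}_1)$. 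Stripping the affine skip term $\tfrac{\sigma_1}{\sigma_t}\bfx_t$ (which never passes through the network) and absorbing the scalar $\sigma_1 h$ into the learning rate $\eta$ leaves exactly $\nabla_\bfx\mathcal L(\bfx_{1|t}^\theta(\bfx))$, which is the greedy action defining the iteration in \cref{eq:action}.

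The step I expect to be the main obstacle is making precise the sense in which the DTO--Euler gradient \emph{equals} the greedy gradient, because of that linear skip term and the scalar prefactor. I would discharge it in one of two ways: (i) observe that posterior-guidance methods add $\nabla\mathcal L(\bfx_{1|t}^\theta(\bfx))$ \emph{on top of} the unconditional update, which already carries the linear part of $\widehat{\bm\Phi}_h$, so only the network-dependent summand is the genuine ``guidance'' contribution; or (ii) invoke the schedule boundary conditions of \cref{eq:schedule_boundaries}, under which the one-step map to $t=1$ is the exponential-integrator (DDIM-type) Euler step whose nonlinear component is precisely $\bfx_{1|t}^\theta(\bfx_t)$, so the skip term collapses. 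A minor companion check is that $h=\gamma_1-\gamma_t$ is finite and $\sigma_1\neq 0$ under those boundary conditions, so the single Euler step is well posed; everything else is bookkeeping on the change of variables already used to establish \cref{prop:exact_sol_flow}.
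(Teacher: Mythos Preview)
Your approach is essentially the paper's: specialize \cref{prop:exact_sol_flow} to the interval $[t,1]$, replace the integral by its left-endpoint (Euler) quadrature, and read off that the one-step map collapses to $\bfx_{1|t}^\theta(\bfx_t)$, whence the DTO gradient is $\nabla_\bfx\mathcal L(\bfx_{1|t}^\theta(\bfx))$.

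There is, however, one factual slip in your ``minor companion check'': under the boundary conditions of \cref{eq:schedule_boundaries} you have $\sigma_1=0$, not $\sigma_1\neq 0$, and hence $\gamma_1=\alpha_1/\sigma_1=\infty$, so $h=\gamma_1-\gamma_t$ is \emph{not} finite. Your option~(i) (strip the skip term, absorb $\sigma_1 h$ into $\eta$) is therefore not the clean route. The correct resolution is your option~(ii), and it works precisely because $\sigma_1=0$: the skip coefficient $\sigma_1/\sigma_t$ vanishes, while the product $\sigma_1 h=\sigma_1(\gamma_1-\gamma_t)=\alpha_1-\sigma_1\gamma_t\to 1$ is well defined in the limit, so $\widehat{\bm\Phi}_h(t,\bfx_t)=\bfx_{1|t}^\theta(\bfx_t)$ exactly. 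The paper handles this identically, writing the Euler step as $\tilde\bfx_t=\tfrac{\sigma_t}{\sigma_s}\bfx_s+(\alpha_t-\tfrac{\sigma_t\alpha_s}{\sigma_s})\bfx_{1|s}^\theta(\bfx_s)$ and then letting $t\to 1$, with a footnote that the asymptotics are well defined despite $\sigma_t\to 0$. Once you fix this, your argument and the paper's coincide.
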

\end{theorembox}

Now we examine greedy action from the perspective of an OTD scheme.
In \cref{thm:greedy_is_implicit_euler} we show that a greedy strategy can be viewed as the first iteration of a fixed-point method of an implicit Euler discretization of the continuous adjoint equations.
\begin{theorembox}
    \begin{restatable}[Greedy as an implicit Euler scheme within OTD]{proposition}{greedyasimplicit}
        \label{thm:greedy_is_implicit_euler}
        For some trajectory state $\bfx_t$ at time $t$, the greedy gradient given by $\nabla_{\bfx_t} \mathcal{L}(\bfx_{1|t}^\theta(\bfx_t))$ is an implicit Euler discretization of the continuous adjoint equations for the true gradients with step size $h = \gamma_1 - \gamma_t$.
    \end{restatable}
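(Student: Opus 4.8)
The plan is to reduce this statement to \cref{prop:greedy_is_explicit_euler} via one elementary observation: reverse-mode differentiation through a single \emph{explicit} Euler step coincides with the first fixed-point iterate of a single \emph{implicit} Euler step applied to the continuous adjoint equation. The reason is that backpropagation through a forward step $\bfx_{n+1} = \bfx_n + h\bsg(\bfx_n)$ returns $\bar\bfx_n = \bar\bfx_{n+1} + h\,(\partial\bsg/\partial\bfx)(\bfx_n)^\top\bar\bfx_{n+1}$, which evaluates the Jacobian at $\bfx_n$ --- the \emph{endpoint} reached by integrating the adjoint ODE backward from index $n{+}1$ to index $n$ --- and that is precisely what an implicit Euler step on \cref{eq:continuous_adjoint_eqs} does, except that the implicit step replaces $\bar\bfx_{n+1}$ on the right-hand side by the unknown $\bar\bfx_n$. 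Consequently the natural fixed-point iteration $\bar\bfx_n^{(j+1)} = \bar\bfx_{n+1} + h(\partial\bsg/\partial\bfx)(\bfx_n)^\top\bar\bfx_n^{(j)}$ started at $\bar\bfx_n^{(0)} = \bar\bfx_{n+1}$ reproduces the explicit-Euler pullback after a single step.

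With this in hand, I would first reparametrize the continuous adjoint equation \cref{eq:continuous_adjoint_eqs} by the signal-to-noise ratio $\gamma$, using the same exponential-integrator change of variables that produced \cref{prop:exact_sol_flow}: writing the vector field in its semi-linear form and absorbing the linear term exactly, the adjoint obeys an ODE of the form $\rmd\bfa/\rmd\gamma = -(\partial\bfx_{1|\gamma}^\theta/\partial\bfx)(\bfx_\gamma)^\top\bfa$, up to the scalar coming from the state rescaling $\bfx_\gamma = \sigma_\gamma\bfy_\gamma$, with terminal datum the (appropriately rescaled) $\partial\mathcal L/\partial\bfx_1$. Discretizing this with a single implicit Euler step from $\gamma_1$ to $\gamma_t$ --- \ie step size $h = \gamma_1 - \gamma_t$ --- gives the implicit equation $\bfa(\gamma_t) = \bfa(\gamma_1) + h\,(\partial\bfx_{1|t}^\theta/\partial\bfx)(\bfx_t)^\top\bfa(\gamma_t)$, whose first fixed-point iterate is $\bfa(\gamma_1) + h\,(\partial\bfx_{1|t}^\theta/\partial\bfx)(\bfx_t)^\top\bfa(\gamma_1)$. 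By the observation above, this is exactly the reverse-mode pullback through the corresponding single explicit Euler step, and \cref{prop:greedy_is_explicit_euler} identifies that pullback with the greedy gradient $\nabla_{\bfx_t}\mathcal L(\bfx_{1|t}^\theta(\bfx_t))$; this establishes the claim.

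I expect the main obstacle to be bookkeeping rather than conceptual: one must transport the terminal condition and the linear (exponential-integrator) term through the $\gamma$-reparametrization of the adjoint ODE so that they line up --- term for term --- with the forward-direction conventions of \cref{prop:exact_sol_flow} and \cref{prop:greedy_is_explicit_euler}, in particular tracking the $\sigma_t$ Jacobian factor and confirming it is exactly what gets folded into $h$. It is worth noting that only the first fixed-point iterate is used, so no contraction estimate for the implicit solve is required (the iterate is unconditionally well-defined); if one wishes, one may additionally remark that the fixed-point map contracts whenever $h\,\|(\partial\bfx_{1|t}^\theta/\partial\bfx)(\bfx_t)\| < 1$, so that further iterates converge to the exact implicit Euler step and, as $h \to 0$, to the continuous-time adjoint.
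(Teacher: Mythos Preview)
Your high-level plan --- apply the exponential integrator to the adjoint equation, change variables to $\gamma$, take a single implicit Euler step, and read off the first fixed-point iterate --- is exactly what the paper does. The extra observation you make (that the pullback through one forward explicit Euler step coincides with the first fixed-point iterate of implicit Euler on the adjoint) is correct for a generic step $\bfx_{n+1}=\bfx_n+h\bsg(\bfx_n)$ and is a nice conceptual bridge to \cref{prop:greedy_is_explicit_euler}; the paper does not make this reduction explicit and instead computes directly.

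Where your write-up goes wrong is in the concrete implicit equation. After the exponential integrator is applied to the adjoint, the integral equation carries the prefactor $\sigma_1/\sigma_t$ on the terminal datum and the coefficient $\alpha_1-\tfrac{\sigma_1}{\sigma_t}\alpha_t$ on the Jacobian term; with the boundary conditions $\sigma_1=0,\ \alpha_1=1$ these collapse to $0$ and $1$ respectively, so the implicit Euler step reads
\[
\bfa_\bfx(t)=\bfa_\bfx(t)^\top\frac{\partial\bfx_{1|t}^\theta(\bfx_t)}{\partial\bfx_t},
\]
not $\bfa(\gamma_t)=\bfa(\gamma_1)+h\,J^\top\bfa(\gamma_t)$ as you wrote. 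Your version has first iterate $(I+hJ^\top)\nabla_{\bfx_1}\mathcal L$, which is \emph{not} the greedy gradient; the paper's version has first iterate $J^\top\nabla_{\bfx_1}\mathcal L=\nabla_{\bfx_t}\mathcal L(\bfx_{1|t}^\theta(\bfx_t))$, which is. This is not ``the $\sigma_t$ factor folded into $h$'': it is the boundary value $\sigma_1=0$ annihilating the terminal term and the schedule identity forcing the coefficient to $1$. Equivalently, the ``generic Euler'' template $\bfx_{n+1}=\bfx_n+h\bsg(\bfx_n)$ does not describe the step in \cref{prop:greedy_is_explicit_euler} --- there the exponentially-integrated first-order step is $\tilde\bfx_1=\tfrac{\sigma_1}{\sigma_t}\bfx_t+(\alpha_1-\tfrac{\sigma_1}{\sigma_t}\alpha_t)\bfx_{1|t}^\theta(\bfx_t)=\bfx_{1|t}^\theta(\bfx_t)$ --- so your general principle must be applied to the exponentially-integrated scheme, not to the raw $\gamma$-ODE you wrote down. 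Once you make that correction, your route and the paper's coincide.
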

\end{theorembox}
\begin{proof}[Proof sketch]
First, we use the technique of exponential integrators to simplify the continuous adjoint equations. Then we perform a first-order Taylor expansion around $\gamma_t$, which is equivalent to an implicit Euler scheme, as we calculate the gradient flow from $1$ to $t$. The full proof is provided in \cref{proof:greedy_is_implicit_euler}.
\end{proof}

\section{Is greed good?}
\label{sec:theory_guidance}
A natural question to ask in light of this discussion on taking this greedy action is why even bother backpropagating through the ODE solve at all for guidance?
After all, we could simply run the optimization process directly in the data space (\cf \cref{eq:action}).
So why perform end-to-end guidance or this greedy action at all?
\NB, the full derivations and proofs for this section may be found in \cref{app:dynamics}.

We begin by examining the structure of the gradient $\nabla_{\bfx} \mathcal L (\Phi_{t,1}^\theta(\bfx))$.
By the chain rule we observe the following:\footnote{Let $\nabla_{\bfx_1}$ be shorthand for the gradient \wrt the output $\Phi_{t,1}^\theta(\bfx)$.}
\begin{equation}
    \label{eq:chain_rule_gradient_flow}
    \nabla_\bfx \mathcal L \left(\Phi_{t,1}^\theta(\bfx) \right) = \nabla_\bfx \Phi_{t,1}^\theta(\bfx)^\top \nabla_{\bfx_1} \mathcal L\left(\Phi_{t,1}^\theta(\bfx_1)\right).
\end{equation}
The question then is what is the behavior of $\nabla_\bfx \Phi_{t,1}^\theta(\bfx)$?
We answer this in \cref{thm:jacobians_aggp} below, providing an integral equation for $\nabla_\bfx \Phi_{s,t}^\theta(\bfx)$.
\begin{theorembox}
    \begin{restatable}[Jacobian matrices of affine Gaussian probability paths]{theorem}{jacobiansaggp}
        \label{thm:jacobians_aggp}
        For the standard affine Gaussian probability path with flow model $\Phi_{s,t}^\theta(\bfx)$, the Jacobian matrix $\nabla_\bfx \Phi_{s,t}(\bfx)$ as function of $\bfx$ is given as the solution to
        \begin{equation}
            \label{eq:jacobians_aggp}
            \nabla_\bfx \Phi_{s,t}^\theta(\bfx) = \frac{\sigma_t}{\sigma_s}\bm I + \sigma_t\int_s^t \dot\gamma_u \frac{\gamma_u}{\sigma_u} \var_{1|u}(\Phi_{s,u}^\theta(\bfx))\nabla_{\bfx} \Phi_{s,u}^\theta(\bfx)\;\rmd u,
        \end{equation}
        where
        \begin{equation}
            \var_{1|t}(\bfx) = \ex_{p_{1|t}(\bfx_1|\bfx)} \left[(\bfx_1 - \bfx_{1|t}^\theta(\bfx)) (\bfx_1 - \bfx_{1|t}^\theta(\bfx))^\top\right].
        \end{equation}
    \end{restatable}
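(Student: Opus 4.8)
The plan is to reduce \cref{eq:jacobians_aggp} to a single ``second-order Tweedie'' identity for the target prediction model and then feed that into the exact-solution formula of \cref{prop:exact_sol_flow}. First I would turn \cref{eq:exact_sol_flow} into a fixed-point equation for the flow map itself: replacing the initial value by a free variable $\bfx$, the trajectory point $\bfx_\gamma$ by $\Phi_{s,t_\gamma(\gamma)}^\theta(\bfx)$, and undoing the change of variable to the signal-to-noise ratio via $\rmd\gamma = \dot\gamma_u\,\rmd u$ (legitimate because $\gamma$ is a $\C^1$ strictly increasing reparametrisation of $[0,1]$), which gives
\begin{equation*}
    \Phi_{s,t}^\theta(\bfx) = \frac{\sigma_t}{\sigma_s}\bfx + \sigma_t\int_s^t \dot\gamma_u\,\bfx_{1|u}^\theta\big(\Phi_{s,u}^\theta(\bfx)\big)\,\rmd u .
\end{equation*}

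Next I would differentiate this identity in $\bfx$. Since $\bfu_t^\theta \in \C^{1,1}$ the flow $\Phi_{s,t}^\theta$ is $\C^{1,r}$ with $r \ge 1$ and $\bfx_{1|t}^\theta$ is $\C^1$, so on the compact interval $[s,t]$ the integrand and its $\bfx$-derivative are bounded and one may differentiate under the integral sign. Writing $\nabla$ for the Jacobian $(\partial f_i/\partial x_j)$ and applying the chain rule,
\begin{equation*}
    \nabla_\bfx\Phi_{s,t}^\theta(\bfx) = \frac{\sigma_t}{\sigma_s}\bm I + \sigma_t\int_s^t \dot\gamma_u\,\big[\nabla\bfx_{1|u}^\theta\big]\big(\Phi_{s,u}^\theta(\bfx)\big)\,\nabla_\bfx\Phi_{s,u}^\theta(\bfx)\,\rmd u ,
\end{equation*}
so everything reduces to identifying $\nabla\bfx_{1|u}^\theta$.

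The key lemma is $\nabla_\bfx\bfx_{1|t}^\theta(\bfx) = \tfrac{\gamma_t}{\sigma_t}\var_{1|t}(\bfx)$. To prove it I would write the denoiser as the posterior mean $\bfx_{1|t}^\theta(\bfx)=\ex[\bfX_1\mid\bfX_t=\bfx] = p_t(\bfx)^{-1}\int \bfx_1\,p_{t|1}(\bfx\mid\bfx_1)\,q(\bfx_1)\,\rmd\bfx_1$, with $p_{t|1}(\cdot\mid\bfx_1)$ the Gaussian conditional of the affine path. Differentiating the ratio and using that $\nabla_\bfx\log p_{t|1}(\bfx\mid\bfx_1) = \tfrac{\gamma_t}{\sigma_t}\bfx_1 + (\text{term independent of }\bfx_1)$, the part not depending on $\bfx_1$ is constant under the posterior and cancels between the two contributions, leaving exactly $\tfrac{\gamma_t}{\sigma_t}\big(\ex_{p_{1|t}}[\bfx_1\bfx_1^\top]-\bfx_{1|t}^\theta(\bfx)\,\bfx_{1|t}^\theta(\bfx)^\top\big)=\tfrac{\gamma_t}{\sigma_t}\var_{1|t}(\bfx)$. (Equivalently one may differentiate the score/denoiser relation coming from \cref{eq:vector_field_denoiser} and use the standard formula for $\nabla^2\log p_t$; both routes give the same scalar.) Substituting this into the displayed integral equation produces \cref{eq:jacobians_aggp}.

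I expect the main obstacle to be the lemma, specifically the bookkeeping needed to interchange $\nabla_\bfx$ with the $\bfx_1$-integral and to guarantee $\var_{1|t}(\bfx)<\infty$: this calls for a mild moment assumption on $q$ (finite second moments), after which the Gaussian tails of $p_{t|1}$ supply dominating functions that are locally uniform in $\bfx$. Differentiation under the time integral is routine given the stated $\C^{1,1}$ regularity, and the only other care needed is to fix once and for all which schedule coefficient multiplies the data component in the affine path, so that the denoiser Jacobian is genuinely $\gamma_t/\sigma_t$ times the posterior covariance, and then carry that convention consistently.
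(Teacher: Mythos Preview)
Your proposal is correct and follows essentially the same route as the paper. The only organisational difference is that the paper starts from the forward-sensitivity ODE for $\nabla_\bfx\Phi_{s,t}^\theta$ and then applies exponential integrators to that Jacobian equation, whereas you differentiate the already-simplified flow formula of \cref{prop:exact_sol_flow} (where the exponential-integrator step has been done on the state equation); the two orderings commute and land on the same integral equation. Your key lemma $\nabla_\bfx\bfx_{1|t}^\theta(\bfx)=\tfrac{\gamma_t}{\sigma_t}\var_{1|t}(\bfx)$ is exactly the paper's cited identity $\nabla_\bfx\bfx_{1|t}^\theta(\bfx)=\tfrac{\alpha_t}{\sigma_t^2}\var_{1|t}(\bfx)$ (since $\gamma_t=\alpha_t/\sigma_t$), which the paper imports from \textcite{ben-hamu2024dflow} rather than proving, so your Tweedie-style derivation of it is additional content beyond what the paper provides.
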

\end{theorembox}

\begin{remark}
    From \cref{thm:jacobians_aggp} we observe the Jacobian-vector product $\nabla_\bfx \Phi_{s,t}^\theta(\bfx)^\top \bm v$ corresponds to an integral of covariance projections applied to $\bm v$.\footnote{Readers familiar with the work of \textcite{ben-hamu2024dflow} may notice some similarities between our result \cref{thm:jacobians_aggp} and \textcite[Theorem 4.2]{ben-hamu2024dflow}. We discuss this more in \cref{remark:diff_in_jacobian_thms}.}
\end{remark}

Thus, we see that the continuous-time backpropagation process through the flow model is a projection of the loss by a covariance matrix into the directions of highest variance, \ie, the guidance encourages the state to evolve within states on the data manifold.
We elaborate on this more in \cref{app:dyn_grad_guidance}.
While this is a nice observation we cannot solve such an integral in practice.
What about our greedy strategy, how does it impact the loss function?

\subsection{Dynamics of gradient guidance}
We now consider how the output of the flow model will change under greedy guidance.
In particular, we are interested in how $\Phi_{t,1}^\theta(\bfx)$ changes under the following gradient step
\begin{equation}
    \bfx' = \bfx - \eta \nabla_\bfx \mathcal L\left(\bfx_{1|t}^\theta(\bfx)\right).
\end{equation}
To do this, we make use of the Gateaux differential \parencite{gateaux1913fonctionnelles} which allows us to define the differential that describes how the output of the flow model $\bfx_1$ evolves with changes to $\bfx$ at time $t$.
We present the result to this question in \cref{prop:dyn_greedy_guidance} below.

\begin{theorembox}
    \begin{restatable}[Dynamics of greedy gradient guidance]{proposition}{dyngreedyguidance}
        \label{prop:dyn_greedy_guidance}
        Consider the standard affine Gaussian probability paths model trained to zero loss.
        The Gateaux differential of $\bfx$ at some time $t \in [0, 1]$ in the direction of the gradient $\nabla_\bfx \mathcal L\left(\bfx_{1|t}^\theta(\bfx)\right)$ is given by
        \begin{equation}
            \delta_\bfx^{\mathcal G} \Phi_{t,1}^\theta(\bfx) = -\nabla_\bfx\Phi_{t,1}^\theta(\bfx) \nabla_\bfx\bfx_{1|t}^\theta(\bfx)^\top \nabla_{\bfx_1} \mathcal L(\bfx_1).
        \end{equation}
    \end{restatable}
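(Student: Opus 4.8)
The plan is to unwind the Gateaux differential by the chain rule and then invoke the zero-loss hypothesis only to pin down where $\nabla\mathcal{L}$ gets evaluated. First I would fix the perturbation direction coming from the greedy step: $\bm{v} \coloneq -\nabla_\bfx\mathcal{L}\bigl(\bfx_{1|t}^\theta(\bfx)\bigr)$, computed at the current state $\bfx$ and then held constant. By definition $\delta_\bfx^{\mathcal{G}}\Phi_{t,1}^\theta(\bfx) = \lim_{\epsilon\to 0}\epsilon^{-1}\bigl[\Phi_{t,1}^\theta(\bfx+\epsilon\bm{v}) - \Phi_{t,1}^\theta(\bfx)\bigr]$. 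Since the excerpt's setup gives $\Phi_{t,1}^\theta$ as a $\mathcal{C}^{1,r}$-flow with $r\ge 1$, it is Fréchet differentiable (indeed a diffeomorphism) in its spatial argument, so this limit exists and reduces to the Jacobian–vector product $\nabla_\bfx\Phi_{t,1}^\theta(\bfx)\,\bm{v}$; hence immediately $\delta_\bfx^{\mathcal{G}}\Phi_{t,1}^\theta(\bfx) = -\nabla_\bfx\Phi_{t,1}^\theta(\bfx)\,\nabla_\bfx\mathcal{L}\bigl(\bfx_{1|t}^\theta(\bfx)\bigr)$.

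Next I would expand the inner gradient. Writing $\mathcal{L}\bigl(\bfx_{1|t}^\theta(\bfx)\bigr)$ as the composition $\mathcal{L}\circ\bfx_{1|t}^\theta$ and applying the chain rule (with the convention of \cref{eq:chain_rule_gradient_flow}) yields $\nabla_\bfx\mathcal{L}\bigl(\bfx_{1|t}^\theta(\bfx)\bigr) = \nabla_\bfx\bfx_{1|t}^\theta(\bfx)^\top\,\nabla\mathcal{L}\big|_{\bfx_{1|t}^\theta(\bfx)}$, where $\nabla_\bfx\bfx_{1|t}^\theta(\bfx)$ is the spatial Jacobian of the target-prediction model; this object is well-defined and smooth because under zero loss it is the posterior mean of an affine Gaussian probability path, i.e. an affine function of the (smooth) score of the Gaussian-smoothed data density — the same covariance structure that drives \cref{thm:jacobians_aggp}. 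Substituting gives $\delta_\bfx^{\mathcal{G}}\Phi_{t,1}^\theta(\bfx) = -\nabla_\bfx\Phi_{t,1}^\theta(\bfx)\,\nabla_\bfx\bfx_{1|t}^\theta(\bfx)^\top\,\nabla\mathcal{L}\big|_{\bfx_{1|t}^\theta(\bfx)}$, which already has the stated shape.

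The only genuinely delicate step, and the one I would write out carefully, is identifying the argument of the outer gradient: I must argue that under the zero-loss hypothesis $\nabla\mathcal{L}$ evaluated at $\bfx_{1|t}^\theta(\bfx)$ is the quantity denoted $\nabla_{\bfx_1}\mathcal{L}(\bfx_1)$ with $\bfx_1 = \Phi_{t,1}^\theta(\bfx)$. I would get this from \cref{prop:greedy_is_explicit_euler} together with \cref{prop:exact_sol_flow}: at zero loss the one-step explicit-Euler-in-$\gamma$ prediction of the endpoint from $\bfx_t$ is exactly $\bfx_{1|t}^\theta(\bfx)$, so the denoiser output is precisely the $\bfx_1$ against which the greedy loss is measured, which legitimizes the substitution $\nabla\mathcal{L}\big|_{\bfx_{1|t}^\theta(\bfx)} = \nabla_{\bfx_1}\mathcal{L}(\bfx_1)$ and closes the proof. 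This is the place where the affine-Gaussian / zero-loss structure is actually consumed; at the level of a first-order statement the identification is automatic up to $\mathcal{O}\bigl(\lVert \bfx_{1|t}^\theta(\bfx) - \Phi_{t,1}^\theta(\bfx)\rVert\bigr)$, and I expect the main obstacle to be making this an exact equality rather than an approximation.
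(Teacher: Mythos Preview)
Your approach is exactly the paper's: write the Gateaux differential as the Jacobian--vector product $\nabla_\bfx\Phi_{t,1}^\theta(\bfx)\,\bm v$ with $\bm v=-\nabla_\bfx\mathcal L(\bfx_{1|t}^\theta(\bfx))$, then expand the inner gradient by the chain rule for $\mathcal L\circ\bfx_{1|t}^\theta$. The paper's proof is literally those three lines.

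Your final paragraph is an unnecessary detour. In the paper's usage here, $\nabla_{\bfx_1}\mathcal L(\bfx_1)$ is only shorthand for the gradient of $\mathcal L$ evaluated at the output of the inner map --- in this proposition that output is $\bfx_{1|t}^\theta(\bfx)$, just as in the companion \cref{prop:dyn_grad_guidance} the inner map is $\Phi_{t,1}^\theta$ and the same symbol denotes evaluation at $\Phi_{t,1}^\theta(\bfx)$. The paper never claims, and the proof never needs, $\bfx_{1|t}^\theta(\bfx)=\Phi_{t,1}^\theta(\bfx)$; indeed that would contradict \cref{thm:grad_vs_greed}, which bounds the discrepancy between the two Jacobians by $\mathcal O(h^2)$. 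Step (iii) is nothing more than the chain rule, and the zero-loss hypothesis is not consumed there --- it is background that makes $\bfx_{1|t}^\theta$ the true posterior mean (hence licenses the covariance reading in the subsequent remark via \cref{prop:gradient_target_pred}), but it plays no role in the algebra of this derivation.
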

\end{theorembox}

\begin{remark}
    Recall that from \cref{thm:jacobians_aggp} and \parencite[Proposition 4.1]{ben-hamu2024dflow} we know that both $\nabla_\bfx\Phi_{t,1}^\theta(\bfx)$ and $\nabla_\bfx\bfx_{1|t}^\theta(\bfx)$ consist of covariance matrices, thus the dynamics of greedy gradient guidance are governed by this covariance projection of the loss.
\end{remark}

Next, we ask what is the difference between the \textit{idealized} gradient $\nabla_\bfx \Phi_{t,1}^\theta(\bfx)$ and the greedy gradient $\nabla_\bfx \bfx_{1|t}^\theta(\bfx)$?
Intuitively, we find that it is bound by the local truncation error, \ie, $\mathcal O(h^2)$ which we show below.

\begin{theorembox}
    \begin{restatable}[Dynamics of gradient vs greedy guidance]{theorem}{gradvsgreed}
        \label{thm:grad_vs_greed}
        The difference between the dynamics of gradient guidance in \cref{prop:dyn_grad_guidance} and greedy gradient guidance in \cref{prop:dyn_greedy_guidance} for a point $\bfx$ at time $t$ with guidance function $\mathcal L \in \C^1(\R^d)$ is bounded by $\mathcal O(h^2)$ where $h \coloneq \gamma_1 - \gamma_t$, \ie,
        \begin{equation}
            \left\|\nabla_\bfx \Phi_{t,1}^\theta(\bfx) - \nabla_\bfx \bfx_{1|t}^\theta(\bfx)\right\| = \mathcal O (h^2).
        \end{equation}
    \end{restatable}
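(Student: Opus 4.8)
The plan is to recognise the two Jacobians as, respectively, the exact solution and the one-step Euler discretisation of the \emph{same} variational equation, so that the claimed bound is just the local truncation error of Euler's method carried through to derivatives. First I would move to the signal-to-noise variable: by \cref{prop:exact_sol_flow}, writing $\bfy_\gamma \coloneq \bfx_\gamma/\sigma_\gamma$ (with $\sigma_\gamma \coloneq \sigma_{t_\gamma(\gamma)}$, extending the excerpt's abuse of notation) removes the linear term and turns the flow into a genuine ODE $\frac{\rmd}{\rmd\gamma}\bfy_\gamma = F(\gamma,\bfy_\gamma)$, with $F(\gamma,\bfy) \coloneq \bfx_{1|\gamma}^\theta(\sigma_\gamma\bfy)$ of class $\C^1$ (since $\bfu_\theta \in \C^{1,1}$ and the schedule is smooth). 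Differentiating in the initial condition, $\nabla_\bfx\Phi_{t,1}^\theta(\bfx)$ is then, up to the scalar factor $\sigma_1/\sigma_t$, the value at $\gamma_1$ of the linear Volterra equation $\nabla\bfy_\gamma = \bm I + \int_{\gamma_t}^{\gamma}\partial_\bfy F(\tau,\bfy_\tau)\,\nabla\bfy_\tau\,\rmd\tau$, which is exactly \cref{thm:jacobians_aggp} after the change of variables (there $\partial_\bfy F$ is written out through the posterior covariance $\var_{1|\gamma}$). On the other hand, \cref{prop:greedy_is_explicit_euler} (and, dually, \cref{thm:greedy_is_implicit_euler}) already identifies the greedy Jacobian $\nabla_\bfx\bfx_{1|t}^\theta(\bfx)$ with what one obtains by advancing this same equation by a single explicit (resp.\ implicit) Euler step of length $h = \gamma_1 - \gamma_t$ --- i.e.\ freezing the kernel at the left endpoint $\gamma_t$, where $\nabla\bfy_{\gamma_t} = \bm I$.

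Subtracting, and using that the exact flow of \cref{prop:exact_sol_flow} and its one-step approximation carry the identical linear part $\tfrac{\sigma_1}{\sigma_t}\bfx_t$, the difference of the two Jacobians reduces to the quadrature defect
\begin{equation}
    \nabla_\bfx\Phi_{t,1}^\theta(\bfx) - \nabla_\bfx\bfx_{1|t}^\theta(\bfx) = \sigma_1\int_{\gamma_t}^{\gamma_1}\Big(\partial_\bfy F(\gamma,\bfy_\gamma)\,\nabla\bfy_\gamma - \partial_\bfy F(\gamma_t,\bfy_{\gamma_t})\Big)\,\rmd\gamma .
\end{equation}
The bracketed term is the increment over $[\gamma_t,\gamma]$ of the map $\gamma \mapsto \partial_\bfy F(\gamma,\bfy_\gamma)\,\nabla\bfy_\gamma$, which on the compact interval $[\gamma_t,\gamma_1]$ is continuously differentiable and hence Lipschitz with a constant $L$ depending only on the schedule and on uniform bounds for $\var_{1|\gamma}$, its $\gamma$-derivative, and $\|\nabla\bfy_\gamma\|$ along the trajectory. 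Thus the integrand is bounded in norm by $L\,|\gamma-\gamma_t|$, and integrating over an interval of length $|h|$ yields $\tfrac{L}{2}h^2$; as $\sigma_1$ is a bounded constant, this is the claimed $\mathcal O(h^2)$. Running the same estimate on the implicit step of \cref{thm:greedy_is_implicit_euler} gives, for $|h|$ small, an $\mathcal O(h^2)$ residual after one fixed-point iteration (contraction mapping), so the OTD route produces the same conclusion. Finally, this bound is exactly what is needed to control the gap between \cref{prop:dyn_grad_guidance} and \cref{prop:dyn_greedy_guidance}: since $\mathcal L \in \C^1$, $\nabla_{\bfx_1}\mathcal L$ is bounded on a neighbourhood, and the two dynamics differ only through the factor $\nabla_\bfx\Phi_{t,1}^\theta(\bfx) - \nabla_\bfx\bfx_{1|t}^\theta(\bfx)$.

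The main obstacle is making the Lipschitz control uniform, i.e.\ bounding $\var_{1|\gamma}(\Phi_{t,\gamma}^\theta(\bfx))$ together with its $\gamma$-derivative and $\|\nabla_\bfx\Phi_{t,\gamma}^\theta(\bfx)\|$ over the whole interval rather than merely at a point. The covariance bound I would get from the closed form of $\partial_\bfy F$ in terms of $\var_{1|\gamma}$ (valid under the affine-Gaussian / zero-training-loss hypothesis) together with a finite second moment of the target distribution; the Jacobian bound from a Gr\"onwall estimate on the Volterra equation, which gives $\|\nabla_\bfx\Phi_{t,\gamma}^\theta(\bfx)\| \le \exp\big(\int_{\gamma_t}^{\gamma_1}\|\partial_\bfy F\|\big)$. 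The remaining steps --- the change of variables $t \leftrightarrow \gamma$, bookkeeping of the $\sigma$-prefactors, and upgrading the local defect through the Volterra equation to the stated bound --- are routine and are deferred, together with the explicit constants, to \cref{app:dynamics}. Once $L$ is under control, the statement is the textbook one-step Euler truncation estimate, now read off the variational (adjoint) equation rather than off the trajectory itself.
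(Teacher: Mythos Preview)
Your approach is correct and essentially the same as the paper's: both start from the integral representation of \cref{thm:jacobians_aggp} rewritten in the $\gamma$ variable, Taylor-expand (equivalently, Lipschitz-bound) the integrand at $\gamma_t$, and integrate over an interval of length $h$ to obtain $\mathcal O(h^2)$. The only cosmetic difference is that the paper explicitly evaluates the leading term of $\nabla_\bfx\Phi_{t,1}^\theta(\bfx)$ as $\tfrac{\alpha_t}{\sigma_t^2}\var_{1|t}(\bfx)$ and matches it against the closed form $\nabla_\bfx\bfx_{1|t}^\theta(\bfx)=\tfrac{\alpha_t}{\sigma_t^2}\var_{1|t}(\bfx)$ from \cref{prop:gradient_target_pred}, whereas you identify the greedy Jacobian as the one-step Euler discretisation via \cref{prop:greedy_is_explicit_euler} and subtract directly---both routes land on the same quadrature defect.
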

\end{theorembox}

An important question is whether a greedy strategy makes \textit{good} decisions at each timestep.
\Ie, if we make a good decision at time $t$, does that ensure that an optimal solution was made in the sense of $\Phi_{1|t}^\theta(\bfx_t)$.
A natural way to examine this question is to consider whether convergence in the local case implies convergence of the whole solution trajectory.
We find that up to a bound dependent on the step size, convergence in the greedy solution implies convergence in the flow, which we state more formally in \cref{thm:convergence}.
\begin{theorembox}
    \begin{restatable}[Greedy convergence]{theorem}{greedyconverge}
        \label{thm:convergence}
        For affine probability paths, if there exists a sequence of states $\bfx_t^{(n)}$ at time $t$ such that it converges to the locally optimal solution $\bfx_{1|t}^\theta(\bfx_t^{(n)}) \to \bfx_{1}^*$.
        Then the solution, $\Phi_{1|t}^\theta(\bfx_t^{(n)})$, converges to a neighborhood of size $\mathcal O(h^2)$ centered at $\bfx_1^*$.
    \end{restatable}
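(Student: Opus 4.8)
The plan is to reduce the statement to a one‑step local truncation estimate together with the triangle inequality. Write $\Phi_{1|t}^\theta$ for the flow from $t$ to $1$. By \cref{prop:exact_sol_flow} applied with $s = t$ and target time $1$, and using the schedule boundary identity $\sigma_1\gamma_1 = \alpha_1$, we have
\[
    \Phi_{1|t}^\theta(\bfx) = \frac{\sigma_1}{\sigma_t}\bfx + \sigma_1\int_{\gamma_t}^{\gamma_1}\bfx_{1|\gamma}^\theta(\bfx_\gamma)\,\rmd\gamma,
\]
whereas $\bfx_{1|t}^\theta(\bfx)$ is precisely the value obtained by freezing the integrand at $\gamma=\gamma_t$ and sending $t\to 1$ in the linear term: it is the one‑step exponential‑Euler approximation of $\Phi_{1|t}^\theta(\bfx)$ with step $h = \gamma_1-\gamma_t$. (This is the value‑level counterpart of \cref{prop:greedy_is_explicit_euler} and mirrors \cref{thm:grad_vs_greed} at the level of the map rather than its Jacobian.) Consequently $\Phi_{1|t}^\theta(\bfx)-\bfx_{1|t}^\theta(\bfx)$ equals the quadrature remainder $\sigma_1\int_{\gamma_t}^{\gamma_1}\big(\bfx_{1|\gamma}^\theta(\bfx_\gamma)-\bfx_{1|\gamma_t}^\theta(\bfx)\big)\,\rmd\gamma$ plus the vanishing linear‑term remainder.

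Next I would bound this remainder. Since $\bfu_\theta\in\C^{1,1}$, the map $\gamma\mapsto\bfx_{1|\gamma}^\theta(\bfx_\gamma)$ is $\C^1$ along the trajectory, so a first‑order Taylor expansion (equivalently the mean value theorem) gives $\big\|\bfx_{1|\gamma}^\theta(\bfx_\gamma)-\bfx_{1|\gamma_t}^\theta(\bfx)\big\|\le M\,|\gamma-\gamma_t|$, with $M$ a bound on the derivative over the portion of the trajectory traced out for $\gamma\in[\gamma_t,\gamma_1]$; integrating yields
\[
    \big\|\Phi_{1|t}^\theta(\bfx)-\bfx_{1|t}^\theta(\bfx)\big\| \le \tfrac12 M h^2 + o(h^2) = \mathcal O(h^2),
\]
uniformly for $\bfx$ in any fixed bounded set, since the flow map is continuous and carries such a set to a bounded set, so $M$ can be chosen uniformly. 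The endpoint degeneracy $\sigma_1\to 0$, $\gamma_1\to\infty$ is handled by working throughout in the SNR variable and using $\sigma_1\gamma_1=\alpha_1$ together with $\sigma_1/\sigma_t\to 0$.

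Then I would assemble. By hypothesis $\bfx_{1|t}^\theta(\bfx_t^{(n)})\to\bfx_1^*$, so the tail of $(\bfx_t^{(n)})$ lies in a common bounded set (its image under the continuous map $\bfx_{1|t}^\theta$ is convergent hence bounded; the relevant trajectory pieces stay bounded by continuity of the flow, which one may also record as a mild regularity hypothesis). Applying the uniform $\mathcal O(h^2)$ estimate on this set and the triangle inequality,
\[
    \big\|\Phi_{1|t}^\theta(\bfx_t^{(n)})-\bfx_1^*\big\| \le \big\|\Phi_{1|t}^\theta(\bfx_t^{(n)})-\bfx_{1|t}^\theta(\bfx_t^{(n)})\big\| + \big\|\bfx_{1|t}^\theta(\bfx_t^{(n)})-\bfx_1^*\big\| \le C h^2 + \big\|\bfx_{1|t}^\theta(\bfx_t^{(n)})-\bfx_1^*\big\|.
\]
Letting $n\to\infty$, the last term vanishes, giving $\limsup_n\big\|\Phi_{1|t}^\theta(\bfx_t^{(n)})-\bfx_1^*\big\|\le C h^2$, i.e. the flow outputs converge into an $\mathcal O(h^2)$‑neighborhood of $\bfx_1^*$.

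I expect the main obstacle to be establishing the $\mathcal O(h^2)$ truncation bound \emph{uniformly} over the tail of the sequence while simultaneously handling the degenerate endpoint $t=1$ (where $\sigma_1\to 0$ and $\gamma_1\to\infty$); once the quantities are expressed in the SNR variable this should be manageable. Everything else — the exact‑solution rewriting, the quadrature/Taylor estimate, and the triangle inequality — is routine. A secondary point worth nailing down is the boundedness of the trajectory segments $\{\bfx_\gamma:\gamma\in[\gamma_t,\gamma_1]\}$ for $\bfx=\bfx_t^{(n)}$; this follows from continuity of the flow plus convergence of $(\bfx_t^{(n)})$, but may warrant an explicit compactness assumption.
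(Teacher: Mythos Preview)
Your proposal is correct and follows essentially the same approach as the paper: rewrite $\Phi_{1|t}^\theta(\bfx)$ via the exact‑solution formula (\cref{prop:exact_sol_flow}), identify $\bfx_{1|t}^\theta(\bfx)$ as the first‑order (Euler) quadrature approximation with step $h=\gamma_1-\gamma_t$, use a Taylor/mean‑value bound on the integrand to obtain $\|\Phi_{1|t}^\theta(\bfx)-\bfx_{1|t}^\theta(\bfx)\|=\mathcal O(h^2)$, and conclude by the triangle inequality. The paper's argument is a bit more terse---it applies the boundary conditions $\sigma_1=0$, $\alpha_1=1$ directly and does not discuss uniformity of the constant over the sequence---whereas you are (rightly) more careful about the endpoint degeneracy and the uniform boundedness needed for a single $C$.
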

\end{theorembox}

\section{Beyond Euler}
\label{sec:beyond_euler}
Motivated by this connection between the powerful, but expensive, end-to-end guidance techniques and posterior guidance techniques, we ask is there a middle-ground between them?
A natural extension would be to consider something beyond the Euler scheme from the previous section, \eg, applying the midpoint method or two Euler steps.
To motivate this discussion more rigorously we present \cref{thm:grad_vs_single_step}, which shows that for any explicit single-step Runge-Kutta solver, the error between the \textit{ideal} gradient and this estimated gradient is on the order of the local truncation error of the underlying numerical solver.

\begin{theorembox}
    \begin{restatable}[Truncation error of single-step gradients]{theorem}{gradsinglestepdto}
        \label{thm:grad_vs_single_step}
        Let $\bm \Phi$ be an explict Runge-Kutta solver of order $\alpha > 0$ of a flow model with flow $\Phi_{s,t}^\theta(\bfx)$.
        Then for any $t \in [0, 1]$,
        \begin{equation}
            \left\|\nabla_\bfx \Phi_{t,1}^\theta(\bfx) - \nabla_\bfx \bm \Phi_{t, 1}(\bfx) \right\| = \mathcal O (h^{\alpha + 1}),
        \end{equation}
        where $h = 1 - t$.
    \end{restatable}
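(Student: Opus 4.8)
The plan is to reduce the statement to the classical fact that an explicit order-$\alpha$ Runge--Kutta scheme has one-step (local truncation) error $\mathcal O(h^{\alpha+1})$, but applied to the \emph{variational / augmented system} rather than to the flow itself. Write $s \mapsto J_s \coloneq \nabla_\bfx \Phi_{t,s}^\theta(\bfx)$ for the Jacobian of the flow along the trajectory $\bfx_s = \Phi_{t,s}^\theta(\bfx)$. Differentiating the flow ODE in $\bfx$, the pair $(\bfx_s, J_s)$ solves the augmented ODE
\begin{equation}
    \dot\bfx_s = \bfu_s^\theta(\bfx_s), \qquad \dot J_s = \nabla_\bfx\bfu_s^\theta(\bfx_s)\, J_s, \qquad \bfx_t = \bfx,\; J_t = \bm I ,
\end{equation}
so $\nabla_\bfx\Phi_{t,1}^\theta(\bfx)$ is exactly the $J$-block of the exact solution at time $1$ of a smooth ODE on $\R^d \times \R^{d\times d}$ (smoothness following from $\bfu^\theta \in \C^{1,r}$ with $r$ large enough that $\nabla_\bfx\bfu^\theta$ is $\C^{\alpha}$ in its second argument --- the mild regularity one needs to even speak of an order-$\alpha$ statement). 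This is consistent with the integral form already recorded in \cref{thm:jacobians_aggp}.

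Second, I would establish the key lemma that differentiation in the initial condition commutes with the RK one-step map. For an explicit tableau $(a_{ij}, b_i, c_i)$ the stages are $\bm k_i = \bfu_{t+c_i h}^\theta\big(\bfx + h\sum_{j<i} a_{ij}\bm k_j\big)$ and $\bm\Phi_{t,1}(\bfx) = \bfx + h\sum_i b_i \bm k_i$. Differentiating the stage recursion in $\bfx$ gives
\begin{equation}
    \nabla_\bfx \bm k_i = \nabla_\bfx \bfu_{t+c_i h}^\theta(\cdot)\,\Big(\bm I + h\sum_{j<i} a_{ij}\nabla_\bfx\bm k_j\Big),
\end{equation}
which is precisely the stage recursion of the \emph{same} tableau applied to the augmented system above, with $J$-slot initial value $\bm I$ and $J$-slot stage values $\nabla_\bfx \bm k_j$. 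Hence $\big(\bm\Phi_{t,1}(\bfx),\, \nabla_\bfx \bm\Phi_{t,1}(\bfx)\big)$ is the one-step RK iterate of the augmented ODE from $(\bfx, \bm I)$.

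Third, invoke Runge--Kutta order theory: the order conditions (Butcher's rooted-tree conditions) depend only on the tableau, not on the vector field, so an order-$\alpha$ scheme remains order $\alpha$ on the augmented system. Its one-step error against the exact augmented flow is therefore $\mathcal O(h^{\alpha+1})$, and reading off the $J$-block yields $\big\|\nabla_\bfx\Phi_{t,1}^\theta(\bfx) - \nabla_\bfx\bm\Phi_{t,1}(\bfx)\big\| = \mathcal O(h^{\alpha+1})$ with $h = 1-t$. The implied constant is controlled by bounds on the elementary differentials of the augmented field over a fixed compact neighborhood of the trajectory, which are finite by smoothness and compactness of $[t,1]$.

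The hard part will be the smoothness bookkeeping: one must ensure $\bfu^\theta$ is regular enough for the augmented system to admit the order-$\alpha$ error expansion, and that the relevant Lipschitz/derivative bounds are uniform along the trajectory; the commuting-derivative lemma itself is routine once the stage equations are written out. A minor subtlety worth a remark is that the earlier Euler-level results use $h = \gamma_1 - \gamma_t$ whereas here $h = 1-t$; since $\gamma$ is a smooth increasing reparametrization with non-vanishing derivative on compacta, $\gamma_1 - \gamma_t = \Theta(1-t)$, so the two forms of the bound coincide for $\alpha = 1$. An alternative route --- differentiating the Butcher/Taylor expansion of the one-step error directly in $\bfx$ and using the integral form of Taylor's remainder so that the derivative of the $\mathcal O(h^{\alpha+1})$ remainder stays $\mathcal O(h^{\alpha+1})$ --- also works but is messier than the augmented-ODE argument.
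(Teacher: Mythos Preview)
Your proposal is correct and follows essentially the same route as the paper: both arguments show that differentiating the explicit RK one-step map in the initial condition coincides with applying the same RK tableau to the augmented (state, Jacobian) variational system, and then invoke the $\mathcal O(h^{\alpha+1})$ local truncation error of the scheme on that augmented ODE. Your write-up is in fact more careful than the paper's about the regularity needed for the order-$\alpha$ expansion and about reconciling $h=1-t$ with the $h=\gamma_1-\gamma_t$ used in the Euler-level results.
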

\end{theorembox}

\begin{standoutbox}
    \begin{insight}
        We can use a higher-order solver to move between posterior and end-to-end guidance exchanging compute and gradient accuracy.
    \end{insight}
\end{standoutbox}

This theoretical tool enables us to move between posterior and full end-to-end guidance choosing whichever point between compute and accuracy happens to be most suitable, hopefully opening a larger design space for solving interesting problems.
Additional discussions and the full derivations are found in \cref{app:beyond_euler}.

\section{Experiments}
\label{sec:experiments}
Motivated by the theoretical connections from the previous sections we apply the greedy posterior strategy (Euler) to several problems using flow/diffusion models, as well as several methods lying in the in between space of end-to-end guidance and posterior guidance, namely, a single-step midpoint scheme and 2-step Euler scheme.

\begin{figure}[t]
    \centering
    \includegraphics[width=\textwidth]{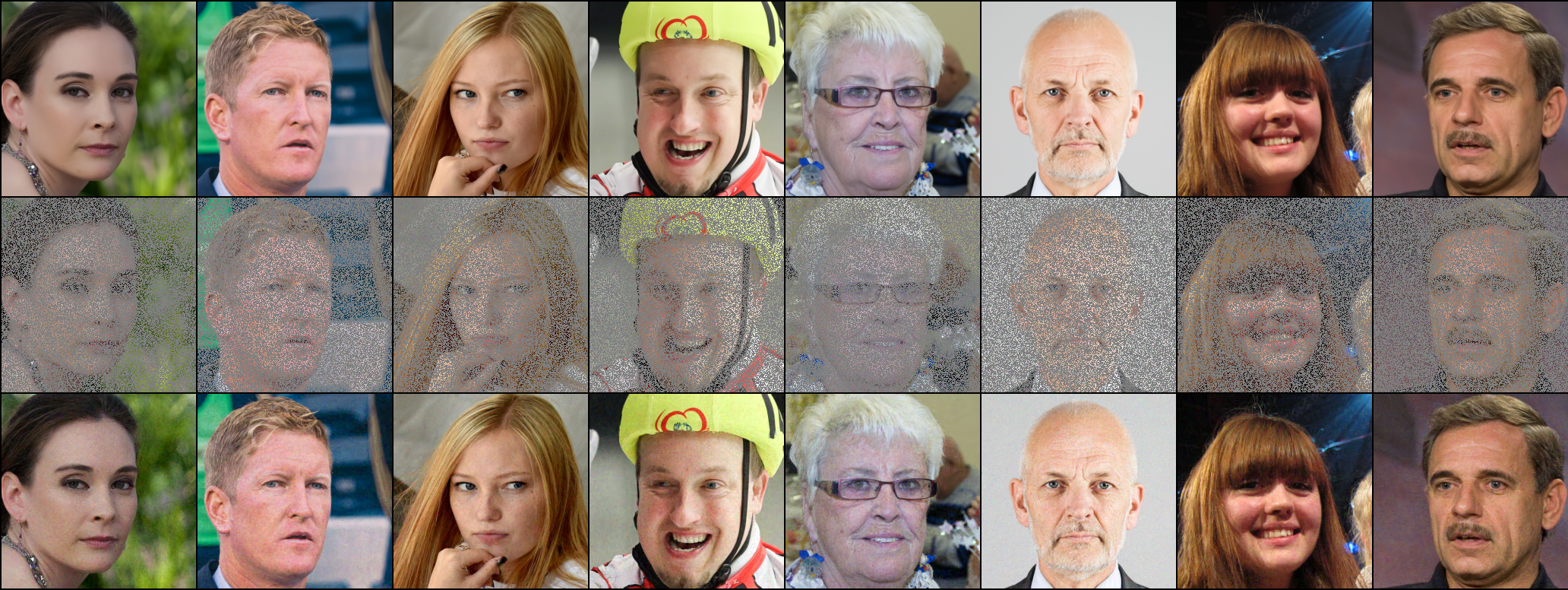}
    \label{fig:inv_image_ex}
    \caption{Qualitative visualization of using posterior guidance to solve an inverse problem on the task of inpainting with a 70\% random mask. Top row is the ground truth, middle row is the measurement, and the bottom row is the reconstruction.}
\end{figure}

\subsection{Inverse problems for images}
\label{sec:inv_problems}
A common application of posterior guidance has been in solving inverse problems \parencite{song2021scorebased,chung2022come} (\cf \cref{app:inverse_problems}).
As such, we explore several inverse problems in the image domain.
In particular, we explore a set of inverse image problems on a subset of 100 images from the FFHQ \parencite{stylegan} $256\times 256$ dataset.
We make use of the pre-trained diffusion model from \textcite{chung2023diffusion} trained on the FFHQ dataset.

\paragraph{Inverse problems and metrics.}
Following \parencite{zhang2024improvingdiffusioninverseproblem} we conduct experiments on the following linear tasks: super resolution, Gaussian deblurring, motion deblurring, inpaintining (with a box mask), and inpainting (with a 70\% random mask); along with three non-linear problems: phase retrieval, high dynamic range (HDR) reconstruction, and non-linear deblurring.
We use the standard evaluation metrics of \textit{peak signal-to-noise-ratio} (PSNR), \textit{structural similarity index measure} (SSIM), \textit{Learned Perceptual Image Patch Similarity} (LPIPS) \parencite{zhang2018unreasonable}, and \textit{Fr\'echet Inception Distance} (FID) \parencite{heusel2017fid}.
We solve the probability flow ODE with the midpoint scheme and 20 discretization steps; further configuration details are reported in \cref{app:inv_images}.

\begin{table}[t]
    \centering
    \caption{ A snapshot of the quantitative results for solving inverse image problems on FFHQ. We report the mean performance (PSNR, SSIM, and LPIPS) across 100 validation images along with the FID. All tasks are using a noisy measurement with noise level $\beta_\bfy = 0.05$. The full results are found in \cref{tab:add_inv_images}.}
    \label{tab:inv_images}
    \small
    \begin{tabular}{ll cccc}
        \toprule
        \textbf{Task} & \textbf{Method} & \textbf{PSNR} ($\uparrow$) & \textbf{SSIM} ($\uparrow$) & \textbf{LPIPS} ($\downarrow$) & \textbf{FID} ($\downarrow$)\\

        \midrule
        \multirow{4}{*}{Inpaint (random)} & Greedy (Euler)  & 30.87 & 0.823 & 0.141 & 40.73\\
        & Greedy (midpoint)  & 31.03 & 0.816 & 0.139 & 38.80\\
        & Greedy (2-step Euler)  & 30.80 & 0.811 & 0.144 & 39.23\\
        
        & DAPS & 31.12 & 0.844 & 0.098 & 32.17\\
        & DPS & 25.46 & 0.823 & 0.203 & 69.20\\
        
        \midrule
        \multirow{4}{*}{Gaussian deblurring} & Greedy (Euler)  & 28.01 & 0.766 & 0.182 & 57.04\\
        & Greedy (midpoint)  & 28.36 & 0.776 & 0.185 & 58.55\\
        & Greedy (2-step Euler)  & 28.18 & 0.774 & 0.181 & 57.18\\

        & DAPS & 29.19 & 0.817 & 0.165 & 53.33\\
        & DPS & 25.87 & 0.764 & 0.219 & 79.75\\
        
        \bottomrule
    \end{tabular}
\end{table}

\paragraph{Results.}
We present some qualitative results on reconstructing images from a random mask in \cref{fig:inv_image_ex}.
Quantitatively, we present a snapshot of our full results (\cf \cref{tab:add_inv_images}) on the inpainting with random mask and Gaussian deblurring tasks.
For reference we include the standard DPS \parencite{chung2023diffusion} and the recent state-of-the-art DAPS \parencite{zhang2024improvingdiffusioninverseproblem}.
We observe that the posterior guidance strategy works well performing closer to DAPS than DPS.
Interestingly, on these tasks the extra compute and smaller truncation error of the midpoint and 2-step Euler did not lead to any noticeable performance gains.
We report further results in \cref{app:add_inv} along with additional analysis and discussion.

\begin{table}[t]
    \centering
    \caption{Further ablations on the number of discretization steps on the non-linear HDR inverse problem.}
    \label{tab:ablation_hdr}
    \small
    \begin{tabular}{l cccc}
        \toprule
        \textbf{Method} & \textbf{PSNR} ($\uparrow$) & \textbf{SSIM} ($\uparrow$) & \textbf{LPIPS} ($\downarrow$) & \textbf{FID} ($\downarrow$)\\
        \midrule
        DAPS & ${2 7 . 1 2}_{ \pm 3.53}$ & ${0 . 7 5 2}_{ \pm 0.041}$ & ${0 . 1 6 2}_{ \pm 0.072}$ & 42.97\\
        DPS & ${22.73}_{ \pm 6.07}$ & ${0.591}_{ \pm 0.141}$ & $0.264_{ \pm 0.156}$ & 112.82\\
        RED-diff & $22.16_{ \pm 3.41}$ & $0.512_{ \pm 0.083}$ & ${0.258}_{ \pm 0.089}$ & ${108.32}$\\
        \hfill\\
        Greedy (Euler) & $25.07_{\pm 4.25}$ & $0.776_{\pm 0.126}$ & $0.173_{\pm 0.070}$ & 43.25\\
        Greedy (2-step Euler) & $26.32_{\pm 4.34}$ & $0.802_{\pm 0.111}$ & $0.173_{\pm 0.065}$ & 38.64\\
        Greedy (3-step Euler) & $27.17_{\pm 4.21}$ & $0.820_{\pm 0.096}$ & $0.154_{\pm 0.062}$ & 36.07\\
        Greedy (4-step Euler) & $27.89_{\pm 4.10}$ & $0.828_{\pm 0.092}$ & $0.151_{\pm 0.061}$ & 36.94\\
        Greedy (5-step Euler) & $\textbf{28.27} _{\pm 4.01}$ & $\textbf{0.831}_{\pm 0.088}$ & $\textbf{0.149}_{\pm 0.059}$ & \textbf{35.35}\\

        \hfill\\
        DTO (1-step) & $13.16 _{\pm 1.15}$ & $0.372_{\pm 0.083}$ & $0.521_{\pm 0.059}$ & 108.39\\
        DTO (2-step) & $14.91 _{\pm 1.23}$ & $0.372_{\pm 0.080}$ & $0.483_{\pm 0.061}$ & 98.93\\
        DTO (4-step) & $16.37 _{\pm 1.38}$ & $0.455_{\pm 0.082}$ & $0.457_{\pm 0.066}$ & 93.52\\
        DTO (8-step) & $16.37 _{\pm 1.38}$ & $0.455_{\pm 0.082}$ & $0.457_{\pm 0.066}$ & 93.52\\
        \bottomrule
    \end{tabular}
\end{table}

\paragraph{Ablations on discretization steps.}
As we discussed in \cref{sec:beyond_euler} we can improve performance by taking more step sizes.
We preform a more involved ablation of this design axis on the \textit{high-dynamic range} (HRD) reconstruction experiment detailed in \cref{tab:ablation_hdr}.
Additionally, we also report the results from the RED-diff \parencite{mardani2024a} algorithm.
Following \textcite{zhang2024improvingdiffusioninverseproblem} for the \textit{non-linear} inverse problem of HDR reconstruction we perform 4 runs per algorithm and report the mean and standard deviation.
We notice that the Greedy (5-step Euler) performs very well beating the SOTA DAPS algorithm, even the 2-step and 3-step perform variants as well or better than DAPs on this problem.
Increasing the number of discretization steps leads to better performance (\cf \cref{thm:grad_vs_single_step}).
Interestingly, the standard deviation decreases as well with the results becoming more consistent.
We also compared to a full DTO run with vary step sizes, \ie, end-to-end optimization with a vary number of steps used in calculating the gradient (the full 20 are used for the final sampling).
We do observe a similar trend of increasing performance as we increase the number of steps, however, it far under-performs the greedy strategy for a similar compute budget.

\begin{figure}[h]
    \centering
    \begin{subfigure}{0.125\textwidth}
        \centering\includegraphics[width=\textwidth]{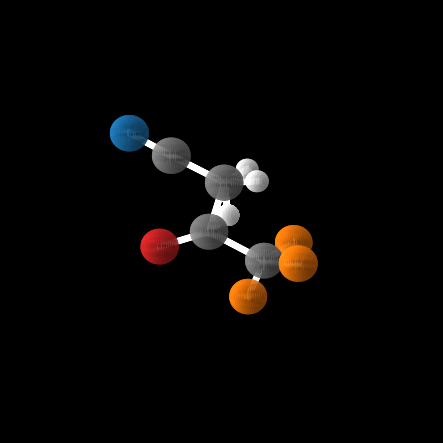}
    \end{subfigure}%
    \begin{subfigure}{0.125\textwidth}
        \centering\includegraphics[width=\textwidth]{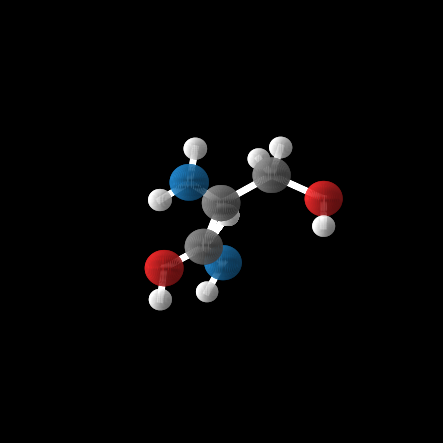}
    \end{subfigure}%
    \begin{subfigure}{0.125\textwidth}
        \centering\includegraphics[width=\textwidth]{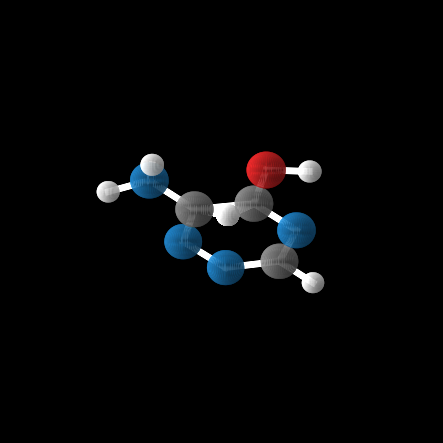}
    \end{subfigure}%
    \begin{subfigure}{0.125\textwidth}
        \centering\includegraphics[width=\textwidth]{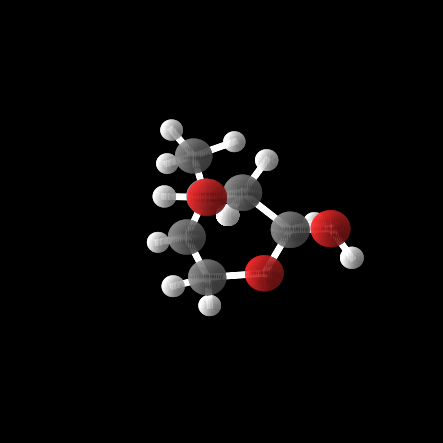}
    \end{subfigure}%
    \begin{subfigure}{0.125\textwidth}
        \centering\includegraphics[width=\textwidth]{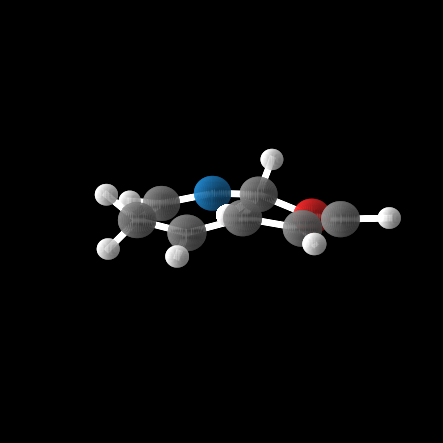}
    \end{subfigure}%
    \begin{subfigure}{0.125\textwidth}
        \centering\includegraphics[width=\textwidth]{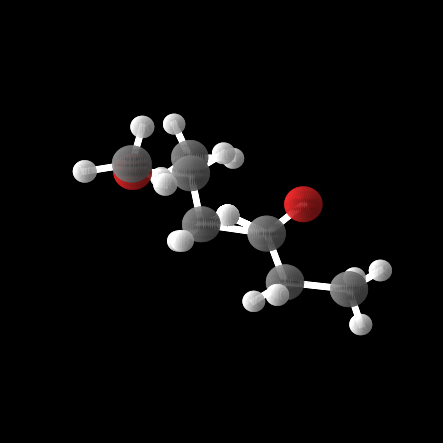}
    \end{subfigure}%
    \begin{subfigure}{0.125\textwidth}
        \centering\includegraphics[width=\textwidth]{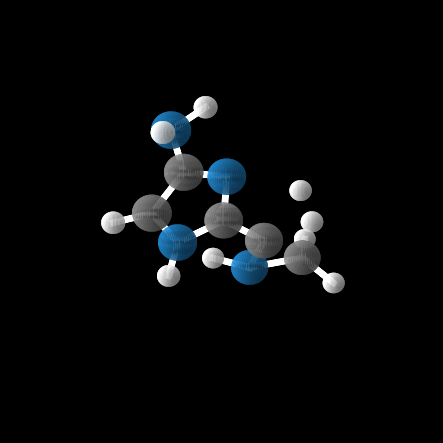}
    \end{subfigure}%
    \begin{subfigure}{0.125\textwidth}
        \centering\includegraphics[width=\textwidth]{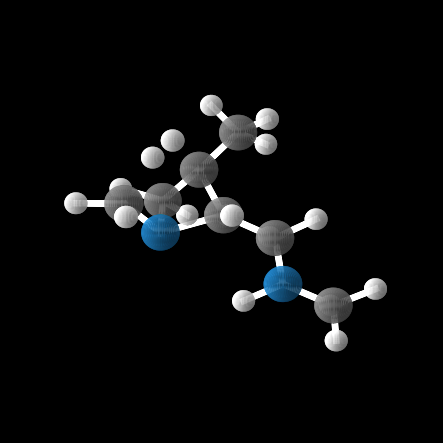}
    \end{subfigure}

    \begin{subfigure}{0.125\textwidth}
        \centering\includegraphics[width=\textwidth]{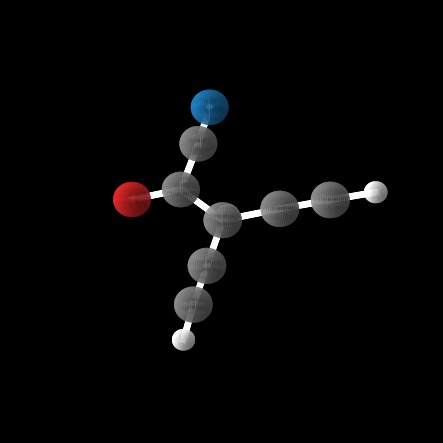}
        \caption*{48.43}
    \end{subfigure}%
    \begin{subfigure}{0.125\textwidth}
        \centering\includegraphics[width=\textwidth]{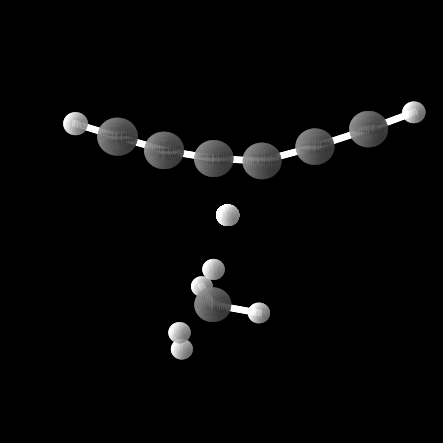}
        \caption*{53.64}
    \end{subfigure}%
    \begin{subfigure}{0.125\textwidth}
        \centering\includegraphics[width=\textwidth]{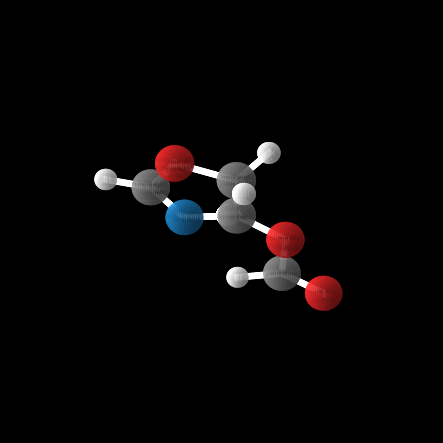}
        \caption*{60.05}
    \end{subfigure}%
    \begin{subfigure}{0.125\textwidth}
        \centering\includegraphics[width=\textwidth]{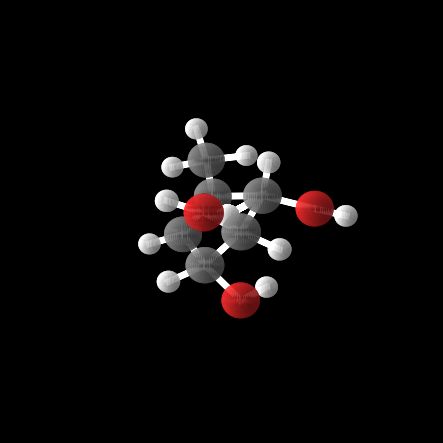}
        \caption*{73.95}
    \end{subfigure}%
    \begin{subfigure}{0.125\textwidth}
        \centering\includegraphics[width=\textwidth]{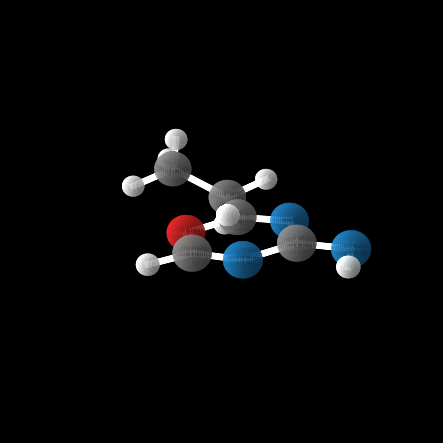}
        \caption*{78.02}
    \end{subfigure}%
    \begin{subfigure}{0.125\textwidth}
        \centering\includegraphics[width=\textwidth]{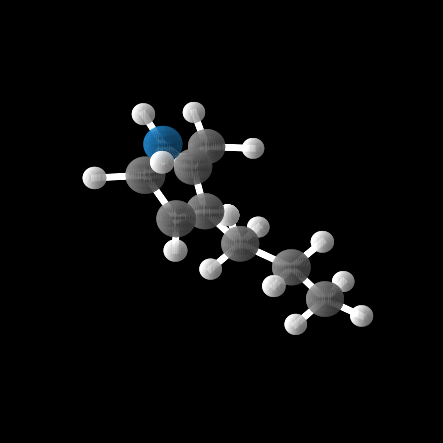}
        \caption*{82.75}
    \end{subfigure}%
    \begin{subfigure}{0.125\textwidth}
        \centering\includegraphics[width=\textwidth]{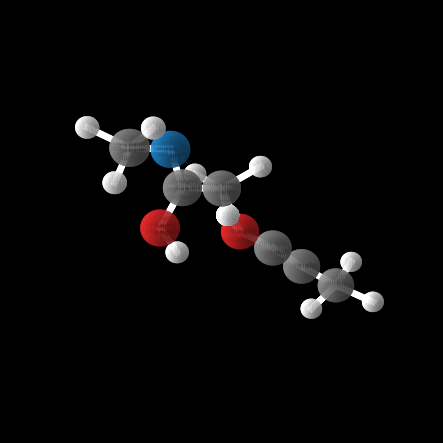}
        \caption*{88.45}
    \end{subfigure}%
    \begin{subfigure}{0.125\textwidth}
        \centering\includegraphics[width=\textwidth]{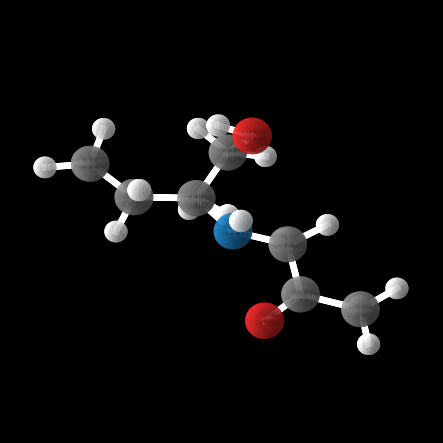}
        \caption*{91.30}
    \end{subfigure}
    \label{fig:molecules_vis}
    \caption{Qualitative visualization of controlled generated molecules for various polarizability $(\alpha)$ levels. Top row is generated using end-to-end guidance with a DTO scheme and the bottom row is generated using greedy guidance.}
\end{figure}

\subsection{Molecule generation for QM9}
We also illustrate the core ideas with some experiments in controllable molecule generation on the QM9 dataset \parencite{ruddigkeit2012enumeration}, a popular molecular dataset containing small molecules with up to 29 atoms.
Following \textcite{hoogeboom2022equivariant,ben-hamu2024dflow}, we perform the conditional generation of molecules with specified quantum chemical property values.
In particular, we target the following properties: polarizability $\alpha$, orbital energies $\varepsilon_{\mathrm{HOMO}}, \varepsilon_{\mathrm{LUMO}}$ and their gap $\Delta \varepsilon$, dipole moment $\mu$, and heat capacity $C_v$.
The property classifiers were trained following the methodology outlined in \textcite{hoogeboom2022equivariant}.
The underlying flow model is an unconditional equivariant flow matching model with \textit{conditional optimal transport} path \parencites[Section 4.7]{lipman2024flow-guide}[\cf][]{tong2023conditional,tong2024improving}, \ie, the EquiFM \parencite{song2023equivariant} model.
We solve the ODE with Euler's method and 50 discretization steps; further configuration details are reported in \cref{app:inv_images}.
Further details are provided in \cref{app:qm9}.

\paragraph{Metrics.}
To evaluate the guided generation we calculate the \textit{mean absolute error} (MAE) between the predicted property value of the generated molecule by the property classifier and the target property value \parencite{satorras2021en}.
Additionally in \cref{app:add_mol} we report the quality of the generated molecules by evaluating the atom stability (the percentage of atoms with correct valency) and molecule stability (the percentage of molecules where all atoms are stable).

\begin{table}[htpb]
    \centering
    \caption{Quantitative evaluation of conditional molecule generation. The MAE is reported for each molecule property (lower is better).}
    \label{tab:mol_results}
    \small

    \begin{tabular}{l ccc ccc}
        \toprule
        Property & $\alpha$ & $\Delta \varepsilon$ & $\varepsilon_{\mathrm{HOMO}}$ & $\varepsilon_{\mathrm{LUMO}}$ & $\mu$ & $C_v$\\
        Unit & $\mathrm{Bohr}^2$ & meV & meV & meV & D & $\frac{\mathrm{cal}}{\mathrm{K} \cdot \mathrm{mol}}$\\
        \midrule
        Greedy (Euler) & 11.282 & 1265 & 725 & 1092 & 1.559 & 6.469\\
        Greedy (midpoint) & 5.313 & 1196 & 599 & 1057 & 1.417 & 2.967\\
        Greedy (2-step Euler) & 5.667 & 1205 & 695 & 1222 & 1.491 & 2.767 \\
        Greedy (3-step Euler) & 5.098 & 1152 & 600 & 1152 & 1.384 & 3.229 \\
        Greedy (5-step Euler) & 4.177 & 1083 & 571 & 939 & 1.328 & 2.332 \\
        \hfill\\
        DTO (1-step) & 13.049 & $989 \times 10^{12}$ & 681 & 86.512 & 1.666 & 15.144\\
        DTO (2-step) & 6.113 & 1359 & 666 & 1199 & 1.533 & 3.757\\
        DTO (4-step) & 6.115 & 1294 & 668 & 1190 & 1.406 & 2.829\\
        DTO (8-step) & 4.549 & 1070 & 608 & 1078 & 1.247 & 2.594\\
        DTO (16-step) & 3.454 & 817 & 608 & 939 & 1.177 & 2.003\\
        DTO (32-step) & 2.912 & 750 & 410 & 666 & 0.721 & 1.566\\
        DTO (40-step) & 2.384 & 625 & 372 & 556 & 0.719 & 1.425\\
        DTO (50-step) & 1.404 & 401 & 176 & 373 & 0.372 & 0.866\\
        
        \midrule
        EquiFM & 9.525 & 1494 & 622 & 1523 & 1.628 & 6.689\\
        Lower bound & 0.10 & 64 & 39 & 46 & 0.043 & 0.040\\
        \bottomrule
    \end{tabular}
\end{table}

\paragraph{Results.}
In \cref{fig:molecules_vis} we present a visual comparison between molecules generated targeting different polarizability $\alpha$ values using a DTO end-to-end guidance scheme (essentially D-Flow) and the posterior guidance scheme.
Notice that as $\alpha$ increases the compactness of the molecules generated by a DTO scheme decreases.
This trend is less noticeable for the posterior guided samples.
We report quantitative results in \cref{tab:mol_results}.
We report the unguided EquiFM generated molecules as an upper bound and include the theoretical lower bounds from \textcite{ben-hamu2024dflow}.
It is here that we notice a sharp decrease in performance from using posterior guidance.
In particular the greedy (Euler) strategy is is highly unstable even performing worse than the unguided model on the $\alpha$ property.
The introduction of an additional step in the form of either midpoint or 2-step Euler does seem to improve performance; although the significance varies property to property.
We observe that the midpoint method seems to perform slightly better than the 2-step Euler.
We performed experiments with Ralston's third-order method and a fourth-order Runge-Kutta scheme but noticed significant instability in comparison to just taking more steps, we posit this is due to the large step size and that a hybrid scheme like that employed by \textcite{moufad2025variational} might be a reasonable solution to such problems, but ultimately we leave that question up to future work.
Moreover, we observe that increasing the number of steps generally improves performance with greedy (5-step Euler) performing the best among all the greedy guidance strategies.
The gradients of DTO (50-step) strategy are the \textit{ideal} and perfect gradients \wrt the flow model as the numerical solver takes 50 discretization steps and thus form the \textit{upper bound of performance}.
We see that as the DTO strategy incorporates more discretization steps the results converge to the upper bound along this particular design axis; in particular, we notice that the greedy strategy does well in the the regime of less discretization steps.

\section{Conclusion}
In this paper we present a unifying view of two different families of guided generation: end-to-end guidance and posterior guidance from the lens of a greedy algorithm.
We present numerous theoretical connections tying these two families together.
Our theoretical analysis shows that there might be some reason to believe that such a cheap approximation of the gradient can be reasonable for \textit{certain} tasks.
By exploiting the theoretical connections we created, we investigate guidance techniques which lie in between these two families giving rise to an exciting novel design space.
We then conduct several experiments on inverse image problems and on controlled molecule generation to illustrate this new design space.
We hope that our findings can help future researchers find the optimal spot between computational cost and accuracy of gradients for guidance problems.

\newrefcontext[sorting=nyt]
\printbibliography[heading=bibintoc]

\newpage
\section*{NeurIPS Paper Checklist}

\begin{enumerate}

\item {\bf Claims}
    \item[] Question: Do the main claims made in the abstract and introduction accurately reflect the paper's contributions and scope?
    \item[] Answer: \answerYes{} %
    \item[] Justification: The theoretical claims introduced in the abstract and introduction are verified in \cref{sec:greedy_perspective,sec:theory_guidance,sec:beyond_euler} and in the appendices.
        Likewise, the experimental claims made are verified in \cref{sec:experiments} and appropriate appendices.
    \item[] Guidelines:
    \begin{itemize}
        \item The answer NA means that the abstract and introduction do not include the claims made in the paper.
        \item The abstract and/or introduction should clearly state the claims made, including the contributions made in the paper and important assumptions and limitations. A No or NA answer to this question will not be perceived well by the reviewers. 
        \item The claims made should match theoretical and experimental results, and reflect how much the results can be expected to generalize to other settings. 
        \item It is fine to include aspirational goals as motivation as long as it is clear that these goals are not attained by the paper. 
    \end{itemize}

\item {\bf Limitations}
    \item[] Question: Does the paper discuss the limitations of the work performed by the authors?
    \item[] Answer: \answerYes{} %
    \item[] Justification: We address the limitations of this work in \cref{app:limitations}.
    \item[] Guidelines:
    \begin{itemize}
        \item The answer NA means that the paper has no limitation while the answer No means that the paper has limitations, but those are not discussed in the paper. 
        \item The authors are encouraged to create a separate "Limitations" section in their paper.
        \item The paper should point out any strong assumptions and how robust the results are to violations of these assumptions (e.g., independence assumptions, noiseless settings, model well-specification, asymptotic approximations only holding locally). The authors should reflect on how these assumptions might be violated in practice and what the implications would be.
        \item The authors should reflect on the scope of the claims made, e.g., if the approach was only tested on a few datasets or with a few runs. In general, empirical results often depend on implicit assumptions, which should be articulated.
        \item The authors should reflect on the factors that influence the performance of the approach. For example, a facial recognition algorithm may perform poorly when image resolution is low or images are taken in low lighting. Or a speech-to-text system might not be used reliably to provide closed captions for online lectures because it fails to handle technical jargon.
        \item The authors should discuss the computational efficiency of the proposed algorithms and how they scale with dataset size.
        \item If applicable, the authors should discuss possible limitations of their approach to address problems of privacy and fairness.
        \item While the authors might fear that complete honesty about limitations might be used by reviewers as grounds for rejection, a worse outcome might be that reviewers discover limitations that aren't acknowledged in the paper. The authors should use their best judgment and recognize that individual actions in favor of transparency play an important role in developing norms that preserve the integrity of the community. Reviewers will be specifically instructed to not penalize honesty concerning limitations.
    \end{itemize}

\item {\bf Theory assumptions and proofs}
    \item[] Question: For each theoretical result, does the paper provide the full set of assumptions and a complete (and correct) proof?
    \item[] Answer: \answerYes{} %
    \item[] Justification: The theoretical results from the main paper are proved with full assumptions in \cref{app:greedy_perspective,app:dynamics,app:beyond_euler}.
    \item[] Guidelines:
    \begin{itemize}
        \item The answer NA means that the paper does not include theoretical results. 
        \item All the theorems, formulas, and proofs in the paper should be numbered and cross-referenced.
        \item All assumptions should be clearly stated or referenced in the statement of any theorems.
        \item The proofs can either appear in the main paper or the supplemental material, but if they appear in the supplemental material, the authors are encouraged to provide a short proof sketch to provide intuition. 
        \item Inversely, any informal proof provided in the core of the paper should be complemented by formal proofs provided in appendix or supplemental material.
        \item Theorems and Lemmas that the proof relies upon should be properly referenced. 
    \end{itemize}

    \item {\bf Experimental result reproducibility}
    \item[] Question: Does the paper fully disclose all the information needed to reproduce the main experimental results of the paper to the extent that it affects the main claims and/or conclusions of the paper (regardless of whether the code and data are provided or not)?
    \item[] Answer: \answerYes{} %
    \item[] Justification: Yes, we provide details on our experimental procedures in \cref{app:impl}. Additionally, the random seeds we used are fixed in the configuration files in our code.
    \item[] Guidelines:
    \begin{itemize}
        \item The answer NA means that the paper does not include experiments.
        \item If the paper includes experiments, a No answer to this question will not be perceived well by the reviewers: Making the paper reproducible is important, regardless of whether the code and data are provided or not.
        \item If the contribution is a dataset and/or model, the authors should describe the steps taken to make their results reproducible or verifiable. 
        \item Depending on the contribution, reproducibility can be accomplished in various ways. For example, if the contribution is a novel architecture, describing the architecture fully might suffice, or if the contribution is a specific model and empirical evaluation, it may be necessary to either make it possible for others to replicate the model with the same dataset, or provide access to the model. In general. releasing code and data is often one good way to accomplish this, but reproducibility can also be provided via detailed instructions for how to replicate the results, access to a hosted model (e.g., in the case of a large language model), releasing of a model checkpoint, or other means that are appropriate to the research performed.
        \item While NeurIPS does not require releasing code, the conference does require all submissions to provide some reasonable avenue for reproducibility, which may depend on the nature of the contribution. For example
        \begin{enumerate}
            \item If the contribution is primarily a new algorithm, the paper should make it clear how to reproduce that algorithm.
            \item If the contribution is primarily a new model architecture, the paper should describe the architecture clearly and fully.
            \item If the contribution is a new model (e.g., a large language model), then there should either be a way to access this model for reproducing the results or a way to reproduce the model (e.g., with an open-source dataset or instructions for how to construct the dataset).
            \item We recognize that reproducibility may be tricky in some cases, in which case authors are welcome to describe the particular way they provide for reproducibility. In the case of closed-source models, it may be that access to the model is limited in some way (e.g., to registered users), but it should be possible for other researchers to have some path to reproducing or verifying the results.
        \end{enumerate}
    \end{itemize}

\item {\bf Open access to data and code}
    \item[] Question: Does the paper provide open access to the data and code, with sufficient instructions to faithfully reproduce the main experimental results, as described in supplemental material?
    \item[] Answer: \answerYes{} %
    \item[] Justification: The source code will be uploaded as a part of the supplementary material.
    \item[] Guidelines:
    \begin{itemize}
        \item The answer NA means that paper does not include experiments requiring code.
        \item Please see the NeurIPS code and data submission guidelines (\url{https://nips.cc/public/guides/CodeSubmissionPolicy}) for more details.
        \item While we encourage the release of code and data, we understand that this might not be possible, so “No” is an acceptable answer. Papers cannot be rejected simply for not including code, unless this is central to the contribution (e.g., for a new open-source benchmark).
        \item The instructions should contain the exact command and environment needed to run to reproduce the results. See the NeurIPS code and data submission guidelines (\url{https://nips.cc/public/guides/CodeSubmissionPolicy}) for more details.
        \item The authors should provide instructions on data access and preparation, including how to access the raw data, preprocessed data, intermediate data, and generated data, etc.
        \item The authors should provide scripts to reproduce all experimental results for the new proposed method and baselines. If only a subset of experiments are reproducible, they should state which ones are omitted from the script and why.
        \item At submission time, to preserve anonymity, the authors should release anonymized versions (if applicable).
        \item Providing as much information as possible in supplemental material (appended to the paper) is recommended, but including URLs to data and code is permitted.
    \end{itemize}

\item {\bf Experimental setting/details}
    \item[] Question: Does the paper specify all the training and test details (e.g., data splits, hyperparameters, how they were chosen, type of optimizer, etc.) necessary to understand the results?
    \item[] Answer: \answerYes{} %
    \item[] Justification: Yes, the full details for the experiments are provided in \cref{app:impl}.
    \item[] Guidelines:
    \begin{itemize}
        \item The answer NA means that the paper does not include experiments.
        \item The experimental setting should be presented in the core of the paper to a level of detail that is necessary to appreciate the results and make sense of them.
        \item The full details can be provided either with the code, in appendix, or as supplemental material.
    \end{itemize}

\item {\bf Experiment statistical significance}
    \item[] Question: Does the paper report error bars suitably and correctly defined or other appropriate information about the statistical significance of the experiments?
    \item[] Answer: \answerNo{} %
    \item[] Justification: Due to the heavy computational demands in running these experiments we did not report statistical significance which is inline with other works in this space.
    \item[] Guidelines:
    \begin{itemize}
        \item The answer NA means that the paper does not include experiments.
        \item The authors should answer "Yes" if the results are accompanied by error bars, confidence intervals, or statistical significance tests, at least for the experiments that support the main claims of the paper.
        \item The factors of variability that the error bars are capturing should be clearly stated (for example, train/test split, initialization, random drawing of some parameter, or overall run with given experimental conditions).
        \item The method for calculating the error bars should be explained (closed form formula, call to a library function, bootstrap, etc.)
        \item The assumptions made should be given (e.g., Normally distributed errors).
        \item It should be clear whether the error bar is the standard deviation or the standard error of the mean.
        \item It is OK to report 1-sigma error bars, but one should state it. The authors should preferably report a 2-sigma error bar than state that they have a 96\% CI, if the hypothesis of Normality of errors is not verified.
        \item For asymmetric distributions, the authors should be careful not to show in tables or figures symmetric error bars that would yield results that are out of range (e.g. negative error rates).
        \item If error bars are reported in tables or plots, The authors should explain in the text how they were calculated and reference the corresponding figures or tables in the text.
    \end{itemize}

\item {\bf Experiments compute resources}
    \item[] Question: For each experiment, does the paper provide sufficient information on the computer resources (type of compute workers, memory, time of execution) needed to reproduce the experiments?
    \item[] Answer: \answerYes{} %
    \item[] Justification: We provide the full details of the compute resources in \cref{app:hardware}.
    \item[] Guidelines:
    \begin{itemize}
        \item The answer NA means that the paper does not include experiments.
        \item The paper should indicate the type of compute workers CPU or GPU, internal cluster, or cloud provider, including relevant memory and storage.
        \item The paper should provide the amount of compute required for each of the individual experimental runs as well as estimate the total compute. 
        \item The paper should disclose whether the full research project required more compute than the experiments reported in the paper (e.g., preliminary or failed experiments that didn't make it into the paper). 
    \end{itemize}
    
\item {\bf Code of ethics}
    \item[] Question: Does the research conducted in the paper conform, in every respect, with the NeurIPS Code of Ethics \url{https://neurips.cc/public/EthicsGuidelines}?
    \item[] Answer: \answerYes{} %
    \item[] Justification: We conducted research conforming in every aspect with the NeurIPS Code of Ethics.
    \item[] Guidelines:
    \begin{itemize}
        \item The answer NA means that the authors have not reviewed the NeurIPS Code of Ethics.
        \item If the authors answer No, they should explain the special circumstances that require a deviation from the Code of Ethics.
        \item The authors should make sure to preserve anonymity (e.g., if there is a special consideration due to laws or regulations in their jurisdiction).
    \end{itemize}

\item {\bf Broader impacts}
    \item[] Question: Does the paper discuss both potential positive societal impacts and negative societal impacts of the work performed?
    \item[] Answer: \answerYes{} %
    \item[] Justification: Yes, we discuss such impacts in \cref{app:broader}.
    \item[] Guidelines:
    \begin{itemize}
        \item The answer NA means that there is no societal impact of the work performed.
        \item If the authors answer NA or No, they should explain why their work has no societal impact or why the paper does not address societal impact.
        \item Examples of negative societal impacts include potential malicious or unintended uses (e.g., disinformation, generating fake profiles, surveillance), fairness considerations (e.g., deployment of technologies that could make decisions that unfairly impact specific groups), privacy considerations, and security considerations.
        \item The conference expects that many papers will be foundational research and not tied to particular applications, let alone deployments. However, if there is a direct path to any negative applications, the authors should point it out. For example, it is legitimate to point out that an improvement in the quality of generative models could be used to generate deepfakes for disinformation. On the other hand, it is not needed to point out that a generic algorithm for optimizing neural networks could enable people to train models that generate Deepfakes faster.
        \item The authors should consider possible harms that could arise when the technology is being used as intended and functioning correctly, harms that could arise when the technology is being used as intended but gives incorrect results, and harms following from (intentional or unintentional) misuse of the technology.
        \item If there are negative societal impacts, the authors could also discuss possible mitigation strategies (e.g., gated release of models, providing defenses in addition to attacks, mechanisms for monitoring misuse, mechanisms to monitor how a system learns from feedback over time, improving the efficiency and accessibility of ML).
    \end{itemize}
    
\item {\bf Safeguards}
    \item[] Question: Does the paper describe safeguards that have been put in place for responsible release of data or models that have a high risk for misuse (e.g., pretrained language models, image generators, or scraped datasets)?
    \item[] Answer: \answerNA{} %
    \item[] Justification: While the theoretical concepts could be distilled for potential malicious downstream tasks it is not immediately applicable.
    \item[] Guidelines:
    \begin{itemize}
        \item The answer NA means that the paper poses no such risks.
        \item Released models that have a high risk for misuse or dual-use should be released with necessary safeguards to allow for controlled use of the model, for example by requiring that users adhere to usage guidelines or restrictions to access the model or implementing safety filters. 
        \item Datasets that have been scraped from the Internet could pose safety risks. The authors should describe how they avoided releasing unsafe images.
        \item We recognize that providing effective safeguards is challenging, and many papers do not require this, but we encourage authors to take this into account and make a best faith effort.
    \end{itemize}

\item {\bf Licenses for existing assets}
    \item[] Question: Are the creators or original owners of assets (e.g., code, data, models), used in the paper, properly credited and are the license and terms of use explicitly mentioned and properly respected?
    \item[] Answer: \answerYes{} %
    \item[] Justification: The details for the datasets are described in \cref{sec:experiments,app:impl}.
    \item[] Guidelines:
    \begin{itemize}
        \item The answer NA means that the paper does not use existing assets.
        \item The authors should cite the original paper that produced the code package or dataset.
        \item The authors should state which version of the asset is used and, if possible, include a URL.
        \item The name of the license (e.g., CC-BY 4.0) should be included for each asset.
        \item For scraped data from a particular source (e.g., website), the copyright and terms of service of that source should be provided.
        \item If assets are released, the license, copyright information, and terms of use in the package should be provided. For popular datasets, \url{paperswithcode.com/datasets} has curated licenses for some datasets. Their licensing guide can help determine the license of a dataset.
        \item For existing datasets that are re-packaged, both the original license and the license of the derived asset (if it has changed) should be provided.
        \item If this information is not available online, the authors are encouraged to reach out to the asset's creators.
    \end{itemize}

\item {\bf New assets}
    \item[] Question: Are new assets introduced in the paper well documented and is the documentation provided alongside the assets?
    \item[] Answer: \answerNA{} %
    \item[] Justification: No new assets are being released.
    \item[] Guidelines:
    \begin{itemize}
        \item The answer NA means that the paper does not release new assets.
        \item Researchers should communicate the details of the dataset/code/model as part of their submissions via structured templates. This includes details about training, license, limitations, etc. 
        \item The paper should discuss whether and how consent was obtained from people whose asset is used.
        \item At submission time, remember to anonymize your assets (if applicable). You can either create an anonymized URL or include an anonymized zip file.
    \end{itemize}

\item {\bf Crowdsourcing and research with human subjects}
    \item[] Question: For crowdsourcing experiments and research with human subjects, does the paper include the full text of instructions given to participants and screenshots, if applicable, as well as details about compensation (if any)? 
    \item[] Answer: \answerNA{} %
    \item[] Justification: We did not perform crowdsourcing. We used FFHQ, which contains subjects with human faces, but this is a publicly avaiable dataset.
    \item[] Guidelines:
    \begin{itemize}
        \item The answer NA means that the paper does not involve crowdsourcing nor research with human subjects.
        \item Including this information in the supplemental material is fine, but if the main contribution of the paper involves human subjects, then as much detail as possible should be included in the main paper. 
        \item According to the NeurIPS Code of Ethics, workers involved in data collection, curation, or other labor should be paid at least the minimum wage in the country of the data collector. 
    \end{itemize}

\item {\bf Institutional review board (IRB) approvals or equivalent for research with human subjects}
    \item[] Question: Does the paper describe potential risks incurred by study participants, whether such risks were disclosed to the subjects, and whether Institutional Review Board (IRB) approvals (or an equivalent approval/review based on the requirements of your country or institution) were obtained?
    \item[] Answer: \answerNA{} %
    \item[] Justification: Our research only used publicly available datasets and as such IRB approvals were not required.
    \item[] Guidelines:
    \begin{itemize}
        \item The answer NA means that the paper does not involve crowdsourcing nor research with human subjects.
        \item Depending on the country in which research is conducted, IRB approval (or equivalent) may be required for any human subjects research. If you obtained IRB approval, you should clearly state this in the paper. 
        \item We recognize that the procedures for this may vary significantly between institutions and locations, and we expect authors to adhere to the NeurIPS Code of Ethics and the guidelines for their institution. 
        \item For initial submissions, do not include any information that would break anonymity (if applicable), such as the institution conducting the review.
    \end{itemize}

\item {\bf Declaration of LLM usage}
    \item[] Question: Does the paper describe the usage of LLMs if it is an important, original, or non-standard component of the core methods in this research? Note that if the LLM is used only for writing, editing, or formatting purposes and does not impact the core methodology, scientific rigorousness, or originality of the research, declaration is not required.
    \item[] Answer: \answerNA{} %
    \item[] Justification: LLMs were not used in any part of this research.
    \item[] Guidelines:
    \begin{itemize}
        \item The answer NA means that the core method development in this research does not involve LLMs as any important, original, or non-standard components.
        \item Please refer to our LLM policy (\url{https://neurips.cc/Conferences/2025/LLM}) for what should or should not be described.
    \end{itemize}

\end{enumerate}

\newpage
\appendix
\crefalias{section}{appendix}

\section*{Organization of the appendix}
In \cref{app:related_works} we discuss previous approaches by exploring posterior guidance and end-to-end guidance in greater detail to provide a more comprehensive overview of how this greedy perspective connects these various works.
\cref{app:greedy_perspective} is devoted to the proofs and derivations from \cref{sec:greedy_perspective} in the main paper.
Likewise, \cref{app:dynamics,app:beyond_euler} is devoted to proofs and derivations from \cref{sec:theory_guidance,sec:beyond_euler} respectively.
In \cref{app:otd_issues} we discuss some important practical issues when using OTD for guidance, which we believe several to be useful background for the reader.
We provide some additional connections between posterior guidance and control signal optimization in \cref{app:control_signal_opt} that we were unable to include in the main paper.
\cref{app:inverse_problems} is devoted to providing a brief background on inverse problems.
Likewise, \cref{app:impl} is devoted to discussing the implementation details of the numerical experiments in \cref{sec:experiments} and providing a background for the experiments.
In \cref{app:add_results} we include additional results that we could not fit into the main paper.
Lastly, in \cref{app:disc} we discuss the limitations and broader impacts of this research.

\startcontents[]
\printcontents[]{l}{1}[3]{{\bfseries \large Appendices}}

\subsection*{Overview of theoretical results}
For convenience we provide a list of theorems to make navigating the theoretical results easier.
\listoftheorems[title={},numwidth={3em},ignoreall,onlynamed={theorem,proposition,lemma,corollary}]

\clearpage

\begin{figure*}[t]
    \centering

    \tikzset{
        basic/.style  = {draw, text width=20mm, font=\tiny, rectangle},
        root/.style = {basic, thin, align=center},
        tnode/.style = {basic, thin, align=center, font=\tiny\bfseries},
        xnode/.style = {basic, thin, align=left, text width=30mm},
        xnnode/.style = {basic, thin, align=left, text width=25mm},
        arrow/.style = {thick, dotted, shorten >=3, shorten <=3, ->}
    }

    \begin{forest} for tree={
        grow=east,
        growth parent anchor=west,
        parent anchor=east,
        child anchor=west,
        fork sep=6mm,
        l sep=12mm,
    },
    forked edges,
    [Training-free\\guided generation, root
        [Posterior\\guidance, tnode, name=posterior
            [\parencite{bansal2023universal}, xnnode]
            [\parencite{chung2023diffusion}, xnnode]
            [\parencite{wang2023zeroshot}, xnnode]
            [\parencite{kadkhodaie2021posterior}, xnnode]
            [\parencite{moufad2025variational}, xnnode]
            [\parencite{he2024manifold}, xnnode]
            [\parencite{yu2023freedom}, xnnode]
            [\parencite{song2023pseudoinverseguided}, xnnode]
            [\parencite{zhang2024improvingdiffusioninverseproblem}, xnnode]
        ]
        [End-to-end\\guidance, tnode, name=e2e
            [State\\optimization, tnode
                [\parencite{pan2024adjointdpm}, xnode]
                [\parencite{novack2024ditto}, xnode]
                [\parencite{karunratanakul2024optimizing}, xnode]
                [\parencite{clark2024directly}, xnode]
                [\parencite{blasingame2024adjointdeis}, xnode]
                [\parencite{ben-hamu2024dflow}, xnode]
                [\parencite{marion2024implicit}, xnode]
            ]
            [Control signal\\ optimization, tnode
                [\parencite{liu2023flowgrad}, xnode]
                [\parencite{wang2024training}, xnode]
            ]
        ]
    ]
    \draw[arrow] (e2e) to node[font=\scriptsize, anchor=west,align=center, yshift=4mm]{A greedy strategy} (posterior); 
    \end{forest}

    \caption[A taxonomy of training-free guided generation methods.]{A more detailed taxonomy of \textit{training-free guided generation} methods from \cref{fig:taxonomy_of_guided} from the main paper.}
    \label{fig:app:taxonomy_of_guided}
\end{figure*}

\section{Related works}
\label{app:related_works}
We provide a brief summary of previous work exploring either posterior guidance or end-to-end guidance strategies.
In \cref{fig:app:taxonomy_of_guided} we provide a more detailed taxonomy of training-free methods for gradient-based guided generation based on \cref{fig:taxonomy_of_guided} from the main paper.

\subsection{Posterior guidance}
Recent work in flow/diffusion models has explored the guidance using this strategy; we highlight a few notable examples.
Diffusion Posterior Sampling (DPS) \parencite{chung2023diffusion} is a guidance method that uses Tweedie's formula \parencite{stein1981estimation} to estimate the gradient of some guidance function defined in the output state \wrt the noisy state, \ie, $\ex[\bfX_1|\bfX_t = \bfx]$. 
Likewise, the work of \textcite{bansal2023universal,wang2023zeroshot,yu2023freedom,he2024manifold} explores similar concepts by employing Tweedie's formula for diffusion models.
Most of these works have explored using the SDE (or Markov chain) formulation of diffusion models rather than the ODE formulation, which is what we primarily focused on in our analysis.

\paragraph{Correcting the guidance trajectory.}
Several works have explored extensions to the DPS framework by using multiple steps of an SDE solver to correct \textit{errors} made by the guidance steps.
In particular, FreeDoM \parencite{yu2023freedom} explores the usage of a \textit{time-reversal} strategy repeated for a set number of times in each sampling step to correct possible guidance errors.
Likewise, recent work by \textcite{zhang2024improvingdiffusioninverseproblem} explored modeling Langevin dynamics on top of a diffusion ODE to correct measurement errors in inverse problems.
A significant number of the proposed methods which use posterior guidance arise from solving inverse problems \parencite[\cf][]{daras2024survey}.

\paragraph{Scheduled hyperparameters.}
Researchers realized that extra performance can be gained in such problems by scheduling hyperparameters like the learning rate (or guidance strength) at different timesteps in the numerical scheme \parencite{yu2023freedom,moufad2025variational}.

\paragraph{Beyond Euler.}
Recent work by \textcite{moufad2025variational} explores an extension to \parencite{chung2023diffusion} by using a two-step method to estimate the guidance gradient.
This is mostly closely related to the \textit{greedy (2-step Euler)} method from the main paper, although they use a stochastic sampling method, so it would be more akin to taking two Euler-Maruyama steps.

\subsection{End-to-end guidance}
Within the last year, many researchers have explored backpropagation through flow/diffusion models for controllable generation.
As mentioned in the main paper, the two main strategies for solving such a problem is a DTO or OTD scheme (\cf \cref{app:otd_issues}).

\paragraph{Discretize-then-optimize.}
FlowGrad proposed by \textcite{liu2023flowgrad} uses a DTO scheme to optimize an additional control signal (more details on this later) to perform guidance with flow models.
Although the analysis of \textcite{ben-hamu2024dflow} makes use of the continuous adjoint equations, in practice they use the \textit{generally} preferred approach of DTO with gradient checkpointing.\footnote{See \url{https://docs.kidger.site/diffrax/api/adjoints/} for an excellent summary of such design considerations and why DTO is generally preferable over OTD.}
Likewise, \textcite{novack2024ditto,karunratanakul2024optimizing,clark2024directly} all use gradient checkpointing with DTO to perform backpropagation through the flow/diffusion model.

\paragraph{Optimize-then-discretize.}
Another stream of work has explored the use of continuous adjoint equations to perform the backpropagation.
The advantage of such approaches is the $\mathcal O(1)$ memory cost, and we enumerate the drawbacks in \cref{app:otd_issues}, but suffice to say there are several.
To the best of our knowledge, the first work to explore this was \textcite{nie2022diffpure} which used OTD with SDEs for the adversarial purification task.
More general work came later by \textcite{pan2024adjointdpm,blasingame2024adjointdeis,ben-hamu2024dflow}.
More specifically, \textcite{pan2024adjointdpm,pan2023adjointsymplectic} explore bespoke solvers for the continuous adjoint equations of diffusion ODEs.
\textcite{blasingame2024adjointdeis} extends these works by developing bespoke solvers for diffusion ODEs and SDEs and performs more theoretical analysis of the problem in the SDE setting.
\textcite{marion2024implicit} explore using the continuous adjoint equations as a part of a larger bi-level optimization scheme for guided generation.
The work of \textcite{ben-hamu2024dflow} extends the analysis of continuous adjoint equations for diffusion models to flow-based models and provides an alternative perspective to the analysis performed in the earlier works.
Recent work by \textcite{wang2024training} explores an extension of \textcite{ben-hamu2024dflow} to Riemannian manifolds which incorporates a control signal to the vector field and optimizes both the solution state and \textit{co-state}, they call their approach OC-Flow.

Parallel to these works (conceptually) is the work of \textcite{wallace2023end} who uses EDICT \parencite{wallace2023edict}, an invertible formulation of diffusion models, to perform backpropagation through the diffusion model.
Although not presented or viewed this way in the original work, the later work by \textcite{blasingame2024adjointdeis} showed that this approach can be viewed as a specific discretization scheme of continuous adjoint equations.
We note that the EDICT solver, while reversible, is a zeroth-order solver and has poor convergence properties \parencite[\cf][]{wang2024belm}.

\paragraph{Control signal optimization.}
We discuss this in more detail in \cref{app:control_signal_opt}, but there are several works that explore the optimization of an additional control signal $\bfz(t)$ rather than the solution trajectory $\bfx(t)$; namely, \textcite{liu2023flowgrad,wang2024training}.

\section{A greedy perspective}
\label{app:greedy_perspective}
We present the proofs and derivations associated with \cref{sec:greedy_perspective}.

\subsection{Additional details on flow models}
\label{app:more_flow}
Applying this flow to the random variable $\bfX_0$ we define a \textit{continuous-time Markov process} $\{\bfX_t\}_{t \in [0,1]}$ with mapping $\bfX_t = \Phi_t(\bfX_0)$.
The \textit{goal}, then, is to learn a flow $\Phi_t$ such that $\bfX_1 = \Phi_1(\bfX_0) \sim q(\bfx)$.
This procedure amounts to learning a neural network parameterized vector field $\bfu^\theta \in \mathcal{C}^{1,r}([0,1] \times \R^d; \R^d)$; this learning procedure can be performed efficiently through a \textit{simulation-free} training process known as \textit{flow matching} \parencite{lipman2023flow} or more generally \textit{generator matching} \parencite{holderrieth2025generator}.

Throughout the rest of this paper we will assume a standard flow model trained to zero loss and we denote the parameterized flow model via $\Phi_t^\theta(\bfx)$.
We let $\Phi_{s,t}(\bfx) = (\Phi_t \circ \Phi_s^{-1})(\bfx)$ denote the flow from time $s$ to time $t$, $s, t \in [0, 1]$.

\paragraph{Affine probability paths.}
A special subset of flow models, are flows which model an \textit{affine probability path}, \ie, given a schedule $(\alpha_t, \sigma_t)$ the random process $\{\bfX_t\}$ is described via the affine equation
\begin{equation}
    \bfX_t = \alpha_t \bfX_1 + \sigma_t \bfX_0,
\end{equation}
where $\alpha_t, \sigma_t \in \C^\infty([0,1];[0,1])$ which satisfy
\begin{equation}
    \label{eq:schedule_boundaries}
    \alpha_0 = \sigma_1 = 0, \quad \alpha_1 = \sigma_0 = 1, \quad \forall t \in (0,1)\; [\dot\alpha_t > 0, \; \dot\sigma_t < 0].
\end{equation}
The \textit{marginal vector field} can then be expressed as the following conditional expectation:
\begin{equation}
    \label{eq:marginal_vec}
    \bfu_t(\bfx) = \ex[\dot\alpha_t \bfX_1 + \dot\sigma_t \bfX_0 | \bfX_t = \bfx].
\end{equation}
This \textit{nice} form of the marginal vector field enables use to rewrite the vector field in the forms of either source \parencite{ho2020denoising} or target \parencite{kingma2021variational} prediction as 
\begin{equation}
    \label{eq:vector_field_denoiser}
    \bfu_t(\bfx) = \underbrace{\frac{\dot\beta_t}{\beta_t}}_{= a_t} \bfx + \underbrace{\frac{\sigma_t\dot\alpha_t - \dot\sigma_t\alpha_t}{\beta_t}}_{= b_t} \bsf_t(\bfx),
\end{equation}
where $\beta_t = -\alpha_t$ for source prediction with $\bsf_t(\bfx) = \bfx_{0|t}(\bfx) = \ex[\bfX_0|\bfX_t = \bfx]$ and $\beta_t = \sigma_t$ for target prediction with $\bsf_t(\bfx) = \bfx_{1|t}(\bfx) = \ex[\bfX_1|\bfX_t  = \bfx]$; and $a_t, b_t$ are useful shorthands to be used later.

\begin{remark}
The probability flow ODE formulation of \textit{diffusion models} \parencite{song2021scorebased} is subsumed by flow models, and represents a model with an affine Gaussian probability paths (AGGP), \ie, $(\bfX_0, \bfX_1) \sim \pi_{0,1}(\bfx_0, \bfx_1) = p(\bfx_0)q(\bfx_1)$ with $p(\bfx) = \mathcal{N}(\bfx | \mathbf 0, \sigma^2 \mathbf I)$ \parencite{lipman2024flow-guide}.
Thus without loss of generality we consider flow models of affine probability paths.\footnote{Clearly, diffusion models which solve the reverse-time SDE are different and require a separate analysis.}
\end{remark}

\subsection{Assumptions}
Throughout the norm $\|\cdot\|$ corresponds to the Euclidean norm $\|\cdot\|_2$.
Additionally, we make the following (mild) regularity assumptions:
\begin{assumption}
    \label{assump:at_integrable}
    The function $a_t \coloneq \frac{\dot\sigma_t}{\sigma_t}$ is integrable in $[0, 1]$.
\end{assumption}
\begin{assumption}
    \label{assump:total_derivs}
    The total derivatives $\frac{\rmd^n}{\rmd \gamma^n}\left[\bfx_{1|\gamma}^\theta(\bfx)\right]$ exist and are continuous for $0 \leq n \leq k - 1$.
\end{assumption}

\cref{assump:at_integrable} is necessary for the simplification that we perform with exponential integrators and \textcite{ben-hamu2024dflow} make the same assumption in their analysis of the continuous adjoint equations for affine probability paths.
\cref{assump:total_derivs} is to ensure that we can take a Taylor expansion of $\bfx_{1|\gamma}^\theta(\bfx)$.

\subsection{Proof of \texorpdfstring{ \cref{prop:exact_sol_flow}}{Exact solution of affine probability paths}}
\label{proof:exact_sol_flow}

We restate \cref{prop:exact_sol_flow} below.
\begin{theorembox}\exactsolflow*\end{theorembox}

\begin{proof}
    Recall that we uniquely define a flow model through the vector field $\bfu \in \C^{1,1}([0,1]\times\R^d;\R^d)$.
    The vector field which models the affine conditional flow with schedule $(\alpha_t, \sigma_t)$, is defined as
    \begin{equation}
        \bfu_t^\theta(\bfx) = \ex[\dot \alpha_t \bfX_1 + \dot \sigma_t \bfX_0 | \bfX_t = \bfx].
    \end{equation}
    With some simple algebra, we can rewrite the vector field in terms of $\hat\bfx_{1|t}$,
    \begin{equation}
        \begin{aligned}
            \bfu_t^\theta(\bfx) &= a_t \bfx + b_t \bfx_{1|t}^\theta(\bfx),\\
            a_t &= \frac{\dot \sigma_t}{\sigma_t} \qquad b_t = \dot \alpha_t - \alpha_t \frac{\dot \sigma_t}{\sigma_t}.
        \end{aligned}
    \end{equation}
    Now using this definition we can rewrite the solution for $\bfx_t$ from $\bfx_s$ in terms of $\hat\bfx_{1|t}$,
    \begin{align}
        \bfx_t &= \bfx_s + \int_s^t \bfu_\tau^\theta(\bfx_\tau) \; \rmd \tau,\\
        \label{eq:app:exact_sol1}
        \bfx_t &= \bfx_s + \int_s^t a_\tau \bfx_\tau + b_\tau \bfx_{1|\tau}^\theta(\bfx_\tau)\; \rmd \tau.
    \end{align}
    Note the semi-linear form of the integral equation.
    We can exploit this structure using the technique of \textit{exponential integrators}, \parencite[see][]{lu2022dpmsolver,zhangfast,gonzalez2024seeds}, to simplify \cref{eq:app:exact_sol1}, under \cref{assump:at_integrable}, to
    \begin{equation}
        \label{eq:app:exact_sol2}
        \bfx_t = e^{\int_s^t a_u\;\rmd u}\bfx_s + \int_s^t e^{\int_\tau^t a_u\;\rmd u}b_\tau \bfx_{1|\tau}^\theta(\bfx_\tau)\;\rmd \tau.
    \end{equation}
    Now, the integrating factor simplifies quite nicely to
    \begin{equation}
        e^{\int_s^t a_u\;\rmd u} = e^{\int_s^t \frac{\dot\sigma_u}{\sigma_u}\;\rmd u} = e^{\int_{\sigma_s}^{\sigma_t} \frac 1\sigma \;\rmd \sigma} = \frac{\sigma_t}{\sigma_s},
    \end{equation}
    such that \cref{eq:app:exact_sol2} becomes
    \begin{equation}
        \label{eq:app:exact_sol3}
        \bfx_t = \frac{\sigma_t}{\sigma_s}\bfx_s + \sigma_t\int_s^t \frac{b_\tau}{\sigma_\tau} \bfx_{1|\tau}^\theta(\bfx_\tau)\;\rmd \tau.
    \end{equation}
    We can simplify $b_t / \sigma_t$ to find:
    \begin{equation}
        \label{eq:gamma_dot}
        \frac{b_t}{\sigma_t} = \frac{\dot\alpha_t \sigma_t - \alpha_t\dot\sigma_t}{\sigma_t^2} = \frac{\rmd}{\rmd t}\bigg(\frac{\alpha_t}{\sigma_t}\bigg) = \frac{\rmd}{\rmd t} \gamma_t,
    \end{equation}
    where $\gamma_t \coloneq \alpha_t / \sigma_t$, \ie, the signal-to-noise ratio.
    As such, we can rewrite \cref{eq:app:exact_sol3} with a change of variables $\bfx_\gamma = \bfx_{\gamma_t^{-1}(\gamma)} = \bfx_t$,
    \begin{align}
        \bfx_t &= \frac{\sigma_t}{\sigma_s}\bfx_s + \sigma_t\int_{\gamma_s}^{\gamma_t} \bfx_{1|\gamma}^\theta(\bfx_\gamma)\;\rmd \gamma,
    \end{align}
    concluding the proof.
\end{proof}

\subsection{Proof of \texorpdfstring{\cref{prop:greedy_is_explicit_euler}}{greedy as an explicit Euler scheme}}
\label{proof:greedy_is_explicit_euler}

\begin{theorembox}\greedyasexplicit*\end{theorembox}

\begin{proof}
From \cref{prop:exact_sol_flow} we see that using the target prediction model to estimate $\bfx_1$ is akin to taking a first-order approximation of the flow.
More specifically, under \cref{assump:total_derivs} we can construct a $(k-1)$-th Taylor expansion of \cref{eq:exact_sol_flow} with:
\begin{align}
    \bfx_t &= \frac{\sigma_t}{\sigma_s}\bfx_s + \sigma_t \sum_{n=0}^{k-1} \frac{\rmd^n}{\rmd \gamma^n}\bigg [ \bfx^\theta_{1|\gamma}(\bfx_\gamma)\bigg]_{\gamma = \gamma_s} \int_{\gamma_s}^{\gamma_t} \frac{(\gamma - \gamma_s)^n}{n!}\;\rmd \gamma + \mathcal{O}(h^{k+1}),\\
           &= \frac{\sigma_t}{\sigma_s}\bfx_s + \sigma_t \sum_{n=0}^{k-1} \frac{\rmd^n}{\rmd \gamma^n}\bigg [ \bfx^\theta_{1|\gamma}(\bfx_\gamma)\bigg]_{\gamma = \gamma_s} \frac{h^{n+1}}{(n+1)!}+ \mathcal{O}(h^{k+1}),
\end{align}
where $h \coloneq \gamma_t - \gamma_s$ is the step size.
Then it follows that for $k=1$ the first-order discretization of the flow, omitting high-order error terms becomes,
\begin{equation}
    \bfx_t \approx \tilde \bfx_t = \frac{\sigma_t}{\sigma_s}\bfx_s + (\alpha_t + \frac{\sigma_t\alpha_s}{\sigma_s})\bfx^\theta_{1|s}(\bfx_s).
\end{equation}
In the limit as $t \to 1$ we have $\tilde\bfx_t =\bfx^\theta_{1|s}(\bfx_s)$.\footnote{Note that despite $\sigma_t \to 0$ the asymptotic behavior is well-defined \parencite[see][]{ben-hamu2024dflow}.}
Thus, the greedy gradient is a DTO scheme with an explicit Euler discretization with step size $h = \gamma_1 - \gamma_t$.
\end{proof}

\subsection{Proof of \texorpdfstring{ \cref{thm:greedy_is_implicit_euler}}{greedy as an implicit Euler scheme}}
\label{proof:greedy_is_implicit_euler}
We restate \cref{thm:greedy_is_implicit_euler} below.

\begin{theorembox}\greedyasimplicit*\end{theorembox}

For clarity we restate the definition of the continuous adjoint equations.
Let $\bfu_\theta \in \mathcal{C}^{1,1}([0,1] \times \R^d;\R^d)$ be a model that models the vector field of some ODE and be Lipschitz continuous in its second argument. Let $\bfx: [0,1] \to \R^d$ be the solution to the ODE with the initial condition $\bfx_0 \in \R^d$, $\dot \bfx_t = \bfu_\theta(t, \bfx_t)$.
For some scalar-valued loss function $\mathcal{L} \in \mathcal{C}^2(\R^d)$ in $\bfx_1$, let $\bfa_\bfx \coloneq \partial\mathcal{L}/\partial \bfx_t$ denote the gradient.
Then $\bfa_\bfx$ and related quantity $\bfa_\theta \coloneq \partial\mathcal{L}/\partial \theta$ can be found by solving an augmented ODE of the form,
\begin{equation}
    \begin{aligned}
    \bfa_\bfx(1) &= \frac{\partial \mathcal{L}}{\partial \bfx_1}, \quad &&\frac{\rmd \bfa_\bfx}{\rmd t}(t) = -\bfa_\bfx(t)^\top \frac{\partial \bfu_\theta}{\partial \bfx}(t, \bfx_t),\\
    \bfa_\theta(1) &= \mathbf 0, \quad &&\frac{\rmd \bfa_\theta}{\rmd t}(t) = -\bfa_\bfx(t)^\top \frac{\partial \bfu_\theta}{\partial \theta}(t, \bfx_t).
    \end{aligned}
\end{equation}
Now we present the proof.

\begin{proof}
The adjoint state can be simplified by rewriting the vector field in terms of the target prediction model to find
\begin{equation}
    \frac{\rmd \bfa_\bfx}{\rmd t}(t) = -a_t\bfa_\bfx(t) - b_t \bfa_\bfx(t)^\top \frac{\partial \bfx_{1|t}^\theta(\bfx_t)}{\partial \bfx_t}.
\end{equation}
We can express this \textit{backwards-in-time} ODE as an integral equation in the form of
\begin{align}
    \bfa_\bfx(s) &= \bfa_\bfx(t) - \int_t^s a_\tau \bfa_\bfx(t) + b_\tau \bfa_\bfx(\tau)^\top \frac{\partial \bfx_{1|\tau}^\theta(\bfx_\tau)}{\bfx_\tau}\;\rmd \tau,\nonumber\\
    &= \bfa_\bfx(t) + \int_s^t a_\tau \bfa_\bfx(t) + b_\tau \bfa_\bfx(\tau)^\top \frac{\partial \bfx_{1|\tau}^\theta(\bfx_\tau)}{\partial \bfx_\tau}\;\rmd \tau. \qquad \textrm{(time-reversal)}
\end{align}
Using the technique of exponential integrators we rewrite the integral as
\begin{align}
    \bfa_\bfx(s) &= e^{\int_s^t a_u\;\rmd u}\bfa_\bfx(t) + \int_s^t e^{\int_\tau^t a_u\;\rmd u}b_\tau\bfa_\bfx(\tau)^\top\frac{\partial \bfx_{1|\tau}^\theta(\bfx_\tau)}{\partial \bfx_\tau}\;\rmd \tau,\nonumber\\
    &= \frac{\sigma_t}{\sigma_s}\bfa_\bfx(t) + \sigma_t\int_s^t \frac{b_\tau}{\sigma_\tau}\bfa_\bfx(\tau)^\top\frac{\partial \bfx_{1|\tau}^\theta(\bfx_\tau)}{\partial \bfx_\tau}\;\rmd \tau,\nonumber\\
    &= \frac{\sigma_t}{\sigma_s}\bfa_\bfx(t) + \sigma_t\int_{\gamma_s}^{\gamma_t} \bfa_\bfx(\gamma)^\top\frac{\partial \bfx_{1|\gamma}^\theta(\bfx_\gamma)}{\partial \bfx_\gamma}\;\rmd \gamma.
\end{align}
By \cref{assump:total_derivs} it follows that the vector-Jacobian product has $(k-1)$-th total derivatives, allowing us to define a first-order Taylor expansion around $\gamma_s$:
\begin{equation}
    \bfa_\bfx(s) = \frac{\sigma_t}{\sigma_s}\bfa_\bfx(t) + (\alpha_t - \frac{\sigma_t}{\sigma_s}\alpha_s)\bfa_\bfx(s)^\top\frac{\partial\hat\bfx_{1|s}(\bfx_s)}{\partial \bfx_s} + \mathcal{O}(h^2).
\end{equation}
Thus, the first-order approximation of the adjoint state at time $t$ with a step size of $h = \gamma_1 - \gamma_t$ is the implicit equation
\begin{equation}
    \bfa_\bfx(t) = \bfa_\bfx(t)^\top\frac{\partial\hat\bfx_{1|t}(\bfx_t)}{\partial \bfx_t}.
\end{equation}
Now to solve the implicit equation we can use the fixed-point iteration method. Let $\bfa_\bfx(t)^{(0)} = \bfa_\bfx(1)$, then the first iteration has
\begin{equation}
    \bfa_\bfx(t)^{(1)} = \bfa_\bfx(1)^\top\frac{\partial\hat\bfx_{1|t}(\bfx_t)}{\partial \bfx_t} = \nabla_{\bfx_t} \mathcal{L}(\hat\bfx_{1|t}(\bfx_t)).
\end{equation}
Thus, we have shown that the greedy gradients are equivalent to the first iteration of an implicit Euler discretization of the continuous adjoint equations.

\end{proof}

\section{Dynamics of guidance}
\label{app:dynamics}
In this section we detail some of the formalisms omitted in the main paper concerning the dynamics of the gradient flow and greedy gradients.

We begin by re-establishing some useful prior results.
\textcite[Proposition 4.1]{ben-hamu2024dflow} showed that the gradient of the target prediction model is proportional to the variance of the random variable defined by $p_{1|t}(\bfx_1|\bfx)$, we restate their result below.
\begin{lemma}[Gradient of target prediction model]
    \label{prop:gradient_target_pred}
    For affine Gaussian probability paths, the gradient of the target prediction model $\bfx_{1|t}^\theta(\bfx)$ \wrt $\bfx$ is proportional to the variance of $p_{1|t}(\bfx_1|\bfx)$, \ie,
    \begin{equation}
        \nabla_\bfx \bfx_{1|t}^\theta(\bfx) = \frac{\alpha_t}{\sigma_t^2} \var_{1|t}(\bfx),
    \end{equation}
    where
    \begin{equation}
        \var_{1|t}(\bfx) = \ex_{p_{1|t}(\bfx_1|\bfx)} \left[(\bfx_1 - \bfx_{1|t}^\theta(\bfx)) (\bfx_1 - \bfx_{1|t}^\theta(\bfx))^\top\right].
    \end{equation}
\end{lemma}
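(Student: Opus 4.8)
The plan is to use the Gaussian structure of affine Gaussian probability paths. Since $\bfX_t = \alpha_t\bfX_1 + \sigma_t\bfX_0$ with Gaussian source, the conditional law of $\bfX_t$ given $\bfX_1 = \bfx_1$ has density $\kappa_t(\bfx\mid\bfx_1) = \mathcal N(\bfx\mid\alpha_t\bfx_1,\sigma_t^2\bm I)$, and because the model is trained to zero loss, $\bfx_{1|t}^\theta(\bfx) = \ex[\bfX_1\mid\bfX_t = \bfx]$ is exactly the posterior mean against $p_{1|t}(\bfx_1\mid\bfx)\propto q(\bfx_1)\,\kappa_t(\bfx\mid\bfx_1)$. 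The cleanest route I would take is the exponential-family observation: $p_{1|t}(\bfx_1\mid\bfx)\propto q(\bfx_1)\exp\!\big(-\tfrac{1}{2\sigma_t^2}\|\bfx-\alpha_t\bfx_1\|^2\big)$ is, as a family in $\bfx_1$, an exponential family with natural parameter $\bm\eta = (\alpha_t/\sigma_t^2)\bfx$ and sufficient statistic $\bfx_1$. Then $\bfx_{1|t}^\theta(\bfx) = \nabla_{\bm\eta}A(\bm\eta)$ for the log-partition $A$, the standard identity $\nabla_{\bm\eta}^2 A(\bm\eta) = \var_{1|t}$ gives $\nabla_{\bm\eta}\bfx_{1|t}^\theta = \var_{1|t}$, and the chain rule with $\nabla_\bfx\bm\eta = (\alpha_t/\sigma_t^2)\bm I$ yields the claim.

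Equivalently, and more self-containedly, I would just differentiate the ratio directly. Write $\bfx_{1|t}^\theta(\bfx)$ as the quotient of $\int \bfx_1 q(\bfx_1)\kappa_t(\bfx\mid\bfx_1)\,\rmd\bfx_1$ over $p_t(\bfx) = \int q(\bfx_1)\kappa_t(\bfx\mid\bfx_1)\,\rmd\bfx_1$, use the Gaussian score identity $\nabla_\bfx\kappa_t(\bfx\mid\bfx_1) = -\sigma_t^{-2}(\bfx-\alpha_t\bfx_1)\,\kappa_t(\bfx\mid\bfx_1)$, and apply the quotient rule to this vector-valued ratio. The numerator term produces $-\sigma_t^{-2}\,\ex[\bfX_1(\bfx-\alpha_t\bfX_1)^\top\mid\bfX_t=\bfx]$ and the denominator term produces $+\sigma_t^{-2}\,\bfx_{1|t}^\theta(\bfx)(\bfx-\alpha_t\bfx_{1|t}^\theta(\bfx))^\top$; the terms linear in $\bfx$ cancel, leaving $-\sigma_t^{-2}(-\alpha_t)\big(\ex[\bfX_1\bfX_1^\top\mid\bfX_t=\bfx] - \bfx_{1|t}^\theta(\bfx)\bfx_{1|t}^\theta(\bfx)^\top\big) = (\alpha_t/\sigma_t^2)\var_{1|t}(\bfx)$, as claimed.

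The main obstacle is not conceptual but bookkeeping: getting the orientation of the Jacobian right (the outer products $\bfx_1(\bfx-\alpha_t\bfx_1)^\top$ versus their transposes), applying the quotient rule correctly to a vector-valued ratio, and verifying that the $\bfx$-linear cross terms cancel so that only the centered second moment $\var_{1|t}$ survives. One also needs the mild regularity to differentiate under the integral sign (finiteness of the second moment of $q$ and dominated convergence), which is consistent with the standing assumptions on affine probability paths and matches the hypotheses used by \textcite{ben-hamu2024dflow}. Since the statement is a verbatim restatement of \textcite[Proposition 4.1]{ben-hamu2024dflow}, I would additionally cite that proof, with the exponential-family argument above serving as the compact self-contained version.
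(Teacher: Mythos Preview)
Your proof is correct. The paper itself does not prove this lemma at all: it explicitly restates the result from \textcite[Proposition 4.1]{ben-hamu2024dflow} and simply cites that reference, so there is no ``paper's own proof'' to compare against. Both routes you sketch---the exponential-family identification (natural parameter $\bm\eta=(\alpha_t/\sigma_t^2)\bfx$, sufficient statistic $\bfx_1$, so that $\nabla_{\bm\eta}^2 A=\var_{1|t}$) and the direct quotient-rule differentiation using $\nabla_\bfx\kappa_t(\bfx\mid\bfx_1)=-\sigma_t^{-2}(\bfx-\alpha_t\bfx_1)\kappa_t$---are standard and valid, and you have correctly anticipated the only real pitfalls (Jacobian orientation, cancellation of the $\bfx$-linear cross terms, and the dominated-convergence justification for differentiating under the integral). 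Since you already flag the citation to \textcite{ben-hamu2024dflow}, your write-up is in fact more complete than what the paper provides.
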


\begin{remark}
    This can be written more generally in terms of the (pushforward) differential $D_x \bfx_{1|t}^\theta(\bfx)$ where the underlying spaces are smooth manifolds and $\bfx_{1|t}^\theta$ is a smooth map between them \parencite{ben-hamu2024dflow}.
    In this section, we only consider flow models defined in Euclidean spaces, and so we opt not to elaborate on this generalization.
\end{remark}

We restate a well-known result below in \cref{lemma:forward_sensitivities_ode} regarding the continuous-time analogue to forward-mode autodifferentiation, or in other words, forward sensitivity.
\begin{lemma}[Dynamics of Jacobian matrices for flows]
    \label{lemma:forward_sensitivities_ode}
    Let $\bfx_0 \in \R^d$ and let $\bsf \in \C^{1,1}([0,T] \times \R^d;\R^d)$ be uniformly Lipschitz in $\bfx$.
    Let $\bfx: [0,T] \to \R^d$ be the unique solution to
    \begin{equation}
        \label{eq:app:proto_ode_lemma}
        \bfx(0) = \bfx_0,\qquad \frac{\rmd \bfx}{\rmd t}(t) = \bsf(t, \bfx(t)).
    \end{equation}
    Let $\Phi_{s,t}(\bfx)$, $s, t \in [0,T]$ denote the flow associated with \cref{eq:app:proto_ode_lemma}.
    Then let $\bm J_s(t) \coloneq \nabla_\bfx \Phi_{s,t}(\bfx)$ denote the Jacobian matrices, where $\bm J_s: [s, T] \to \R^{d \times d}$ solve the differential equation
    \begin{equation}
        \bm J_s(s) = \bm I, \qquad \frac{\rmd \bm J_s}{\rmd t}(t) = \nabla_\bfx \bsf(t, \Phi_{s,t}(\bfx(s))) \bm J_s(t),
    \end{equation}
    where $\nabla_\bfx \bsf(t, \cdot)$ refers to the gradient \wrt the second argument.
\end{lemma}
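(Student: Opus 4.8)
The plan is to recognize $\bm J_s$ as the solution of the classical \emph{first-variation} (forward-sensitivity) equation and to derive that equation by differentiating the Picard integral form of the flow with respect to its initial condition. Fix $s\in[0,T]$ and a base point, which for the statement we take to be $\bfx(s)$; the flow obeys
\begin{equation}
    \Phi_{s,t}(\bfx) = \bfx + \int_s^t \bsf\bigl(\tau,\Phi_{s,\tau}(\bfx)\bigr)\;\rmd\tau, \qquad t\in[s,T].
\end{equation}
First I would note that uniform Lipschitzness of $\bsf$ in $\bfx$, together with continuity in $t$, gives by Picard--Lindel\"of a unique solution on all of $[s,T]$ with no finite-time blow-up, so $\Phi_{s,t}$ is globally defined; and since moreover $\bsf(\tau,\cdot)\in\C^1$, the standard theorem on differentiable dependence on initial data yields that $\bfx\mapsto\Phi_{s,t}(\bfx)$ is $\C^1$ for each $t$, with $\nabla_\bfx\Phi_{s,t}$ jointly continuous in $(t,\bfx)$.

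Granting this, the second step is to differentiate the displayed identity in $\bfx$. On the compact interval $[s,t]$ the curve $\tau\mapsto\Phi_{s,\tau}(\bfx)$ stays in a compact set on which $\nabla_\bfx\bsf$ is bounded and continuous, so differentiation commutes with the integral, giving
\begin{equation}
    \nabla_\bfx\Phi_{s,t}(\bfx) = \bm I + \int_s^t \nabla_\bfx\bsf\bigl(\tau,\Phi_{s,\tau}(\bfx)\bigr)\,\nabla_\bfx\Phi_{s,\tau}(\bfx)\;\rmd\tau.
\end{equation}
The integrand is continuous in $\tau$, so the right side is $\C^1$ in $t$, and the fundamental theorem of calculus gives $\tfrac{\rmd\bm J_s}{\rmd t}(t) = \nabla_\bfx\bsf\bigl(t,\Phi_{s,t}(\bfx)\bigr)\bm J_s(t)$, while evaluating at $t=s$ reads off $\bm J_s(s)=\bm I$. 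Specializing the base point to $\bfx=\bfx(s)$ and using $\Phi_{s,t}(\bfx(s))=\bfx(t)$ recovers the asserted equation.

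The genuine obstacle is the first step: rigorously justifying $\C^1$-dependence of the flow on its initial condition, and with it the interchange of limit and integral. I would either cite the classical statement (Hartman, \emph{Ordinary Differential Equations}; Coddington--Levinson) or argue directly: for a direction $\bfv$, the difference quotient $\bm{q}_h(t)\coloneq h^{-1}\bigl(\Phi_{s,t}(\bfx+h\bfv)-\Phi_{s,t}(\bfx)\bigr)$ satisfies an integral equation which, after a first-order Taylor expansion of $\bsf$ with remainder made uniformly small via continuity of $\nabla_\bfx\bsf$ on compacta, differs from the candidate linear equation $\bm{q}(t)=\bfv+\int_s^t\nabla_\bfx\bsf(\tau,\Phi_{s,\tau}(\bfx))\bm{q}(\tau)\,\rmd\tau$ by a term that is $o(1)$ uniformly on $[s,T]$; Gr\"onwall's inequality then forces $\bm{q}_h\to\bm{q}$, which simultaneously establishes existence of $\nabla_\bfx\Phi_{s,t}(\bfx)$ and identifies $\nabla_\bfx\Phi_{s,t}(\bfx)\bfv$ with $\bm{q}(t)$. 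Letting $\bfv$ range over a basis yields the integral equation above, and everything from there is bookkeeping.
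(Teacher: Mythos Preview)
Your proposal is correct and follows the classical textbook route (Picard integral form, differentiable dependence on initial data, differentiate under the integral, FTC, with the $\C^1$-dependence justified via difference quotients and Gr\"onwall). The paper, however, does not actually prove this lemma: it presents it as a well-known result and simply cites references (Friz--Victoir for controlled and rough differential equations, and Kidger's thesis for neural ODEs), then uses it as a black box in the proof of \cref{thm:jacobians_aggp}. So there is nothing to compare against; your argument is the standard one those references contain, and it is more than the paper itself supplies.
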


\begin{remark}
    This result is well known and has been extended to \textit{controlled differential equations} \parencite[Theorem 4.4]{friz2010multidimensional} and \textit{rough differential equations} \parencite[Theorem 11.3]{friz2010multidimensional}.
    \textcite[Theorem 5.8]{kidger_thesis} discusses this result for neural ODEs.
\end{remark}

\subsection{Proof of \texorpdfstring{\cref{thm:jacobians_aggp}}{Jacobian matrices of affine Gaussian probability paths}}
\label{proof:jacobians_aggp}

We restate \cref{thm:jacobians_aggp} below.
\begin{theorembox}\jacobiansaggp*\end{theorembox}

This proof follows a similar technique to that used by \textcite{blasingame2024adjointdeis} to simplify adjoint equations for diffusion models using exponential integrators.

\begin{proof}
Now recall \cref{lemma:forward_sensitivities_ode} which discusses the dynamics of Jacobian matrices for flows, rewriting this as an integral equation yields:
\begin{equation}
    \label{eq:adjoint_flow_sensitivities}
    \nabla_\bfx \Phi_{s,t}^\theta(\bfx) = \bm I + \int_s^t \nabla_{\bfx_u} \bfu_u^\theta(\Phi_{s,u}^\theta(\bfx)) \nabla_{\bfx} \Phi_{s,u}^\theta(\bfx)\;\rmd u.
\end{equation}
Now recall the definition of the marginal vector field in terms of the target prediction model (\cf \cref{eq:vector_field_denoiser}) which we use to rewrite \cref{eq:adjoint_flow_sensitivities} as
\begin{align}
    \nabla_\bfx \Phi_{s,t}^\theta(\bfx) &= \bm I + \int_s^t \nabla_{\bfx_u} a_u\Phi_{s,u}^\theta(\bfx) \nabla_{\bfx} \Phi_{s,u}^\theta(\bfx) + \nabla_{\bfx_u} b_u \bfx_{1|u}^\theta(\Phi_{s,u}^\theta(\bfx)) \nabla_\bfx \Phi_{s,u}^\theta(\bfx)\;\rmd u,\nonumber\\
    \label{eq:adjoint_flow_sensitivities2}
                                        &\stackrel{(i)}= \bm I + \int_s^t a_u \nabla_\bfx\Phi_{s,u}^\theta(\bfx) + b_u\nabla_{\bfx_u} \bfx_{1|u}^\theta(\Phi_{s,u}^\theta(\bfx)) \nabla_\bfx \Phi_{s,u}^\theta(\bfx)\;\rmd u,
\end{align}
where (i) holds by $\nabla_{\bfx_u} \Phi_{s,u}^\theta(\bfx) = \bm I$.
Next we can make use of the popular technique of \textit{exponential integrators} to simplify \cref{eq:adjoint_flow_sensitivities} in combination with \cref{eq:vector_field_denoiser}.
Thus, the integral equation in \cref{eq:adjoint_flow_sensitivities2} becomes
\begin{equation}
    \label{eq:adjoint_step1}
    \nabla_\bfx \Phi_{s,t}^\theta(\bfx) = \Lambda_a(s,t)\bm I + \int_s^t \Lambda_a(u,t)b_u\nabla_{\bfx_u} \bfx_{1|u}^\theta(\Phi_{s,u}^\theta(\bfx))\nabla_{\bfx} \Phi_{s,u}^\theta(\bfx)\;\rmd u,
\end{equation}
where $\Lambda_a(s,t) \coloneq \exp \int_s^t a_u\; \rmd u$ is the integrating factor.
This simplifies to $\Lambda_a(s,t) = \sigma_t / \sigma_s$.
Using this, \cref{eq:adjoint_step1} can be simplified to 
\begin{equation}
    \label{eq:adjoint_step2}
    \nabla_\bfx \Phi_{s,t}^\theta(\bfx) = \frac{\sigma_t}{\sigma_s}\bm I + \sigma_t\int_s^t \frac{b_u}{\sigma_u}\nabla_{\bfx_u} \bfx_{1|u}^\theta(\Phi_{s,u}^\theta(\bfx))\nabla_{\bfx} \Phi_{s,u}^\theta(\bfx)\;\rmd u.
\end{equation}
Now we can apply \cref{prop:gradient_target_pred} to further simplify \cref{eq:adjoint_step2} to find
\begin{equation}
    \label{eq:adjoint_step3}
    \nabla_\bfx \Phi_{s,t}^\theta(\bfx) = \frac{\sigma_t}{\sigma_s}\bm I + \sigma_t\int_s^t \frac{\alpha_u}{\sigma_u^3}b_u \var_{1|u}(\Phi_{s,u}^\theta(\bfx))\nabla_{\bfx} \Phi_{s,u}^\theta(\bfx)\;\rmd u.
\end{equation}
Next we simplify the coefficient $\alpha_u b_u / \sigma_u^3$ in the integral term.
Let $\gamma_t \coloneq \alpha_t / \sigma_t$ equal the signal-to-noise-ratio.
Then we observe
\begin{align}
    b_t \frac{\alpha_t}{\sigma_t^3} &= \left(\dot\alpha_t - \alpha_t \frac{\dot\sigma_t}{\sigma_t}\right) \frac{\alpha_t}{\sigma_t^3},\nonumber\\
                                    &= \frac{\dot\alpha_t\sigma_t - \dot\sigma_t\alpha_t}{\sigma_t^3} \frac{\alpha_t}{\sigma_t^2},\nonumber\\
                                    &\stackrel{(i)}=  \frac{\rmd}{\rmd t}\left[\frac{\alpha_t}{\sigma_t}\right] \frac{\alpha_t}{\sigma_t} \frac{1}{\sigma_t},\nonumber\\
                                    &\stackrel{(ii)}= \dot\gamma_t \frac{\gamma_t}{\sigma_t},
\end{align}
where (i) holds by the quotient rule and (ii) holds by definition of $\gamma_t$.
Using this simplification we can perform a change-of-variables to simplify the gradient resulting in
\begin{equation}
    \nabla_\bfx \Phi_{s,t}^\theta(\bfx) = \frac{\sigma_t}{\sigma_s}\bm I + \sigma_t\int_s^t \dot\gamma_u \frac{\gamma_u}{\sigma_u} \var_{1|u}(\Phi_{s,u}^\theta(\bfx))\nabla_{\bfx} \Phi_{s,u}^\theta(\bfx)\;\rmd u.
\end{equation}
\end{proof}

\begin{remark}
    \label{remark:diff_in_jacobian_thms}
    Readers familiar with the work of \textcite{ben-hamu2024dflow} may notice some similarities between our result \cref{thm:jacobians_aggp} and \textcite[Theorem 4.2]{ben-hamu2024dflow}.
    The difference between the two is that the former is a simplified integral equation; whereas, the latter is the exact solution and no longer requires solving an ODE.
    However, this later solution does require solving a time-ordered exponential which requires a formal truncated series expansion, \eg, Magnus expansion.
\end{remark}

\cref{thm:jacobians_aggp} is closely related to \textcite[Theorem 4.2]{ben-hamu2024dflow} which we restate below within the context of our notational conventions.\footnote{With abuse of notation let $\dot\gamma_t^2$ denote the time derivative of $\gamma_t^2$.}
\begin{theorem}
    \label{thm:benhamu_jacobians}
    For the standard affine Gaussian probability path, the differential of $\Phi_{0,1}^\theta(\bfx)$ as of function of $\bfx$ is
    \begin{equation}
        \nabla_{\bfx} \Phi_{0,1}^\theta(\bfx) = \sigma_1 \mathcal T \exp \left[\int_0^1 \frac 12 \dot\gamma_t^2 \var_{1|t}(\bfx) \;\rmd t\right],
    \end{equation}
    where $\mathcal T \exp$ denotes the time-ordered exponential.
\end{theorem}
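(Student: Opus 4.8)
The plan is to obtain \cref{thm:benhamu_jacobians} by solving in closed form the linear matrix ODE that governs the Jacobian $\bm M(t)\coloneq\nabla_\bfx\Phi_{0,t}^\theta(\bfx)$. By \cref{lemma:forward_sensitivities_ode}, $\bm M$ satisfies $\bm M(0)=\bm I$ and $\dot{\bm M}(t)=\nabla_\bfx\bfu_t^\theta(\Phi_{0,t}^\theta(\bfx))\,\bm M(t)$ (equivalently, one may differentiate the Volterra equation of \cref{thm:jacobians_aggp} at $s=0$). Substituting the affine form $\bfu_t^\theta(\bfx)=a_t\bfx+b_t\bfx_{1|t}^\theta(\bfx)$ gives $\nabla_\bfx\bfu_t^\theta(\bfx)=a_t\bm I+b_t\nabla_\bfx\bfx_{1|t}^\theta(\bfx)$, and \cref{prop:gradient_target_pred} turns the second term into $b_t\tfrac{\alpha_t}{\sigma_t^2}\var_{1|t}(\cdot)$. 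Reusing the identity established inside the proof of \cref{thm:jacobians_aggp} ($b_t\alpha_t/\sigma_t^3=\dot\gamma_t\gamma_t/\sigma_t$, hence $b_t\alpha_t/\sigma_t^2=\dot\gamma_t\gamma_t=\tfrac12\dot\gamma_t^2$), the coefficient matrix is $\bm A(t)=a_t\bm I+\tfrac12\dot\gamma_t^2\,\var_{1|t}(\Phi_{0,t}^\theta(\bfx))$.

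Next I would represent the solution of $\dot{\bm M}=\bm A(t)\bm M$, $\bm M(0)=\bm I$, as the time-ordered exponential $\bm M(t)=\mathcal T\exp[\int_0^t\bm A(u)\,\rmd u]$ and peel off the central part $a_u\bm I$. Since a scalar multiple of the identity commutes with every matrix, rescaling $\tilde{\bm M}(t)\coloneq\exp(-\int_0^t a_u\,\rmd u)\bm M(t)$ gives $\tilde{\bm M}(0)=\bm I$ and $\dot{\tilde{\bm M}}(t)=\tfrac12\dot\gamma_t^2\,\var_{1|t}(\Phi_{0,t}^\theta(\bfx))\,\tilde{\bm M}(t)$, so that $\bm M(t)=\exp(\int_0^t a_u\,\rmd u)\,\mathcal T\exp[\int_0^t\tfrac12\dot\gamma_u^2\,\var_{1|u}(\Phi_{0,u}^\theta(\bfx))\,\rmd u]$. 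The integrating factor is the one already computed in the proof of \cref{prop:exact_sol_flow}: $\exp\int_0^t a_u\,\rmd u=\sigma_t/\sigma_0=\sigma_t$. Evaluating at $t=1$ delivers $\nabla_\bfx\Phi_{0,1}^\theta(\bfx)=\sigma_1\,\mathcal T\exp[\int_0^1\tfrac12\dot\gamma_t^2\,\var_{1|t}(\Phi_{0,t}^\theta(\bfx))\,\rmd t]$, matching the statement (where the argument written as $\bfx$ abbreviates the trajectory point $\Phi_{0,t}^\theta(\bfx)$, which equals $\bfx$ at $t=0$).

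The one step requiring genuine care is the time-ordered-exponential manipulation: $\mathcal T\exp$ does not satisfy a sum-in-the-exponent rule because the matrices $\var_{1|u}(\Phi_{0,u}^\theta(\bfx))$ at different times need not commute, so the ordering cannot be discarded. What is valid is exactly the factorization above, and only because $a_u\bm I$ is central; I would therefore present the rescaling/change-of-variables derivation explicitly rather than invoking a generic product formula. The remaining ingredients — the split $\nabla_\bfx\bfu_t^\theta=a_t\bm I+b_t\nabla_\bfx\bfx_{1|t}^\theta$, the algebraic simplification of $b_t\alpha_t/\sigma_t^2$, and the integrating-factor evaluation — are routine and essentially already appear in the proofs of \cref{prop:exact_sol_flow} and \cref{thm:jacobians_aggp}.
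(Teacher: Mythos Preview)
The paper does not prove \cref{thm:benhamu_jacobians}; it merely restates \textcite[Theorem 4.2]{ben-hamu2024dflow} in the present notation, so there is no in-paper proof to compare against. Your derivation is correct and is the natural route: starting from the forward-sensitivity ODE of \cref{lemma:forward_sensitivities_ode}, inserting the affine decomposition $\nabla_\bfx\bfu_t^\theta=a_t\bm I+b_t\nabla_\bfx\bfx_{1|t}^\theta$, invoking \cref{prop:gradient_target_pred}, and using $b_t\alpha_t/\sigma_t^2=\gamma_t\dot\gamma_t=\tfrac12\dot\gamma_t^2$ (with the paper's footnoted convention that $\dot\gamma_t^2$ means $\tfrac{\rmd}{\rmd t}\gamma_t^2$) yields the linear matrix ODE $\dot{\bm M}=(a_t\bm I+\tfrac12\dot\gamma_t^2\var_{1|t})\bm M$. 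Your key step---peeling off the central scalar $a_t\bm I$ via the rescaling $\tilde{\bm M}=e^{-\int_0^t a_u\,\rmd u}\bm M$ rather than appealing to an (invalid) sum-in-the-exponent identity for $\mathcal T\exp$---is exactly the right manoeuvre, and the integrating factor $e^{\int_0^1 a_u\,\rmd u}=\sigma_1/\sigma_0=\sigma_1$ is the same one computed in the proof of \cref{prop:exact_sol_flow}. Nothing is missing.
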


The time-ordered exponential\footnote{This is closely related to the \textit{Peano-Baker series} \parencite[see][Section 7.5]{frazer1938elementary}.} \parencite{grossman1972non} is defined as
\begin{equation}
    \begin{aligned}
        \mathcal T \exp \left[\int_t^1 \bfA(s)\; \rmd s\right] &= \sum_{n=0}^\infty \frac{1}{n!} \int_t^1 \rmd s_1 \cdots \int_t^1 \rmd s_n\;\; \mathcal T \{\bfA(s_1) \ldots \bfA(s_n)\},\\
                                                               &= \sum_{n=0}^\infty \int_t^1 \rmd s_1 \int_t^{s_1} \rmd s_2 \cdots \int_t^{s_{n-1}} \rmd s_n\;\; \bfA(s_1) \bfA(s_2)\ldots \bfA(s_n),
    \end{aligned}
\end{equation}
and the solution can be found the Dyson series \parencite{sakurai2020modern} or Magnus expansion \parencite{magnus1954exponential}, which are truncated in practice.
The meta-operator $\mathcal T$ denotes the time-ordering \parencite{dyson1949radiation}, \eg, consider the time-ordering of two operators $\bm A, \bm B$:
\begin{equation}
    \mathcal T \{\bfA(s_1)\bm B(s_2)\} \coloneq \left \{ 
        \begin{aligned}
            &\bfA(s_1)\bm B(s_2) \qquad &\textrm{if } s_1 > s_2,\\
            \pm &\bm B(s_2)\bm A(s_1) & \textrm{otherwise}.
        \end{aligned}
    \right.
\end{equation}
For more details we refer the reader to \textcite{weinberg1995quantum}.

\subsection{Dynamics of gradient guidance}
\label{app:dyn_grad_guidance}

We state this more formally below in \cref{prop:dyn_grad_guidance}.
\begin{theorembox}
    \begin{restatable}[Dynamics of gradient guidance]{proposition}{dyngradguidance}
        \label{prop:dyn_grad_guidance}
        Consider the standard affine Gaussian probability paths model trained to zero loss.
        The Gateaux differential of $\bfx$ at some time $t \in [0, 1]$ in the direction of the gradient $\nabla_\bfx \mathcal L\left(\Phi_{t,1}^\theta(\bfx)\right)$ is given by
        \begin{equation}
            \label{eq:sensitivity_gradient}
            \delta_\bfx \Phi_{t,1}^\theta(\bfx) = -\nabla_\bfx\Phi_{t,1}^\theta(\bfx) \nabla_\bfx\Phi_{t,1}^\theta(\bfx)^\top \nabla_{\bfx_1} \mathcal L(\bfx_1).
        \end{equation}
    \end{restatable}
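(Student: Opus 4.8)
The plan is to read off the Gateaux differential directly from its definition as a directional derivative, and then repackage the result using the chain-rule identity already recorded in \eqref{eq:chain_rule_gradient_flow}. Concretely, the quantity of interest is $\delta_\bfx \Phi_{t,1}^\theta(\bfx) = \frac{\rmd}{\rmd\eta}\big|_{\eta=0}\,\Phi_{t,1}^\theta\big(\bfx - \eta\,\nabla_\bfx \mathcal L(\Phi_{t,1}^\theta(\bfx))\big)$, i.e.\ the first-order effect on the flow output of the gradient-descent update $\bfx' = \bfx - \eta\,\nabla_\bfx \mathcal L(\Phi_{t,1}^\theta(\bfx))$ used throughout this section.

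First I would dispatch the regularity bookkeeping: since $\bfu^\theta \in \mathcal C^{1,1}([0,1]\times\R^d;\R^d)$, the associated flow $\Phi_{t,1}^\theta$ is continuously differentiable in its spatial argument — this is exactly the $\mathcal C^r$-flow property with $r\geq 1$, and is made quantitative by the forward-sensitivity equation of \cref{lemma:forward_sensitivities_ode}. Consequently, for any direction $\bfv\in\R^d$ the Gateaux differential of $\bfx\mapsto\Phi_{t,1}^\theta(\bfx)$ exists and coincides with the Jacobian–vector product $\nabla_\bfx\Phi_{t,1}^\theta(\bfx)\,\bfv$. Taking $\bfv = -\nabla_\bfx \mathcal L(\Phi_{t,1}^\theta(\bfx))$ (the descent direction, which is the source of the leading minus sign in the statement) gives immediately
\[
\delta_\bfx \Phi_{t,1}^\theta(\bfx) = -\,\nabla_\bfx\Phi_{t,1}^\theta(\bfx)\,\nabla_\bfx \mathcal L\big(\Phi_{t,1}^\theta(\bfx)\big).
\]

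Next I would substitute the chain-rule expansion from \eqref{eq:chain_rule_gradient_flow}, namely $\nabla_\bfx \mathcal L(\Phi_{t,1}^\theta(\bfx)) = \nabla_\bfx\Phi_{t,1}^\theta(\bfx)^\top\,\nabla_{\bfx_1}\mathcal L(\bfx_1)$ with $\bfx_1 \coloneq \Phi_{t,1}^\theta(\bfx)$, to obtain
\[
\delta_\bfx \Phi_{t,1}^\theta(\bfx) = -\,\nabla_\bfx\Phi_{t,1}^\theta(\bfx)\,\nabla_\bfx\Phi_{t,1}^\theta(\bfx)^\top\,\nabla_{\bfx_1}\mathcal L(\bfx_1),
\]
which is the claimed identity.

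I do not expect any genuine obstacle here: the content is essentially the chain rule phrased as a statement about directional derivatives. The only points needing care are (i) confirming $\Phi_{t,1}^\theta$ is $\mathcal C^1$ in $\bfx$ so that the Gateaux differential equals the Jacobian–vector product and the chain rule is licensed (handled via \cref{lemma:forward_sensitivities_ode}), and (ii) being explicit about the sign convention, i.e.\ that the differential is taken along $-\nabla_\bfx\mathcal L(\Phi_{t,1}^\theta(\bfx))$ consistent with the gradient step $\bfx' = \bfx - \eta\,\nabla_\bfx\mathcal L(\Phi_{t,1}^\theta(\bfx))$. The zero-loss hypothesis and the covariance structure of $\nabla_\bfx\Phi_{t,1}^\theta$ (\cref{thm:jacobians_aggp}) are not needed for the proof itself; they serve only the interpretive remark that follows the statement.
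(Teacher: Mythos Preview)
Your proposal is correct and follows essentially the same approach as the paper: unpack the Gateaux differential from its definition, apply the chain rule to obtain $-\nabla_\bfx\Phi_{t,1}^\theta(\bfx)\,\nabla_\bfx\mathcal L(\Phi_{t,1}^\theta(\bfx))$, and then substitute \eqref{eq:chain_rule_gradient_flow}. Your extra remarks on regularity (via \cref{lemma:forward_sensitivities_ode}) and on the irrelevance of the zero-loss hypothesis to the argument itself are accurate and slightly more explicit than the paper's treatment.
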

\end{theorembox}

Thus the behavior of $\bfx_1$ when guided by $\mathcal L$ is determined by the operator $\nabla_\bfx \Phi_{t,1}^\theta(\bfx)$ which iteratively projects the gradient of the loss function by the covariance matrix $\var_{1|t}(\bfx)$.
Put another way:
\begin{standoutbox}
    Performing gradient guidance with $\mathcal L$ at time $t < 1$ amounts to guidance which follows the target distribution $p(\bfX_1)$ by projecting $\nabla_{\bfx_1} \mathcal L(\bfx_1)$ onto to the target distribution via the local covariance matrix.
\end{standoutbox}

It is for this reason that it is undesirable to simply perform guidance in the data space as we are likely to deviate from this target distribution.
From \cref{eq:sensitivity_gradient} we know that applying the gradient at earlier timesteps causes the initial gradient $\nabla_{\bfx_1}\mathcal L(\bfx_1)$ to be projected into high-variance directions of the target distribution causing the guided sample to stay closer to the true target distribution.

The next question is: how does $\bfx_1$ change when $\bfx$ is updated with our greedy guidance strategy?

\subsection{Proof of \texorpdfstring{\cref{prop:dyn_grad_guidance}}{dynamics of gradient guidance}}
\label{proof:dyn_grad_guidance}

We restate \cref{prop:dyn_grad_guidance} below.

\begin{theorembox}\dyngradguidance*\end{theorembox}

\begin{proof}
    This can be shown from a straightforward derivation:
    \begin{align}
        \delta_\bfx \Phi_{t,1}^\theta(\bfx) &\stackrel{(i)}= \frac{\rmd}{\rmd \eta} \bigg |_{\eta = 0} \Phi_{t,1}^\theta\left(\bfx - \eta \nabla_\bfx \mathcal L\left(\Phi_{t,1}^\theta(\bfx)\right)\right),\nonumber\\
                                               &\stackrel{(ii)}= -\nabla_\bfx\Phi_{t,1}^\theta\left(\bfx - \eta \nabla_\bfx \mathcal L\left(\Phi_{t,1}^\theta(\bfx)\right)\right) \nabla_\bfx \mathcal L\left(\Phi_{t,1}^\theta(\bfx)\right)\bigg |_{\eta = 0}, \nonumber\\
                                               &= -\nabla_\bfx\Phi_{t,1}^\theta(\bfx) \nabla_\bfx \mathcal L\left(\Phi_{t,1}^\theta(\bfx)\right), \nonumber\\
                                               &\stackrel{(iii)}= -\nabla_\bfx\Phi_{t,1}^\theta(\bfx) \nabla_\bfx\Phi_{t,1}^\theta(\bfx)^\top \nabla_{\bfx_1} \mathcal L(\bfx_1),
    \end{align}
    where (i) holds by the definition of the Gateaux differential, (ii) holds by the chain rule, and (iii) holds by a substitution of \cref{eq:chain_rule_gradient_flow} with the simplification of $\bfx_1 = \Phi_{t,1}^\theta(\bfx)$.
\end{proof}

\subsection{Proof of \texorpdfstring{\cref{prop:dyn_greedy_guidance}}{dynamics of greedy gradient guidance}}
\label{proof:dyn_greedy_guidance}

We restate \cref{prop:dyn_greedy_guidance} below.

\begin{theorembox}\dyngreedyguidance*\end{theorembox}
\begin{proof}
    This can be shown from a straightforward derivation:
    \begin{align}
        \delta_\bfx^{\mathcal G} \Phi_{t,1}^\theta(\bfx) &\stackrel{(i)}= \frac{\rmd}{\rmd \eta} \bigg |_{\eta = 0} \Phi_{t,1}^\theta\left(\bfx - \eta \nabla_\bfx \mathcal L\left(\bfx_{1|t}^\theta(\bfx)\right)\right),\nonumber\\
                                               &\stackrel{(ii)}= -\nabla_\bfx\Phi_{t,1}^\theta\left(\bfx - \eta \nabla_\bfx \mathcal L\left(\Phi_{t,1}^\theta(\bfx)\right)\right) \nabla_\bfx \mathcal L\left(\bfx_{1|t}^\theta(\bfx)\right)\bigg |_{\eta = 0}, \nonumber\\
                                               &= -\nabla_\bfx\Phi_{t,1}^\theta(\bfx) \nabla_\bfx \mathcal L\left(\bfx_{1|t}^\theta(\bfx)\right), \nonumber\\
                                               &\stackrel{(iii)}= -\nabla_\bfx\Phi_{t,1}^\theta(\bfx) \nabla_\bfx\bfx_{1|t}^\theta(\bfx)^\top \nabla_{\bfx_1} \mathcal L(\bfx_1),
    \end{align}
    where (i) holds by the definition of the Gateaux differential, (ii) holds by the chain rule, and (iii) holds by the chain rule.
\end{proof}

We note an interesting corollary below.
\begin{corollary}[Dynamics of gradient vs greedy guidance]
    \label{corr:dynamics_greed_vs_gradient}
    The difference between the dynamics of gradient guidance in \cref{prop:dyn_grad_guidance} and greedy gradient guidance in \cref{prop:dyn_greedy_guidance} for a point $\bfx$ at time $t$ with guidance function $\mathcal L \in \C^1(\R^d)$ is
    \begin{equation}
        \left\| \delta_\bfx \Phi_{t,1}^\theta(\bfx) - \delta_\bfx^{\mathcal G} \Phi_{t,1}^\theta(\bfx)\right\| = \left\|\nabla_\bfx \Phi_{t,1}^\theta(\bfx) \left(\nabla_\bfx \Phi_{t,1}^\theta(\bfx) - \nabla_\bfx \bfx_{1|t}^\theta(\bfx) \right)^\top \nabla_{\bfx_1}\mathcal L(\bfx_1)\right\|.
    \end{equation}
\end{corollary}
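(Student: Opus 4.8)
The plan is to obtain the identity by directly subtracting the two closed-form Gateaux differentials from \cref{prop:dyn_grad_guidance,prop:dyn_greedy_guidance} and then taking Euclidean norms; no tools beyond those two propositions are required.

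First I would write the two expressions side by side: by \cref{prop:dyn_grad_guidance}, $\delta_\bfx \Phi_{t,1}^\theta(\bfx) = -\nabla_\bfx\Phi_{t,1}^\theta(\bfx)\,\nabla_\bfx\Phi_{t,1}^\theta(\bfx)^\top \nabla_{\bfx_1}\mathcal L(\bfx_1)$, and by \cref{prop:dyn_greedy_guidance}, $\delta_\bfx^{\mathcal G}\Phi_{t,1}^\theta(\bfx) = -\nabla_\bfx\Phi_{t,1}^\theta(\bfx)\,\nabla_\bfx\bfx_{1|t}^\theta(\bfx)^\top\nabla_{\bfx_1}\mathcal L(\bfx_1)$. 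Both are evaluated at the same point $\bfx$, hence share the common left factor $\nabla_\bfx\Phi_{t,1}^\theta(\bfx)$ and the common right factor $\nabla_{\bfx_1}\mathcal L(\bfx_1)$ (with $\bfx_1 = \Phi_{t,1}^\theta(\bfx)$). Subtracting and factoring these out gives $\delta_\bfx\Phi_{t,1}^\theta(\bfx) - \delta_\bfx^{\mathcal G}\Phi_{t,1}^\theta(\bfx) = -\nabla_\bfx\Phi_{t,1}^\theta(\bfx)\bigl(\nabla_\bfx\Phi_{t,1}^\theta(\bfx)^\top - \nabla_\bfx\bfx_{1|t}^\theta(\bfx)^\top\bigr)\nabla_{\bfx_1}\mathcal L(\bfx_1)$. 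Then I would invoke linearity of the transpose, $\nabla_\bfx\Phi_{t,1}^\theta(\bfx)^\top - \nabla_\bfx\bfx_{1|t}^\theta(\bfx)^\top = \bigl(\nabla_\bfx\Phi_{t,1}^\theta(\bfx) - \nabla_\bfx\bfx_{1|t}^\theta(\bfx)\bigr)^\top$, substitute it in, and take $\|\cdot\|$ of both sides, absorbing the overall sign via $\|-\bfv\| = \|\bfv\|$; this is exactly the claimed equality.

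I do not expect a genuine obstacle here: the statement is a purely algebraic rearrangement of the two preceding propositions, so the only care needed is bookkeeping — confirming that the factored-out quantities are literally identical in the two differentials and that the transpose distributes over the difference. As an optional closing remark one could chain this identity with \cref{thm:grad_vs_greed} and submultiplicativity of the operator norm to get $\|\delta_\bfx\Phi_{t,1}^\theta(\bfx) - \delta_\bfx^{\mathcal G}\Phi_{t,1}^\theta(\bfx)\| \le \|\nabla_\bfx\Phi_{t,1}^\theta(\bfx)\|\cdot\mathcal O(h^2)\cdot\|\nabla_{\bfx_1}\mathcal L(\bfx_1)\|$, which makes explicit that the discrepancy between the two guidance dynamics inherits the $\mathcal O(h^2)$ local truncation scaling.
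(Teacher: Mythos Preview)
Your proposal is correct and matches the paper's treatment: the paper states this result as an immediate corollary of \cref{prop:dyn_grad_guidance,prop:dyn_greedy_guidance} without giving a separate proof, and your direct subtraction-and-factoring argument is exactly the implicit reasoning. Your closing remark also aligns with how the paper uses this corollary in the proof of \cref{thm:grad_vs_greed}.
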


\subsection{Proof of \texorpdfstring{\cref{thm:grad_vs_greed}}{dynamics of gradient vs greedy guidance}}
\label{proof:grad_vs_greed}

We restate \cref{thm:grad_vs_greed} below.

\begin{theorembox}\gradvsgreed*\end{theorembox}

\begin{proof}
    From \cref{corr:dynamics_greed_vs_gradient} it is clear that the difference between $\delta_\bfx \Phi_{t,1}^\theta(\bfx)$ and $\delta_\bfx^{\mathcal G} \Phi_{t,1}^\theta(\bfx)$ amounts to the difference between the true gradient and gradient of the target prediction model.
    Recall \cref{thm:jacobians_aggp} which enables to write the gradient as the solution to an integral equation:
    \begin{equation}
        \nabla_\bfx \Phi_{t,1}^\theta(\bfx) = \frac{\sigma_1}{\sigma_t}\bm I + \sigma_1\int_t^1 \dot\gamma_u \frac{\gamma_u}{\sigma_u} \var_{1|u}(\Phi_{s,u}^\theta(\bfx))\nabla_{\bfx} \Phi_{t,u}^\theta(\bfx)\;\rmd u.
    \end{equation}
    Now as $\sigma_t \to 0$ as $t \to 1$, we can simplify the integral equation
    \begin{equation}
        \nabla_\bfx \Phi_{t,1}^\theta(\bfx) = \sigma_1\int_t^1 \dot\gamma_u \frac{\gamma_u}{\sigma_u} \var_{1|u}(\Phi_{t,u}^\theta(\bfx))\nabla_{\bfx} \Phi_{t,u}^\theta(\bfx)\;\rmd u,
    \end{equation}
    and then by rewriting the integral in terms of $\rmd \gamma = \dot \gamma_u \rmd u$ we find
    \begin{equation}
        \label{eq:app:grad_flow1}
        \nabla_\bfx \Phi_{t,1}^\theta(\bfx) = \sigma_1\int_{\gamma_t}^{\gamma_1} \frac{\gamma}{\sigma_\gamma} \var_{1|\gamma}(\Phi_{\gamma_t,\gamma}^\theta(\bfx))\nabla_{\bfx} \Phi_{\gamma_t,\gamma}^\theta(\bfx)\;\rmd \gamma.
    \end{equation}
    Next we take a first-order Taylor expansion of $\frac{1}{\sigma_\gamma} \var_{1|\gamma}(\Phi_{\gamma_t,\gamma}^\theta(\bfx))\nabla_{\bfx} \Phi_{\gamma_t,\gamma}^\theta(\bfx)$ centered at $\gamma_t$ which yields:
    \begin{equation}
        \label{eq:app:taylor_grad_flow}
        \frac{\gamma}{\sigma_\gamma} \var_{1|\gamma}(\Phi_{\gamma_t,\gamma}^\theta(\bfx))\nabla_{\bfx} \Phi_{\gamma_t,\gamma}^\theta(\bfx) = \frac{\gamma_t}{\sigma_t} \var_{1|t}(\bfx) + \mathcal O(\gamma - \gamma_t).
    \end{equation}
    For this analysis, it is actually more convenient to include the $\gamma$ term as part of the Taylor expansion rather than computing it in closed form in the integral.
    Now plugging \cref{eq:app:taylor_grad_flow} into \cref{eq:app:grad_flow1} yields
    \begin{align}
        \nabla_\bfx \Phi_{t,1}^\theta(\bfx) &= \sigma_1\int_{\gamma_t}^{\gamma_1} \gamma \frac{1}{\sigma_t} \var_{1|t}(\bfx) + \mathcal O(\gamma - \gamma_t) \;\rmd \gamma,\nonumber\\
                                            &\stackrel{(i)}= \sigma_1 \frac{\gamma_t}{\sigma_t} \var_{1|t}(\bfx) \int_{\gamma_t}^{\gamma_1}\;\rmd \gamma + \mathcal O(h^2),\nonumber\\
                                            &= \sigma_1 \frac{\gamma_t}{\sigma_t} \var_{1|t}(\bfx) \left(\gamma_1 - \gamma_t\right) + \mathcal O(h^2),
    \end{align}
    where (i) holds with $h \coloneq \gamma_1 - \gamma_t$.
    Then, with a little algebra we have
    \begin{align}
        \nabla_\bfx \Phi_{t,1}^\theta(\bfx) &= \sigma_1 \frac{\alpha_t}{\sigma_t^2}  \left(\gamma_1 - \gamma_t\right)  \var_{1|t}(\bfx)+ \mathcal O(h^2),\nonumber\\
                                            &= \sigma_1 \frac{\alpha_t}{\sigma_t^2}  \left(\frac{\alpha_1}{\sigma_1} - \frac{\alpha_t}{\sigma_t}\right)  \var_{1|t}(\bfx)+ \mathcal O(h^2),\nonumber\\
                                            &=\frac{\alpha_t}{\sigma_t^2}  \left(\alpha_1 - \sigma_1\frac{\alpha_t}{\sigma_t}\right)  \var_{1|t}(\bfx)+ \mathcal O(h^2),\nonumber\\
                                            \label{eq:app:expanded_grad_flow}
                                            &\stackrel{(i)}=\frac{\alpha_t}{\sigma_t^2} \var_{1|t}(\bfx)+ \mathcal O(h^2),
    \end{align}
    where (i) holds by the boundary conditions of the schedule (\cf \cref{eq:schedule_boundaries}).
    Now recall \cref{prop:gradient_target_pred} which states:
    \begin{equation}
        \label{eq:app:grad_target}
        \nabla_\bfx \bfx_{1|t}^\theta(\bfx) = \frac{\alpha_t}{\sigma_t^2}\var_{1|t}(\bfx).
    \end{equation}
    Thus from \cref{eq:app:expanded_grad_flow} and \cref{eq:app:grad_target} it is easy to see that
    \begin{equation}
        \left\| \nabla_\bfx \Phi_{t,1}^\theta(\bfx) -  \nabla_\bfx \bfx_{1|t}^\theta(\bfx)\right\| = \mathcal O (h^2),
    \end{equation}
    holds and thus
    \begin{equation}
        \left\|\delta_\bfx \Phi_{t,1}^\theta(\bfx) - \delta_\bfx^{\mathcal G} \Phi_{t,1}^\theta(\bfx)\right\| = \mathcal O (h^2).
    \end{equation}
\end{proof}

\subsection{Proof of \texorpdfstring{ \cref{thm:convergence}}{greedy convergence}}
\label{proof:convergence}

We restate \cref{thm:convergence} below.

\begin{theorembox}\greedyconverge*\end{theorembox}

\begin{proof}
    By \cref{assump:total_derivs}, we can take a $(k-1)$-th order Taylor expansion around $\gamma_t$ of the flow in \cref{eq:exact_sol_flow} to obtain
    \begin{align}
        \Phi_{1|t}^\theta(\bfx_t) &= \frac{\sigma_1}{\sigma_t}\bfx_t + \sigma_1 \int_{\gamma_t}^{\gamma_1}\sum_{n=0}^{k-1} \frac{\rmd^n}{\rmd \gamma^n}\bigg [ \bfx_{1|\gamma}^\theta(\bfx_\gamma)\bigg]_{\gamma = \gamma_t} \frac{(\gamma - \gamma_t)^{n}}{n!}\;\rmd \gamma + \mathcal{O}(h^{k+1}),\nonumber\\
        &= \frac{\sigma_1}{\sigma_t}\bfx_t + \sigma_1 \sum_{n=0}^{k-1} \frac{\rmd^n}{\rmd \gamma^n}\bigg [ \bfx_{1|\gamma}^\theta(\bfx_\gamma)\bigg]_{\gamma = \gamma_t} \int_{\gamma_t}^{\gamma_1}\frac{(\gamma - \gamma_t)^{n}}{n!}\;\rmd \gamma + \mathcal{O}(h^{k+1}),\nonumber\\
        &= \frac{\sigma_1}{\sigma_t}\bfx_t + \sigma_1 \sum_{n=0}^{k-1} \frac{\rmd^n}{\rmd \gamma^n}\bigg [ \bfx_{1|\gamma}^\theta(\bfx_\gamma)\bigg]_{\gamma = \gamma_t} \frac{h^{n+1}}{(n+1)!} + \mathcal{O}(h^{k+1}),
    \end{align}
    where $h \coloneq \gamma_1 - \gamma_t$ is the stepsize.
    Let $k=1$, then we have:
    \begin{align}
        \Phi_{1|t}^\theta(\bfx_t) &= \frac{\sigma_1}{\sigma_t}\bfx_n + \sigma_1\hat\bfx_{1|t}(\bfx_t)h + \mathcal{O}(h^2),\\
                                  &= \frac{\sigma_1}{\sigma_t}\bfx_n + (\alpha_1 - \frac{\sigma_1\alpha_t}{\sigma_t})\hat\bfx_{1|t}(\bfx_t) + \mathcal{O}(h^2).
    \end{align}
    By definition $\sigma_1 = 0$ and $\alpha_1=1$, then
    \begin{equation}
        \Phi_{1|t}^\theta(\bfx_t) = \hat\bfx_{1|t}(\bfx_t) + \mathcal{O}(h^2),
    \end{equation}
    which is equivalent to
    \begin{equation}
        \left\|\Phi_{1|t}^\theta(\bfx_t) - \hat\bfx_{1|t}(\bfx_t)\right\| \leq C_1 h^2,
    \end{equation}
    for some constant $C_1 > 0$.
    Since $\bfx_{1|t}^\theta(\bfx_t^{(n)}) \to \bfx_1^*$ we know that for any $\epsilon > 0$ there exists some $n \geq N$ such that $\|\bfx_1^* - \bfx_{1|t}^\theta(\bfx_t^{(n)})\| < \epsilon$.
    Thus,
    \begin{equation}
        \left\|\Phi_{1|t}^\theta(\bfx_t^{(n)}) - \bfx_1^*\right\| \leq \left\|\Phi_{1|t}^\theta(\bfx_t^{(n)}) - \bfx_{1|t}^\theta(\bfx_t^{(n)})\right\| + \left\|\bfx_1^* - \bfx_{1|t}^\theta(\bfx_t^{(n)})\right\| < \underbrace{\epsilon + C_1h^2}_{\coloneq C_2}.
    \end{equation}
    Therefore, $\Phi_{1|t}(\bfx_t^{(n)})$ converges to a point inside a neighborhood centered at $\bfx_1^*$ with radius $\mathcal O(h^2)$.
\end{proof}

\section{Beyond Euler}
\label{app:beyond_euler}
In this section we provide the full proofs and derivations for \cref{sec:beyond_euler} in the main paper.

\subsection{Proof of \texorpdfstring{ \cref{thm:grad_vs_single_step}}{Local truncation error of discretize-then-optimize gradients}}

Before showing \cref{thm:grad_vs_single_step} we show a more general version below.
\begin{theorembox}
\begin{theorem}[Local truncation error of discretize-then-optimize gradients]
    \label{thm:lte_dto_grads}
    Let $\bm \Phi$ be an explict Runge-Kutta solver of order $\alpha > 0$ to the ODE
    \begin{equation}
        \bfx(0) = \bfx_0, \qquad \frac{\rmd \bfx}{\rmd t}(t) = \bfu_\theta(t, \bfx(t)),
    \end{equation}
    on $[0, T]$ which satisfies the regularity conditions for the Picard-Lindel\"of theorem.
    Let $\Phi_{s,t}^\theta(\bfx)$ denote the flow from $s$ to $t$, for any $s,t \in [0,T]$ admitted by the ODE.
    Then,
    \begin{equation}
        \left\|\nabla_\bfx \Phi_{s,t}^\theta(\bfx) - \nabla_\bfx \bm \Phi_{s,t}(\bfx) \right\| = \mathcal O (h^{\alpha + 1}).
    \end{equation}
\end{theorem}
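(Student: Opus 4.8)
The plan is to exploit the classical fact that \emph{differentiation commutes with discretization} for explicit Runge--Kutta methods: the Jacobian of one RK step of the original ODE is exactly one RK step --- with the same Butcher tableau --- of the \emph{augmented} system formed by the dynamics together with the variational (forward-sensitivity) equation. By \cref{lemma:forward_sensitivities_ode}, the true Jacobian $\bm J(t) \coloneq \nabla_\bfx \Phi_{s,t}^\theta(\bfx)$ is the second component of the exact solution of
\begin{equation}
    \dot\bfx_t = \bfu_\theta(t, \bfx_t), \qquad \dot{\bm J}(t) = \nabla_\bfx\bfu_\theta(t,\bfx_t)\,\bm J(t),
\end{equation}
with initial data $\bfx_s = \bfx$ and $\bm J(s) = \bm I$. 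So the first step is to show that $\nabla_\bfx \bm\Phi_{s,t}(\bfx)$ is precisely the second component of \emph{one} RK step of this augmented system. Writing the explicit stages $\bm{k}_i = \bfu_\theta\big(t + c_i h,\ \bfx + h\sum_{j<i} a_{ij}\bm{k}_j\big)$ and the update $\bm\Phi_{s,t}(\bfx) = \bfx + h\sum_i b_i \bm{k}_i$, the chain rule gives $\nabla_\bfx \bm{k}_i = \nabla_\bfx\bfu_\theta(\cdot)\big(\bm I + h\sum_{j<i} a_{ij}\nabla_\bfx\bm{k}_j\big)$ and $\nabla_\bfx\bm\Phi_{s,t}(\bfx) = \bm I + h\sum_i b_i\nabla_\bfx\bm{k}_i$; these are exactly the stage equations and update for the $\bm J$-component of the augmented system with tableau $(a_{ij}, b_i, c_i)$ started from $\bm J(s) = \bm I$, because the $\bfx$-stages of the augmented scheme coincide with those of the original scheme (the $\bfx$-dynamics do not depend on $\bm J$), so the arguments of $\nabla_\bfx\bfu_\theta$ produced by differentiation are the same stage points the augmented RK method uses. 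This identity is the step I expect to be the main obstacle --- not conceptually deep, but requiring careful bookkeeping of the Butcher tableau to confirm the stage arguments match.

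With the identity in hand the conclusion is immediate from standard RK theory. An explicit Runge--Kutta method of order $\alpha$ applied to a sufficiently smooth ODE has local truncation error $\mathcal O(h^{\alpha+1})$ in \emph{every} component of the solution, and the order conditions are properties of the tableau alone, independent of the particular (smooth) right-hand side; hence the augmented system is integrated to the same order $\alpha$. Applying this to the $\bm J$-component, whose exact value at time $t$ is $\nabla_\bfx\Phi_{s,t}^\theta(\bfx)$ by \cref{lemma:forward_sensitivities_ode} and whose numerical value is $\nabla_\bfx\bm\Phi_{s,t}(\bfx)$ by the identity above, yields $\|\nabla_\bfx\Phi_{s,t}^\theta(\bfx) - \nabla_\bfx\bm\Phi_{s,t}(\bfx)\| = \mathcal O(h^{\alpha+1})$.

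A few bookkeeping points I would record along the way. The augmented right-hand side involves $\nabla_\bfx\bfu_\theta$, so obtaining order $\alpha$ for the augmented scheme requires $\bfu_\theta$ to be roughly $\mathcal C^{\alpha+1}$ (equivalently, the variational right-hand side $\mathcal C^{\alpha}$); I would add this as a standing smoothness assumption, consistent with the regularity already imposed on flow models in \cref{app:more_flow}. Also, since the statement concerns the \emph{local} error over a single step with $h = t - s$, one compares the numerical Jacobian to the exact Jacobian of the augmented flow started from the same point $(\bfx, \bm I)$ at time $s$, and no error-accumulation argument is needed. Finally, \cref{thm:grad_vs_single_step} follows by specializing the interval to $[t, 1]$, so that $h = 1 - t$, which is the form used in the main text.
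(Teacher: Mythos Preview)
Your proposal is correct and follows essentially the same route as the paper: differentiate the explicit RK stages via the chain rule, recognize the result as one RK step of the augmented state--Jacobian system from \cref{lemma:forward_sensitivities_ode}, and invoke the order-$\alpha$ local truncation error of that scheme. Your added remarks on the required $\mathcal C^{\alpha+1}$ smoothness of $\bfu_\theta$ and on the single-step (no accumulation) nature of the bound are points the paper leaves implicit.
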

\end{theorembox}

\begin{proof}
    Consider an explicit $k$-stage Runge-Kutta method given by
    \begin{align}
        \bfu_{n,j} &= \bfu_\theta \left(t_n + c_j h, \bfx_n + h \sum_{i=1}^j a_{j,i}\bfu_{n, i}\right), \qquad j = 1,2,\ldots,k\\
        \bfx_{n+1} &= \bfx_n + h \sum_{j=1}^k b_j \bfu_{n, j},
    \end{align}
    where $a_{j,i}, b_j, c_j$ are all given via the \textit{Butcher Tableau} \parencite[Section 6.1.4]{stewart2022numerical}.
    Now, we consider a single step from time $s$ to time $t$ with initial value $\bfx$ and step size $h \coloneq t - s$.
    Then, the gradient is
    \begin{align}
        \nabla_\bfx \bm \Phi_{s,t}(\bfx) &= \nabla_\bfx \bfx + h \sum_{j=1}^k b_j \nabla_\bfx \bfu_\theta \left(s + c_j h, \bfx + h \sum_{i=1}^j a_{j,i}\bfu_{i}\right),\nonumber\\
                                            \label{eq:app:grad_dto}
                                         &= \bm I + h \sum_{j=1}^k b_j \left[\nabla_{\hat\bfx_j}\bfu_\theta(s + c_j h, \hat\bfx_j) \left(\bm I + h \sum_{i=1}^j a_{j,i}\nabla_\bfx\bfu_{i}\right)\right],
    \end{align}
    where we let
    \begin{equation}
        \hat\bfx_j = \bfx + h \sum_{i=1}^j a_{j,i} \bfu_i.
    \end{equation}

    Next, recall \cref{lemma:forward_sensitivities_ode} which gives the following ODE
    \begin{equation}
        \bm J_s(s) = \bm I, \qquad \frac{\rmd \bm J_s}{\rmd t}(t) = \nabla_\bfx \bfu_\theta(t, \Phi_{s,t}(\bfx)) \bm J_s(t).
    \end{equation}
    Next, we augmented the ODE above with the underyling ODE for the solution state, $\dot \bfx(t) = \bfu_\theta(t, \bfx(t))$.
    We now apply the same Runge-Kutta solver to this augmented ODE for the Jacobian matrices which yields
    \begin{equation}
        \label{eq:app:grad_otd_forward}
        \bm U_j = \bm I + h \sum_{j=1}^k b_j \left[\nabla_{\hat\bfx_j} \bfu_\theta\left(s + c_jh, \hat\bfx_j\right)\left(\bm I + h\sum_{i=1}^j a_{j,i} \nabla_\bfx\bfu_i\right)\right].
    \end{equation}
    Clearly, \cref{eq:app:grad_otd_forward} and \cref{eq:app:grad_dto} are equivalent.
    Now as the underlying numerical solver has local truncation error $\mathcal O(h^{\alpha+1})$ we find that
    \begin{equation}
        \left\|\nabla_\bfx \Phi_{s,t}^\theta(\bfx) - \nabla_\bfx \bm \Phi_{s,t}(\bfx) \right\| = \mathcal O (h^{\alpha + 1}).
    \end{equation}

\end{proof}

\begin{remark}
    This result is intuitive as differentiation is a linear operator.
    However simple, we believe the insight is useful on the discussion of using DTO/OTD/posterior methods for guidance and thus include it here.
\end{remark}

\begin{remark}
    \cref{thm:lte_dto_grads} shows that DTO and OTD are really just two sides of the same coin and that one of the main differences is the choice of end points when discretizing.
\end{remark}

\begin{remark}
    \textcite[Appendix A]{onken2020discretize} made similar observations; however, it is for only of the case of Euler.
\end{remark}

\begin{theorembox}\gradsinglestepdto*\end{theorembox}
\begin{proof}
    This follows as a corollary of \cref{thm:lte_dto_grads}.
\end{proof}

\begin{corollary}[Convergence of a $\alpha$-th order posterior gradient]
    For affine probability paths, if there exists a sequence of states $\bfx_t^{(n)}$ at time $t$ such that it converges to the locally optimal solution $\bm \Phi_{t,1}^\theta(\bfx_t^{(n)}) \to \bfx_{1}^*$.
    Then solution, $\Phi_{1|t}^\theta(\bfx_t^{(n)})$, converges to a neighborhood of size $\mathcal O(h^{\alpha+1})$ centered at $\bfx_1^*$.
\end{corollary}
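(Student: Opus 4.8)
The plan is to replay the proof of \cref{thm:convergence} with the explicit-Euler truncation estimate upgraded to the general order-$\alpha$ single-step estimate supplied by \cref{thm:grad_vs_single_step} (more precisely, by the underlying local-truncation bound established in the proof of \cref{thm:lte_dto_grads}). First I would record the key ingredient: a single step of an explicit Runge--Kutta solver $\bm\Phi$ of order $\alpha$ applied to the flow ODE from $t$ to $1$ satisfies
\[
\left\|\Phi_{t,1}^\theta(\bfx) - \bm\Phi_{t,1}(\bfx)\right\| \le C_1 h^{\alpha+1}, \qquad h = 1 - t,
\]
which is the classical consistency / local-truncation-error estimate for a consistent RK method of order $\alpha$, and is exactly what is invoked in the proof of \cref{thm:lte_dto_grads}. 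Since the hypothesis forces $\bm\Phi_{t,1}^\theta(\bfx_t^{(n)})$ to converge, this sequence is bounded, so the relevant states lie in a compact set on which $\bfu_\theta$ and the derivatives needed for an order-$\alpha$ method are bounded; hence $C_1$ can be chosen uniformly in $n$.

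Next I would use the hypothesis $\bm\Phi_{t,1}^\theta(\bfx_t^{(n)}) \to \bfx_1^*$: for every $\epsilon > 0$ there is an $N$ with $\|\bm\Phi_{t,1}^\theta(\bfx_t^{(n)}) - \bfx_1^*\| < \epsilon$ for all $n \ge N$. Then the triangle inequality gives, for $n \ge N$,
\[
\left\|\Phi_{1|t}^\theta(\bfx_t^{(n)}) - \bfx_1^*\right\| \le \left\|\Phi_{1|t}^\theta(\bfx_t^{(n)}) - \bm\Phi_{t,1}(\bfx_t^{(n)})\right\| + \left\|\bm\Phi_{t,1}(\bfx_t^{(n)}) - \bfx_1^*\right\| < C_1 h^{\alpha+1} + \epsilon,
\]
so $\Phi_{1|t}^\theta(\bfx_t^{(n)})$ eventually lies inside a ball of radius $\mathcal O(h^{\alpha+1})$ centered at $\bfx_1^*$, which is the assertion. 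Specializing to $\alpha = 1$ and identifying $\bm\Phi_{t,1}$ with one explicit Euler step (equivalently posterior guidance, by \cref{prop:greedy_is_explicit_euler}) recovers \cref{thm:convergence}.

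The only point requiring care is the uniformity of $C_1$ over the sequence; this is the compactness observation above, and if one only wants the weaker per-$n$ statement it can be dropped and the $\mathcal O(h^{\alpha+1})$ term absorbed directly into the neighborhood radius, exactly as \cref{thm:convergence} is itself phrased. A minor bookkeeping remark is that \cref{thm:grad_vs_single_step} measures the step via $h = 1-t$ whereas \cref{thm:convergence} is written with $h = \gamma_1 - \gamma_t$; since $\gamma$ is a smooth strictly monotone reparameterization of $t$, these differ by a bounded factor and the order of the estimate is unchanged. I expect no substantive obstacle here — as the paper already notes for \cref{thm:lte_dto_grads}, the content is essentially the linearity of differentiation together with standard numerical-analysis truncation bounds, so the corollary is really a one-line consequence of \cref{thm:grad_vs_single_step} plus a triangle inequality.
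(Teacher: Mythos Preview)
Your proposal is correct and is precisely the ``straightforward derivation from \cref{thm:lte_dto_grads}'' that the paper gives as its entire proof: replay the triangle-inequality argument of \cref{thm:convergence} with the Euler $\mathcal O(h^2)$ local-truncation bound replaced by the order-$\alpha$ bound $\|\Phi_{t,1}^\theta(\bfx)-\bm\Phi_{t,1}(\bfx)\|\le C_1 h^{\alpha+1}$. Your additional remarks on uniformity of $C_1$ and on the $t$ vs.\ $\gamma$ step-size parameterization are careful elaborations the paper does not spell out, but they do not change the approach.
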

\begin{proof}
    This follows as a straightforward derivation from \cref{thm:lte_dto_grads}.
\end{proof}

\begin{corollary}[Dynamics of $\alpha$-th order posterior gradient]
    \label{corr:higher_orderdynamics}
    Consider the standard affine Gaussian probability paths model trained to zero loss.
    Let $\bm \Phi$ be an explicit Runga-Kutta solver of order $\alpha > 0$ of a flow model with flow $\Phi_{s,t}^\theta(\bfx)$.
    The Gateaux differential of $\bfx$ at some time $t \in [0,1]$ in the direction of the gradient $\nabla_\bfx \mathcal L\left(\bm\Phi_{t,1}(\bfx)\right)$ is given by
    \begin{equation}
        \delta_\bfx^{\bm\Phi}(\bfx) = -\nabla_\bfx \Phi_{t,1}^\theta(\bfx) \nabla_\bfx \bm \Phi_{t,1}(\bfx)^\top \nabla_{\bfx_1}\mathcal L(\bfx_1).
    \end{equation}
\end{corollary}
\begin{proof}
    This follows straightforwardly from \cref{prop:dyn_greedy_guidance,thm:grad_vs_single_step}.
\end{proof}
    
\subsection{A useful reparameterization of the flow model}
We present a useful reparameterization of the flow model, which is a parallel result to \cref{prop:exact_sol_flow}.
\begin{theorembox}
    \begin{proposition}[Reparameterized for the target prediction model of affine probability paths]
        \label{prop:reparam_ode}
        The ODE governed by the vector field in \cref{eq:marginal_vec} can be reparameterized as
        \begin{equation}
            \frac{\rmd \bfy_\gamma}{\rmd \gamma} = \sigma_0\bfx_{1|\zeta}^\theta\left(\frac{\sigma_\gamma}{\sigma_0}\bfy_\zeta\right),
        \end{equation}
        where $\bfy_t = \frac{\sigma_0}{\sigma_t} \bfx_t$.
    \end{proposition}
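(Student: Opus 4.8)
The plan is to run the same integrating-factor (exponential-integrator) substitution that underlies the proof of \cref{prop:exact_sol_flow}, observe that $\bfy_t$ is exactly the state obtained after dividing out the linear part of the semi-linear vector field, and then change the independent variable from time $t$ to the signal-to-noise ratio $\gamma$.

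First I would write the ODE in semi-linear form $\dot\bfx_t = a_t\bfx_t + b_t\bfx_{1|t}^\theta(\bfx_t)$ with $a_t = \dot\sigma_t/\sigma_t$ and $b_t = \dot\alpha_t - \alpha_t\dot\sigma_t/\sigma_t$, exactly as in \cref{eq:vector_field_denoiser}. Then I would differentiate $\bfy_t = (\sigma_0/\sigma_t)\bfx_t$ directly: since $\frac{\rmd}{\rmd t}(\sigma_0/\sigma_t) = -(\sigma_0/\sigma_t)\,a_t$, the linear term $a_t\bfx_t$ cancels and one is left with $\dot\bfy_t = (\sigma_0/\sigma_t)\,b_t\,\bfx_{1|t}^\theta(\bfx_t)$. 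Substituting the identity $b_t/\sigma_t = \dot\gamma_t$, already established in \cref{eq:gamma_dot}, yields $\dot\bfy_t = \sigma_0\,\dot\gamma_t\,\bfx_{1|t}^\theta(\bfx_t)$.

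Next I would change the independent variable from $t$ to $\gamma$. Because $\dot\alpha_t > 0$ and $\dot\sigma_t < 0$ on $(0,1)$ by \cref{eq:schedule_boundaries}, the map $\gamma_t = \alpha_t/\sigma_t$ is strictly increasing and $C^1$, hence admits a $C^1$ inverse $t_\gamma$ with $\frac{\rmd t_\gamma}{\rmd\gamma} = 1/\dot\gamma_{t_\gamma(\gamma)}$. Writing $\zeta \coloneq t_\gamma(\gamma)$ and applying the chain rule cancels the factor $\dot\gamma_t$, leaving $\frac{\rmd\bfy_\gamma}{\rmd\gamma} = \sigma_0\,\bfx_{1|\zeta}^\theta(\bfx_\zeta)$. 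Finally I would re-express $\bfx_\zeta = (\sigma_\zeta/\sigma_0)\,\bfy_\zeta = (\sigma_\gamma/\sigma_0)\,\bfy_\zeta$ by inverting the definition of $\bfy$, which gives the claimed form $\frac{\rmd\bfy_\gamma}{\rmd\gamma} = \sigma_0\,\bfx_{1|\zeta}^\theta\!\left(\frac{\sigma_\gamma}{\sigma_0}\bfy_\zeta\right)$.

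There is no substantial obstacle here: the computation is routine once the integrating factor $\sigma_0/\sigma_t$ is identified, and it reuses ingredients already assembled for \cref{prop:exact_sol_flow}. The only point requiring a little care is the legitimacy of the substitution $t \mapsto \gamma$, which rests on the strict monotonicity and smoothness of $\gamma_t$ from the schedule conditions (and implicitly on \cref{assump:at_integrable}, so that the exponential-integrator step is well defined up to the endpoints). I would close by noting that this proposition is the differential counterpart of \cref{prop:exact_sol_flow}: integrating $\frac{\rmd\bfy_\gamma}{\rmd\gamma}$ from $\gamma_s$ to $\gamma_t$ and translating back via $\bfx_t = (\sigma_t/\sigma_0)\bfy_t$ recovers \cref{eq:exact_sol_flow}.
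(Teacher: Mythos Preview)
Your proposal is correct and follows essentially the same route as the paper: write the ODE in semi-linear form, absorb the linear term via the integrating factor $\sigma_0/\sigma_t$ (which defines $\bfy_t$), use $b_t/\sigma_t = \dot\gamma_t$, and change the independent variable from $t$ to $\gamma$. The only cosmetic difference is that you verify the cancellation of the $a_t\bfx_t$ term by direct differentiation of $\bfy_t$, whereas the paper phrases the same step as applying the exponential-integrator identity $\frac{\rmd}{\rmd t}\big[e^{-\int_0^t a_u\,\rmd u}\bfx_t\big]$; these are the same computation.
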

\end{theorembox}

\begin{proof}
    The ODE governed by the vector field in \cref{eq:marginal_vec} can be written as
    \begin{equation}
        \frac{\rmd \bfx_t}{\rmd t} = a_t\bfx_t + b_t \bfx_{1|t}^\theta(\bfx_t).
    \end{equation}
    Now we can use the technique of exponential integrators to rewrite the ODE as
    \begin{equation}
        \frac{\rmd}{\rmd t}\left[e^{\int_0^t -a_u \;\rmd u}\bfx_t\right] = e^{\int_0^t -a_u \;\rmd u} b_t \bfx_{1|t}^\theta(\bfx_t).
    \end{equation}
    The exponential term can be simplified to
    \begin{equation}
        e^{\int_0^t -a_u\;\rmd u} = \frac{\sigma_0}{\sigma_t}.
    \end{equation}
    We introduce a \textit{change-of-variables}, $\bfy_t = \frac{\sigma_0}{\sigma_t} \bfx_t$.
    Thus, the ODE becomes
    \begin{equation}
        \frac{\rmd \bfy_t}{\rmd t} = \frac{\sigma_0}{\sigma_t} b_t \bfx_{1|t}^\theta\left(\frac{\sigma_t}{\sigma_0}\bfy_t\right).
    \end{equation}
    Next, recall that $b_t/\sigma_t = \dot\gamma_t$ (\cf \cref{eq:gamma_dot}) which enables a change of integration variable:
    \begin{equation}
        \frac{\rmd \bfy_\gamma}{\rmd \gamma} = \sigma_0\bfx_{1|\gamma}^\theta\left(\frac{\sigma_\gamma}{\sigma_0} \bfy_\gamma\right).
    \end{equation}
\end{proof}

\begin{remark}
    Recall that, often, for affine probability paths we let $\sigma_0 = 1$, further simplifying \cref{prop:reparam_ode} to
    \begin{equation}
        \frac{\rmd \bfy_\gamma}{\rmd \gamma} = \bfx_{1|\gamma}^\theta\left(\sigma_\gamma \bfy_\gamma\right).
    \end{equation}
\end{remark}

\begin{remark}
    \cref{prop:reparam_ode} is a tangential result to the prior result of \textcite[Equation (11)]{pan2024adjointdpm} which was for diffusion models and was developed \wrt the source prediction model rather than the target prediction model and was solved in reverse-time.\footnote{Technically forward-time due to the conventions of diffusion models.}
\end{remark}

This parameterization in \cref{prop:reparam_ode} can be combined with \cref{thm:lte_dto_grads} to construct a DTO approximation of the gradient with truncation error $(\gamma_t - \gamma_s)^{\alpha+1}$.

\section{Notes on using OTD in practice}
\label{app:otd_issues}
While the OTD approach has become quite popular after the work of \textcite{chen2018neural}, several later works have noticed several key issues that we wish to note for ML practitioners.

Recall our prototypical neural ODE (or flow model) of the form
\begin{equation}
    \frac{\rmd \bfx}{\rmd t}(t) = \bfu_\theta(t, \bfx(t)),
\end{equation}
and assume it is defined on the interval $[0, T]$ and the flow model statifies the usual regularity conditions.
Then, the continuous adjoint equations \parencite[Theorem 5.2]{kidger_thesis} are:
\begin{equation}
    \begin{aligned}
    \bfa_\bfx(T) &= \frac{\partial \mathcal{L}}{\partial \bfx_T}, \quad &&\frac{\rmd \bfa_\bfx}{\rmd t}(t) = -\bfa_\bfx(t)^\top \frac{\partial \bfu_\theta}{\partial \bfx}(t, \bfx(t)),\\
    \bfa_\theta(T) &= \bm 0, \quad &&\frac{\rmd \bfa_\theta}{\rmd t}(t) = -\bfa_\bfx(t)^\top \frac{\partial \bfu_\theta}{\partial \theta}(t, \bfx(t)),
    \end{aligned}
\end{equation}
where $\bfa_\bfx(t) \coloneq \partial \mathcal L / \partial \bfx(t)$ and $\bfa_\theta(0) \coloneq \partial \mathcal L / \partial \theta$.

\paragraph{Truncation errors.}
One area of concern is the potential mismatch between the forward trajectory $\{\bfx_{t_i}\}_{i=1}^N$ and the backward trajectory $\{\tilde \bfx_{t_i}\}_{i=1}^N$ when performing the backwards solve.
 \Eg, consider an explicit Euler scheme
 \begin{equation}
     \bfx_{t_{i+1}} = \bfx_{t_i} + (t_{i+1} - t_i)\bfu_\theta(t_i, \bfx_{t_i}).
 \end{equation}
 The same scheme when applied to solving the backward trajectory would yield,
 \begin{equation}
     \tilde\bfx_{t_i} = \tilde\bfx_{t_{i+1}} + (t_i - t_{i+1})\bfu_\theta(t_{i+1}, \tilde\bfx_{t_{i+1}}).
 \end{equation}

Clearly, there is no guarantee that these two trajectories match during the forward and backward solve introducing a source of error.
One potential solution is to use an \textit{algebraically reversible solver} \parencite[see][]{kidger2021efficient,mccallum2024efficient,blasingame2025reversible} which guarantees that the forward and backward trajectory match \textit{perfectly}.
Another option is to store the forward trajectory $\{\bfx_{t_i}\}_{i=1}^N$ in memory and use \textit{interpolated adjoints} if the backward timesteps do not perfectly align with the forward timesteps \parencite[see][]{kim2021stiff}.

\paragraph{Stability concerns.}
Consider the simple ODE, $\dot y(t) = \lambda y(t)$ defined on $t \in [0,T]$ with $y(0) = y_0$ and $\lambda < 0$.
Clearly, most ODE solvers with a non-trivial region of stability \parencite[see][Definition 2.1]{hairer2002stiffodes} will solve this ODE without an issue, as the errors will decrease exponentially with $\lambda < 0$.
However, in the backwards in time solve from $y(T)$ the errors will \textit{grow exponentially}.
It can be shown that the adjoint state suffers from similar stability issues.
The local behavior of a differential equation is described through the eigenvalues of the Jacobian of the vector field \parencite[see][]{butcher2016numerical}.
For $\bfx_t$ this is given by $\frac{\partial \bfu_\theta}{\partial \bfx}$ and for $\bfa_\bfx$ this is given by
\begin{equation}
    \frac{\partial}{\partial \bfa_\bfx}\bigg(-\bfa_\bfx(t)^\top \frac{\partial \bfu_\theta}{\partial \bfx}(t, \bfx(t))\bigg) = - \frac{\partial \bfu_\theta}{\partial \bfx}(t, \bfx(t)).
\end{equation}
Clearly, the Jacobians for $\bfa_\bfx$ and $\bfx_t$ solved in reverse-time are identical, meaning the stability of the backward solve is pushed onto the solve for the adjoint state \parencite[see][Section 5.1.2.4]{kidger_thesis} for more details.
Reversible solvers eliminate truncation errors, but tend to suffer from poor stability, \eg, the region of stability for reversible Heun applied to neural ODEs is the complex interval $[-i, i]$ \parencite{kidger2021efficient}.
Recent work by \textcite{mccallum2024efficient}, however, has shown a strategy for constructing reversible solvers with a non-trivial region of stability.

\begin{table}[t!]
    \centering
    \caption{Comparison of different strategies for performing backpropagation through flow models. For the complexity analysis $n$ denotes the number of discretization steps and $d$ the dimensionality of the state. Note, for accuracy we mean there are no truncation errors. Note that whilst in general the stability of reversible solvers is quite poor, there are \textit{some} solvers which have a non-trival region of stability.}
    \label{tab:dto_vs_otd}
    \begin{tabular}{l ll cc}
        \toprule
        \textbf{Method} & \textbf{Time} & \textbf{Memory} & \textbf{Accurate gradients} & \textbf{Stability}\\
        \midrule
        DTO & $\mathcal O(n)$ & $\mathcal O(nd^2)$ & \cmark & -\\
        DTO + recursive checkpointing & $\mathcal O(n\log n)$ & $\mathcal O(d^2\log n)$ & \cmark & -\\
        OTD + stored trajectory & $\mathcal O(n)$ & $\mathcal O(nd + d^2)$ & \cmark & -\\
        OTD + reversible solver & $\mathcal O(n)$ & $\mathcal O(d^2)$ & \cmark & ?\\
        OTD & $\mathcal O(n)$ & $\mathcal O(d^2)$ & \xmark & \xmark\\
        \bottomrule
    \end{tabular}
\end{table}

\paragraph{Recommendations.}
In light of these concerns we propose we consider to be best practices for deciding what scheme to use.

Generally, the best choice is DTO when memory allows as it is the most \textit{accurate} in terms of the forward discretization.
If memory is an issue then using a clever checkpointing scheme \parencite{griewank2000revolve,griewank1992achieving,stumm2010new} can help alleviate such issues in exchange for additional compute time.
The recursive checkpointing strategy in combination with DTO is actually the default (and recommended) implementation in the \href{https://docs.kidger.site/diffrax/api/adjoints/}{\texttt{Diffrax}} library.
Alternatively, one could store the forward trajectory in memory and then apply the OTD scheme on these stored states (not activations).
This strategy of caching the forward trajectory is quite popular and was used by \textcite{blasingame2024adjointdeis,domingo-enrich2025adjoint} in practice when solving the continuous adjoint equations.
Another option is to use an algebraically reversible solver in conjunction with OTD.
Lastly, one could use vanilla OTD, which we should mention can actually work reasonably well depending on the application despite the concerns listed above.

In \cref{tab:dto_vs_otd} we summarize the discussion of this section and hope it is helpful to the reader.%

\section{On control signal optimization}
\label{app:control_signal_opt}
Rather than optimizing the trajectory of the solution or the initial condition, several works \parencite{liu2023flowgrad,wang2024training} have explored the guidance from the perspective of optimal control \parencite{kirk2004optimal}.
In essence this technique first injects an additional control signal,
$\bfz \in \C^{1}(\R;\R^d)$, to the vector field, $\bfu_t^\theta$, such that
\begin{equation}
    \label{eq:flow_plus_control}
    \frac{\rmd \bfx_t}{\rmd t} = \bfu_t^\theta(\bfx_t) + \bfz(t).
\end{equation}
Thus, instead of optimizing $\{\bfx_t\}_{t \in [0,t]}$ directly, this control signal can instead be optimized, serving as one of the key insights in \parencite{liu2023flowgrad,wang2024training}.
\Ie, suppose we have a neural ODE with vector field $\bfu_t^\theta(\bfx)$, then we can write the optimization problem as
\begin{equation}
    \begin{aligned}
        \min_{\bm z} \quad & \mathcal L(\bfx_T) + \lambda \int_0^T \|\bm z(t)\|\; \rmd t,\\
        \textrm{s.t.} \quad & \bfx_T = \bfx_0 + \int_0^T \bfu_t^\theta(\bfx_t) + \bm z(t)\;\rmd t.
    \end{aligned}
\end{equation}

The next natural question then is to ask about the behavior of a greedy strategy applied to $\bfz(t)$.
To simplify the analysis, we now consider a control signal applied to the posterior model $\bfx^\theta_{1|t}$ such that it is replaced by $\bfx^\theta_{1|t}(\bfx_t) + \bfz(t)$ which amounts to simply rescaling $\bfz(t)$ from \cref{eq:flow_plus_control} with $b_t$.
From this construction, it should be clear that the greedy gradient for the control signal is merely $\nabla_{\tilde\bfx_1} \mathcal{L}(\tilde\bfx_1)$.
If using the original formulation where the control signal is applied to the vector field, rather than the denoiser, the gradient is simply scaled by a weighting function dependent on time.
Note that this approach is similar to the greedy approach taken by \textcite{blasingame2024greedydim}; however, they inject the control signal to the source prediction model rather than the target prediction model.

\subsection{Continuous adjoint equations for control signals}
We can model the gradient for this signal by augmenting the continuous adjoint equations with the adjoint state $\bfa_\bfz(t) \coloneq \partial \mathcal{L} / \partial \bfz(t)$.
In \cref{thm:cae_for_control} we show that this gradient is simply an integral of the adjoint state $\bfa_\bfx(t)$.

\begin{theorembox}
\begin{restatable}[Continuous adjoint equations for the control term]{theorem}{caeforcontrol}
    \label{thm:cae_for_control}
    Let $\bfu_t^\theta \in \C^{1,1}([0,T]\times \R^{d_x}; \R^{d_x})$ be a parameterization of some time-dependent vector field of a neural ODE that is Lipschitz continuous in its second argument, and let $\bfz \in \C^{1}([0,1];\R^d)$ be an additional control signal such that the new dynamics are given by \cref{eq:flow_plus_control}.
    Let $\bfa_\bfz(t) \coloneq \partial\mathcal{L}/\partial \bfz(t)$, then
    \begin{equation}
        \bfa_\bfz(t) = - \int_T^t \bfa_\bfx(s) \; \rmd s.
    \end{equation}
\end{restatable}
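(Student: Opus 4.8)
The plan is to extend the continuous adjoint machinery of \cref{eq:continuous_adjoint_eqs} by an additional adjoint state $\bfa_\bfz$ associated with the control signal, and then integrate the resulting equation in closed form. Because the control enters the dynamics additively in \cref{eq:flow_plus_control}, we have $\partial\bigl[\bfu_t^\theta(\bfx_t)+\bfz(t)\bigr]/\partial\bfz = \bm I$, so the same recipe that produces the $\bfa_\theta$ equation in \cref{eq:continuous_adjoint_eqs} gives the backward (terminal-value) problem
\begin{equation}
    \bfa_\bfz(T) = \bm 0, \qquad \frac{\rmd \bfa_\bfz}{\rmd t}(t) = -\bfa_\bfx(t)^\top\frac{\partial}{\partial \bfz}\bigl[\bfu_t^\theta(\bfx_t)+\bfz(t)\bigr] = -\bfa_\bfx(t),
\end{equation}
where the terminal condition is $\bm 0$ because $\mathcal L$ depends on $\bfz$ only through $\bfx_T$. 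Integrating this ODE backward from $T$ to $t$ then yields
\begin{equation}
    \bfa_\bfz(t) = \bfa_\bfz(T) - \int_T^t \bfa_\bfx(s)\;\rmd s = -\int_T^t \bfa_\bfx(s)\;\rmd s,
\end{equation}
which is precisely the claim.

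To make the first step rigorous I would give the short variational derivation rather than appeal to the recipe verbatim, since $\bfz$ is a time-varying signal rather than a static parameter. Form the co-state Lagrangian $\mathcal J[\bfz] = \mathcal L(\bfx_T) + \int_0^T \bm\lambda(t)^\top\bigl(\dot{\bfx}(t) - \bfu_t^\theta(\bfx_t) - \bfz(t)\bigr)\;\rmd t$, which equals $\mathcal L(\bfx_T)$ on any trajectory satisfying \cref{eq:flow_plus_control}. Taking the first variation with respect to $\bfz$, integrating the $\bm\lambda^\top\delta\dot{\bfx}$ term by parts, and using $\delta\bfx(0)=\bm 0$, the coefficients of $\delta\bfx_T$ and of $\delta\bfx(t)$ vanish exactly when $\bm\lambda(T) = -\partial\mathcal L/\partial\bfx_T$ and $\dot{\bm\lambda} = -(\partial\bfu_t^\theta/\partial\bfx)^\top\bm\lambda$, i.e.\ when $\bm\lambda\equiv-\bfa_\bfx$; what remains is $\delta\mathcal J = -\int_0^T \bm\lambda(t)^\top\delta\bfz(t)\;\rmd t = \int_0^T \bfa_\bfx(t)^\top\delta\bfz(t)\;\rmd t$, which both identifies $\bfa_\bfx$ as the relevant adjoint and, read back in running form, reproduces the ODE above. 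The Lipschitz hypothesis on $\bfu_t^\theta$ and $\bfz\in\C^1$ guarantee that the forward ODE and the adjoint ODE are well-posed, so every integral above exists and the integration by parts is legitimate.

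The step needing the most care is the variational computation itself: tracking the sign conventions (row versus column adjoints, and the sign of the adjoint ODE relative to \cref{eq:continuous_adjoint_eqs}), the boundary-term bookkeeping after integration by parts, and justifying the interchange of differentiation with the time integral. Everything downstream is a one-line backward integration. As a cross-check, the same equation for $\bfa_\bfz$ follows by treating $\bfz$ as a trajectory-independent ``parameter'' and reading off $\partial\bfu/\partial\bfz = \bm I$ directly in the $\bfa_\theta$ equation of \cref{eq:continuous_adjoint_eqs}, which recovers the result without redoing the variational argument.
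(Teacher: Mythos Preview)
Your proposal is correct and follows essentially the same approach as the paper: both treat $\bfz$ as an extra ``parameter'' in the augmented adjoint system of \textcite{chen2018neural}, read off $\partial[\bfu_t^\theta(\bfx_t)+\bfz(t)]/\partial\bfz = \bm I$, obtain the terminal-value problem $\bfa_\bfz(T)=\bm 0$, $\dot\bfa_\bfz(t)=-\bfa_\bfx(t)$, and integrate. Your additional Lagrangian/variational derivation is a nice rigor supplement that the paper does not provide, but it is not required for the argument to go through.
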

\end{theorembox}

Our proof follows the structure of the modern proof of Pontryagin's original result \parencite{pontryagin1963} presented by \parencite{chen2018neural}; and is similar to the form used by \textcite[Theorem 2.2]{blasingame2024adjointdeis}.

\begin{proof}
    For notational clarity, we use the notation $\bfx(t) = \bfx_t$.
    We define the augmented state on $[0, T]$ as
    \begin{equation}
        \frac{\rmd}{\rmd t} \begin{bmatrix}
            \bfx\\
            \bfz
        \end{bmatrix}(t) = \bsf_{\text{aug}} = \begin{bmatrix}
            \bfu_\theta(t, \bfx(t)) + \bfz(t)\\
            \frac{\rmd \bfz}{\rmd t}(t)
        \end{bmatrix},
    \end{equation}
    and the augmented adjoint state as
    \begin{equation}
        \bfa_{\text{aug}}(t) \coloneq \begin{bmatrix}
            \bfa_\bfx\\
            \bfa_\bfz
        \end{bmatrix}(t).
    \end{equation}
    The Jacobian of $\bsf_{\text{aug}}$ has form
    \begin{equation}
        \frac{\partial \bsf_{\text{aug}}}{\partial [\bfx, \bfz]} = \begin{bmatrix}
            \frac{\partial \bfu_\theta(t, \bfx(t))}{\partial \bfx} & \mathbf 1\\
            \mathbf 0 & \mathbf 0
        \end{bmatrix}.
    \end{equation}
    The evolution of the adjoint state is given by
    \begin{equation}
        \frac{\rmd \bfa_{\text{aug}}}{\rmd t}(t) = -\begin{bmatrix}
            \bfa_\bfx & \bfa_\bfz
        \end{bmatrix}(t) \frac{\partial \bsf_{\text{aug}}}{\partial [\bfx, \bfz]}(t).
    \end{equation}
    Therefore, $\bfa_\bfu(t)$ evolves with
    \begin{equation}
        \bfa_\bfu(T) = \mathbf 0, \qquad \frac{\rmd \bfa_\bfu}{\rmd t}(t) = -\bfa_\bfx(t),
    \end{equation}
    thereby finishing the proof.
\end{proof}

\section{Implementation details}
\label{app:impl_details}
We discuss how to implement the greedy strategy.

\subsection{The construction of the greedy guidance schemes}

Recall that the general Butcher tableau for a $k$-stage explicit RK scheme \parencite[Section 6.1.4]{stewart2022numerical} is written as
\begin{equation}
    \renewcommand\arraystretch{1.2}
    \begin{array}{c|ccccc}
        c_1 &\\
        c_2 & a_{2,1}\\
        c_3 & a_{3,1} & a_{3,2}\\
        \vdots & \vdots & \vdots & \ddots\\
        c_k & a_{k,1} & a_{k,2} & \cdots & a_{(k-1),k}\\
        \hline
        & b_1 & b_2 & \cdots & b_{k-1} & b_k
    \end{array} = 
    \begin{array}{c|c}
        c & a\\
        \hline
        & b
    \end{array}.
\end{equation}

Thus a single-step is given by
\begin{align}
    \bfu_{n,j} &= \bfu_\theta \left(t_n + c_j h, \bfx_n + h \sum_{i=1}^j a_{j,i}\bfu_{n, i}\right), \qquad j = 1,2,\ldots,k\\
    \bfx_{n+1} &= \bfx_n + h \sum_{j=1}^k b_j \bfu_{n, j},
\end{align}
where $a_{j,i}, b_j, c_j$ are all given via the \textit{Butcher Tableau} \parencite[Section 6.1.4]{stewart2022numerical}.
Now, we consider a single step from time $s$ to time $t$ with initial value $\bfx$ and step size $h \coloneq t - s$.
Then, the gradient is
\begin{align}
    \nabla_\bfx \bm \Phi_{s,t}(\bfx) &= \nabla_\bfx \bfx + h \sum_{j=1}^k b_j \nabla_\bfx \bfu_\theta \left(s + c_j h, \bfx + h \sum_{i=1}^j a_{j,i}\bfu_{i}\right),\nonumber\\
                                     &= \bm I + h \sum_{j=1}^k b_j \left[\nabla_{\hat\bfx_j}\bfu_\theta(s + c_j h, \hat\bfx_j) \left(\bm I + h \sum_{i=1}^j a_{j,i}\nabla_\bfx\bfu_{i}\right)\right],
\end{align}
where we let
\begin{equation}
    \hat\bfx_j = \bfx + h \sum_{i=1}^j a_{j,i} \bfu_i.
\end{equation}
Which can easily be found through standard reverse-mode autodifferentiation frameworks; likewise, the gradients for multiple Euler steps can be found.

\section{A brief introduction to inverse problems}
\label{app:inverse_problems}
Inverse problems cover a large class of scientific problems \parencite{chung2023diffusion} that encompass scenarios where a partial measurement $\bfy$ is made of $\bfx$.
When the mapping $\bfx \mapsto \bfy$ is not an injection, recovering $\bfx$ from $\bfy$ becomes an ill-posed inverse problem.
Generally, the relationship between the underlying sample $\bfx$ and the measurement $\bfy$ is given by
\begin{equation}
    \bfy = \mathcal A(\bfx) + \bm \eta, \qquad \bfy, \bm \eta \in \R^{d_y}, \bfx \in \R^{d_x},
\end{equation}
where $\mathcal A: \R^{d_x} \to \R^{d_y}$ is the forward measurement operator and $\bm \eta \sim \mathcal(0, \beta_\bfy^2 \bm I)$ is the measurement noise.
\begin{standoutbox}
The inverse problem then is to find $p(\bfx|\bfy)$.
\end{standoutbox}
More details on these types of problems can be found in \textcite{chung2023diffusion,moufad2025variational,zhang2024improvingdiffusioninverseproblem}.

\subsection{Inverse problems and diffusion models}
Recall that the ODE formulation of diffusion models is just a particular type of affine Gaussian probability path \parencite{lipman2024flow-guide}.
Following the conventions of the EDM model \parencite{karras2022elucidating} we write this ODE formulation, known in the literature as the \textit{probability flow ODE}, below in
\begin{equation}
    \rmd \bfx_t = -\dot\sigma_t\sigma_t \nabla_{\bfx_t} \log p(\sigma_t, \bfx_t)\; \rmd t,
\end{equation}
where $p(\sigma_t, \bfx_t)$ is the joint distribution of $\bfx_t$ at noise level $\sigma_t$.\footnote{This $\sigma_t$ is not the same as the $\sigma_t$ from the scheduler $(\alpha_t, \sigma_t)$ used in the main paper.}
\NB, for diffusion models $\rmd t$ is a \textit{negative} timestep and we integrate in reverse-time from $T$ to $0$.
These models are also called \textit{score-based generative models} due to learning the score function $\nabla_{\bfx_t} \log p(\sigma_t, \bfx_t)$.

One of the insights of \textcite{song2021scorebased,chung2023diffusion} is to apply Bayes' theorem for inverse problems to score-based generative models, \ie,
\begin{align}
    p(\bfx|\bfy) &= \frac{p(\bfy|\bfx)p(\bfx)}{p(\bfy)},\\
    \nabla_\bfx \log p(\bfx|\bfy) &= \nabla_\bfx \log p(\bfx) + \nabla_\bfx \log p(\bfy|\bfx).
\end{align}
Adapting this for diffusion models, assuming $\mathcal A$ is defined on $\bfx_0$ (the output), we have
\begin{equation}
    \nabla_{\bfx_t} \log p(\sigma_t, \bfx_t | \bfy) = \nabla_{\bfx_t} \log p(\sigma_t, \bfx_t) + \nabla_{\bfx_t} \log p(\bfy | \bfx_t, \sigma_t).
\end{equation}
The unconditional score term is the regular score function learned by diffusion models and thus is \textit{appropriately} learned; however, the other term is much more difficult to work with.
The approach of \textcite{chung2023diffusion} is to use an approximation of
\begin{equation}
    p(\bfy|\bfx_t, \sigma_t) = \ex_{\bfx_0 \sim p(\bfx_0|\bfx_t)}[p(\bfy|\bfx_0,\sigma_0)],
\end{equation}
via Tweedie's formula \parencite{stein1981estimation} to write
\begin{equation}
    p(\bfy|\bfx_t, \sigma_t) \approx p(\bfy | \ex[\bfx_0|\bfx_t], \sigma_0).
\end{equation}
The approximation error can be quantified by the Jensen gap \parencite[Theorem 1]{chung2023diffusion}.

\section{Experimental details}
\label{app:impl}
We provide additional details of the experiments performed in \cref{sec:experiments}.
\NB, for all experiments we used fixed random seeds between the different software components to ensure a fair comparison.

\subsection{Inverse image problems}
\label{app:inv_images}

\paragraph{Inverse problems.}
The inverse problems are implemented in the same way as in \textcite{zhang2024improvingdiffusioninverseproblem}.
We reiterate some of the important settings below.
For Gaussian and motion deblurring we made use of kernels of size $61 \times 61$ with standard deviations of 3.0 and 0.5 respectively.
The box inpainting task makes use of a random box of size $128 \times 128$ to mask the original images, while the random inpainting task randomly masks each pixel with a probability of 70\% following \parencite{song2024solving}.
The measure for the high dynamic range reconstruction problem is defined as
\begin{equation}
    \bfy \sim \mathcal{N}(\mathrm{clip}(\alpha\bfx_0, -1, 1), \beta_\bfy^2\bm I),
\end{equation}
with $\alpha = 2$.

\paragraph{Diffusion model.}
We make use of the pre-trained diffusion model from \textcite{chung2023diffusion}, trained on the FFHQ $256\times 256$ dataset.
We focus on the probability flow ODE formulation popularized by \textcite{karras2022elucidating} known as EDM described as
\begin{equation}
    \rmd \bfx_t = -\dot\sigma_t\sigma_t \nabla_{\bfx_t} \log p(\sigma_t, \bfx_t)\;\rmd t.
\end{equation}
Following \textcite{ben-hamu2024dflow}, we employ a midpoint scheme to solve this ODE in \textit{reverse-time} with $N = 20$ steps.
We use the noise schedule $\sigma_t = t$ which means $\dot\sigma_t = 1$.
The discretized noise schedule $\{\sigma_n\}_{n=1}^N$ is given by the following polynomial interpolation
\begin{equation}
    \sigma_n = \left(\sigma_{\max}^{\frac 1\rho} + \frac{n}{N-1}\left(\sigma_{\min}^{\frac 1\rho} - \sigma_{\max}^{\frac 1\rho}\right)\right)^\rho.
\end{equation}
We use $\rho = 7$, $T = \sigma_{\max} = 100$, and $\epsilon = \sigma_{\min} = 0.01$ for all experiments and integrate over $[\epsilon, T]$.
\NB, truncating the integration domain at $\epsilon$ rather than $0$ is quite common in diffusion models \parencite{song2023consistency}.

\paragraph{Hyperparameters.}
Unlike previous works \parencite{zhang2024improvingdiffusioninverseproblem} we did not adjust the hyperparameters per task and left them the same throughout.
The learning rate was set at $\eta = 1$ for all experiments, and we performed $n_{\textrm{opt}} = 50$ optimization steps with the stock implementation of the \href{https://docs.pytorch.org/docs/stable/generated/torch.optim.SGD.html}{\texttt{torch.optim.SGD}} method for each step of the ODE solve.
We set $\beta_\bfy = 0.05$ for all tasks.

\paragraph{Ablation study.}
For the ablation study in \cref{tab:ablation_hdr} we used the L-BGFS optimizer over the standard SGD optimizer used in the main experiments (for the greedy guidance runs). For full DTO we used SGD as it provided better performance over L-BGFS in that scenario.
\NB, due to compute limitations we couldn't run DTO for step sizes larger than 8.
Importantly, we fix the maximum number of optimization steps between the greedy and DTO strategies; for greedy we take 5 optimization step per step in the ODE solver, so a 100 in total.
Likewise, for DTO we take a 100 optimization steps in total.

\subsection{Molecule generation for QM9}
\label{app:qm9}
We follow the experimental methodology taken in previous work \parencite{ben-hamu2024dflow,wang2024training} and follow the conditional generation pipeline used by \textcite{hoogeboom2022equivariant}.
An equivariant \textit{graph neural network} (GNN) was trained for each property on half of the QM9 dataset, serving as a classifier---this model was then used as a guidance function during the experiments.
The EquiFM \parencite{song2023equivariant} model was trained on the whole QM9 training set and was used as the underlying flow model for the experiments.
Following \textcite{wang2024training}, the test time properties were sampled from the whole training set; in contrast to \textcite{ben-hamu2024dflow}.

Following \textcite{ben-hamu2024dflow} we used the L-BFGS algorithm \parencite{liu1989limited} with 5 optimizer steps and 5 inner steps with a linear search, in particular we used the stock PyTorch implementation \href{https://docs.pytorch.org/docs/stable/generated/torch.optim.LBFGS.html}{\texttt{torch.opt.LBFGS}}.
For the DTO experiment we used a learning rate of $\eta = 1$.
We tried this for the posterior guidance experiments but encountered severe instability.
We found that a learning rate of $\eta = 0.001$ seemed to work better.

Recall that \cref{prop:dyn_greedy_guidance} states that the greedy gradient is scaled by the covariance projection.
This effect is lessened as $t \to 1$, thus in later timesteps the greedy gradient is more likely to push samples off the data manifold.
We observed this, with exploding losses even at small learning rates.
To remedy this, we took inspiration from other works \parencite{chung2023diffusion,yu2023freedom,moufad2025variational} and annealed the learning rate.
We chose the following simple scheduler:
\begin{equation}
    \eta_t = \begin{cases}
        \eta (1 - t) & t > 0.5\\
        0 & t \leq 0.5
    \end{cases},
\end{equation}
where $\eta = 0.001$ is the base learning rate.

\paragraph{Runge-Kutta 4.}
Additionally, we ran some experiments using RK4 but ran into insurmountable stability issues.
Recall that RK4 is given by 
\begin{align}
    \bm k_1 &= \bfu_\theta\left(t_n, \bfx_n\right),\\
    \bm k_2 &= \bfu_\theta\left(t_n + \frac h2, \bfx_n + \frac h2 \bm k_1\right),\\
    \bm k_3 &= \bfu_\theta\left(t_n + \frac h2, \bfx_n + \frac h2 \bm k_2\right),\\
    \bm k_4 &= \bfu_\theta\left(t_n + h, \bfx_n + h \bm k_3\right),\\
    \bfx_{n+1} &= \bfx_n + \frac h6 (\bm k_1 + 2\bm k_2 + 2\bm k_3 + \bm k_4).
\end{align}
Using the step size $h = 1 - t$ we encountered large stability issues with the $\bm k_4$ term due to being evaluated at the endpoint of the flow model trajectory.
We tried a mixed-solver scheme were we would start with Euler and then switch to RK4, but that did not help.
We also tried the common diffusion trick of truncated the time interval to $[0, 1 - \epsilon]$ for some small $\epsilon > 0$, but this did not solve the stability issues either.
Ultimately, we abandoned it for this work and left such explorations for future work.
It seems reasonable to suppose that schemes which don't evaluate on the endpoint, \eg, Ralston's method, Heun's third-order method, or Ralton's third-order method may fair better.

\subsection{Numerical schemes}
We detail the numerical schemes used for posterior guidance beyond Euler.

\paragraph{Midpoint.}
The midpoint scheme used in both experiments is implemented as
\begin{equation}
    \bfx_{1} = \bfx_t + h \bfu_\theta\left(t + \frac h2, \bfx_t + \frac h2 \bfu_\theta(t, \bfx_t)\right)
\end{equation}
with step size $h = 1 - t$.\footnote{This is appropriately adjusted for diffusion models with a terminal time of $0$.}

\paragraph{2-step Euler.}
This scheme used in both experiments is implemented as
\begin{align}
    \bfx_{t+\frac h2} &= \bfx_{t} + \frac h2 \bfu_\theta(t, \bfx_t),\\
    \bfx_{1} &= \bfx_{t + \frac h2} + \frac h2 \bfu_\theta\left(t + \frac h2, \bfx_{t + \frac h2}\right),
\end{align}
with step sizes $h = 1-t$.

\subsection{Hardware and compute cost}
\label{app:hardware}

\paragraph{Inverse image problems.}
The inverse image problem experiments were run on a single NVIDIA H100 80GB GPU.
It took roughly 4 minutes and 78 GB of VRAM to generate 10 images for each inverse problem.
As such each experiment took about an 40--50 minutes.
Experiments which used the midpoint method, unsurprisingly ran about 90\% slower.

\paragraph{Molecule generation.}
The molecule generation experiments were run on a single NVIDIA V100 16GB GPU.
It took about 3 minutes and 1.5 GB of VRAM to generate 1 molecule leading to the experiments taking on the order of 300 minutes to complete.
Experiments which used the midpoint method, unsurprisingly ran about 90\% slower.

\section{Further experimental results}
\label{app:add_results}

We present additional experimental results that we could not include in the main paper for the sake of space.

\subsection{Molecule generation for QM9}
\label{app:add_mol}

In \cref{tab:stability} we present the \textit{atom stability percentage} (ASP) and \textit{molecule stability percentage} (MSP) per property for each guided generation model.
Interestingly, despite their poor quantitative performance in MAE (\cf \cref{tab:mol_results}) the greedy (midpoint) and (2-step Euler) strategies have slightly better stability than DTO.

\begin{table}[htpb]
    \centering
    \caption{Stability reported in ASP/MSP per property.}
    \label{tab:stability}
    \scriptsize

    \begin{tabular}{l ccc ccc}
        \toprule
        Property & $\alpha$ & $\Delta \varepsilon$ & $\varepsilon_{\mathrm{HOMO}}$ & $\varepsilon_{\mathrm{LUMO}}$ & $\mu$ & $C_v$\\
        \midrule
        DTO & 94.90/65.00 & 96.20/74.00 & 95.90/67.00 & 96.00/65.00 & 94.60/61.00 & 95.00/67.00\\
        Greedy (Euler) & 94.70/68.80 & 96.40/76.00 & 97.40/79.00 & 98.40/84.80 & 97.60/84.00 & 85.55/21.20\\
        Greedy (midpoint) & 97.46/80.00 & 97.51/83.00 & 97.91/81.00 & 97.77/83.00 & 97.70/81.00 & 97.09/80.00\\
        Greedy (2-step Euler) & 97.67/82.00 & 96.95/74.00 & 98.18/84.00 & 96.29/72.00 & 97.40/93.00 & 97.75/84.00 \\
        \midrule
        EquiFM & \multicolumn{6}{c}{98.88/89.00} \\
        \bottomrule
    \end{tabular}
\end{table}

\begin{table}[t]
    \centering
    \caption{Additional results for inverse image problems on FFHQ $256 \times 256$.}
    \label{tab:add_inv_images}
    \scriptsize
    \begin{tabular}{ll cccc}
        \toprule
        \textbf{Task} & \textbf{Method} & \textbf{PSNR} ($\uparrow$) & \textbf{SSIM} ($\uparrow$) & \textbf{LPIPS} ($\downarrow$) & \textbf{FID} ($\downarrow$)\\
        \midrule
        \multirow{10}{*}{Super resolution 4$\times$} & Greedy (Euler)  & 27.94 & 0.728 & 0.217 & 66.64\\
        & Greedy (midpoint)  & 27.98 & 0.727 & 0.224 & 70.96\\
        & Greedy (2-step Euler)  & 27.95 & 0.728 & 0.220 & 68.93\\

        & DAPS & 29.07 & 0.818 & 0.177 & 51.44\\
        & DPS & 25.86 & 0.753 & 0.269 & 81.07 \\
        & DDRM & 26.58 & 0.782 & 0.282 & 79.25 \\
        & DDNM & 28.03 & 0.795 & 0.197 & 64.62 \\
        & DCDP & ${28.66}$ & 0.807 & ${0.178}$ & 53.81\\
        & FPS-SMC & 28.42 & ${0.813}$ & 0.204 & 49.25\\ 
        & DiffPIR & 26.64 & - & 0.260 & 65.77\\

        \midrule
        \multirow{9}{*}{Inpaint (box)} & Greedy (Euler)  & 23.74 & 0.732 & 0.187 & 46.87\\
        & Greedy (midpoint)  & 24.08 & 0.724 & 0.186 & 44.55\\
        & Greedy (2-step Euler)  & 23.88 & 0.720 & 0.188 & 44.09\\

        & DAPS & 24.07 & 0.814 & 0.133 & 43.10\\
        & DPS & 22.51 & 0.792 & 0.209 & 61.27\\
        & DDRM & 22.26 & 0.801 & 0.207 & 78.62\\
        & DDNM & ${24.47}$ & 0.837 & 0.235 & 46.59\\
        & DCDP & 23.89 & 0.760 & 0.163 & ${45.23}$\\
        & FPS-SMC & 24.86 & ${0.823}$ & ${0.146}$ & 48.34\\

        \midrule
        \multirow{8}{*}{Inpaint (random)} & Greedy (Euler)  & 30.87 & 0.823 & 0.141 & 40.73\\
        & Greedy (midpoint)  & 31.03 & 0.816 & 0.139 & 38.80\\
        & Greedy (2-step Euler)  & 30.80 & 0.811 & 0.144 & 39.23\\

        & DAPS & 31.12 & 0.844 & 0.098 & 32.17\\
        & DPS & 25.46 & 0.823 & 0.203 & 69.20\\
        & DDNM & 29.91 & 0.817 & 0.121 & 44.37\\
        & DCDP & ${30.69}$ & ${0.842}$ & 0.142 & 52.51\\
        & FPS-SMC & 28.21 & 0.823 & 0.261 & 61.23\\

        \midrule
        \multirow{10}{*}{Gaussian deblurring} & Greedy (Euler)  & 28.01 & 0.766 & 0.182 & 57.04\\
        & Greedy (midpoint)  & 28.36 & 0.776 & 0.185 & 58.55\\
        & Greedy (2-step Euler)  & 28.18 & 0.774 & 0.181 & 57.18\\

        & DAPS & 29.19 & 0.817 & 0.165 & 53.33\\
        & DPS & 25.87 & 0.764 & 0.219 & 79.75\\
        & DDRM & 24.93 & 0.732 & 0.239 & 92.43\\
        & DDNM & ${28.20}$ & ${0.804}$ & ${0.216}$ & ${57.83}$\\
        & DCDP & 27.50 & 0.699 & 0.304 & 86.43\\
        & FPS-SMC & 26.54 & 0.773 & 0.253 & 67.45 \\
        & DiffPIR & 27.36 & - & 0.236 & 59.65\\

        \midrule
        \multirow{8}{*}{Motion deblurring} & Greedy (Euler)  & 29.35 & 0.748 & 0.207 & 63.05\\
        & Greedy (midpoint)  & 29.73 & 0.762 & 0.207 & 66.21\\
        & Greedy (2-step Euler)  & 29.64 & 0.764 & 0.203 & 63.99\\

        & DAPS & 29.66 & 0.847 & 0.157 & 39.49\\
        & DPS & 24.52 & 0.801 & 0.246 & 65.23\\
        & DCDP & 25.08 & 0.512 & 0.364 & 125.13\\
        & FPS-SMC & ${27.39}$ & ${0.826}$ & ${0.227}$ & ${48.32}$\\
        & DiffPIR & 26.57 & - & 0.255 & 65.78\\

        \midrule
        \multirow{7}{*}{Phase retrieval} & Greedy (Euler)  & 15.10 & 0.282 & 0.598 & 298.06\\
        & Greedy (midpoint)  & 15.10 & 0.286 & 0.595 & 299.45\\
        & Greedy (2-step Euler)  & 15.07 & 0.284 & 0.598 & 304.60\\

        & DAPS & ${3 0 . 6 3}_{ \pm 3.13}$ & ${0 . 8 5 1}_{ \pm 0.072}$ & ${6 . 1 3 9}_{ \pm 0.060}$ & 42.71\\
        & DPS & $17.64_{ \pm 2.97}$ & $0.441_{\pm 0.129}$ & $0.410_{ \pm 0.090}$ & 104.52\\
        & RED-diff & $15.60_{ \pm 4.48}$ & $0.398_{ \pm 0.195}$ & $0.596_{ \pm 0.092}$ & 167.43\\
        & DCDP & ${{28.65}} \pm 8.09$ & ${{0.781}}_{ \pm 0.217}$ & ${{0.203}}_{ \pm 0.196}$ & ${68.13}$\\

        \midrule
        \multirow{7}{*}{Nonlinear deblur} & Greedy (Euler)  & 24.767 & 0.551 & 0.327 & 79.06\\
        & Greedy (midpoint)  & 25.09 & 0.558 & 0.332 & 76.73\\
        & Greedy (2-step Euler)  & 24.81 & 0.547 & 0.330 & 76.26\\
        
        & DAPS & ${28.29}_{ \pm 1.77}$ & ${0.783}_{ \pm 0.036}$ & ${0 . 1 5 5}_{ \pm 0.032}$ & ${49.38}$\\
        & DPS & $23.39_{ \pm 2.01}$ & $0.623_{ \pm 0.082}$ & $0.278_{ \pm 0.060}$ & 91.31\\
        & RED-diff & ${3 0 . 8 6}_{ \pm 0.51}$ & ${0. 7 9 5}_{ \pm 0.028}$ & ${0.160}_{\pm 0.034}$ & 43.84\\
        & DCDP & $27.92_{ \pm 2.64}$ & $0.779_{ \pm 0.067}$ & $0.183_{ \pm 0.051}$ & 51.96\\

        \midrule
        \multirow{6}{*}{High dynamic range} & Greedy (Euler)  & 24.16 & 0.767 & 0.181 & 43.59\\
        & Greedy (midpoint)  & 26.62 & 0.809 & 0.160 & 37.86\\
        & Greedy (2-step Euler)  & 25.70 & 0.797 & 0.165 & 37.97\\

        & DAPS & ${2 7 . 1 2}_{ \pm 3.53}$ & ${0 . 7 5 2}_{ \pm 0.041}$ & ${0 . 1 6 2}_{ \pm 0.072}$ & 42.97\\
        & DPS & ${22.73}_{ \pm 6.07}$ & ${0.591}_{ \pm 0.141}$ & $0.264_{ \pm 0.156}$ & 112.82\\
        & RED-diff & $22.16_{ \pm 3.41}$ & $0.512_{ \pm 0.083}$ & ${0.258}_{ \pm 0.089}$ & ${108.32}$\\

        \bottomrule
    \end{tabular}
\end{table}

\subsection{Further results on inverse image problems}
\label{app:add_inv}
To put the results from \cref{sec:inv_problems} into context we present some detailed comparisons to other works from the domain of inverse problems with diffussion models, namely:
\begin{enumerate}
    \item DAPS \parencite{zhang2024improvingdiffusioninverseproblem},
    \item DPS \parencite{chung2023diffusion},
    \item DDRM \parencite{kawar2022denoising},
    \item DDNM \parencite{wang2023zeroshot},
    \item DCDP \parencite{li2024decoupled},
    \item FPS-SMC \parencite{dou2024diffusion},
    \item DiffPIR \parencite{zhu2023diffpir}, and
    \item RED-diff \parencite{mardani2024a}.
\end{enumerate}

We present the full comparison in \cref{tab:add_inv_images}.

\clearpage

\subsection{Sampling trajectories for inverse problems}
We present the solution trajectories the different guidance algorithms below for solving the HDR inverse problem.
Note that the midpoint and 2-step Euler, unsurprisingly, have better approximations of $\bfx_1$.

\begin{figure}[h]
    \includegraphics[width=\textwidth]{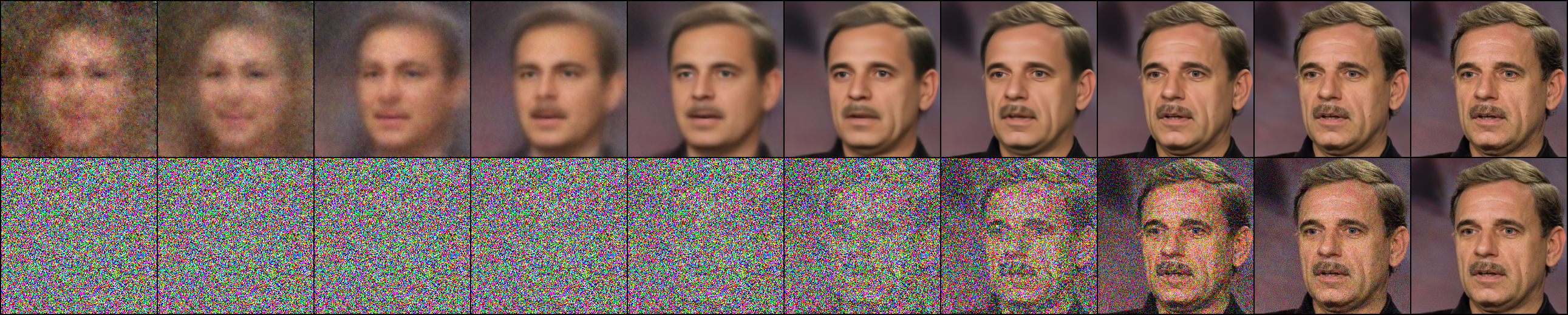}
    \caption{Sampling trajectory for greedy (Euler) solving the HDR inverse problem. Top row is $\bfx_{1|t}^\theta(\bfx_t)$ and the bottom row is $\bfx_t$.}
\end{figure}

\begin{figure}[h]
    \includegraphics[width=\textwidth]{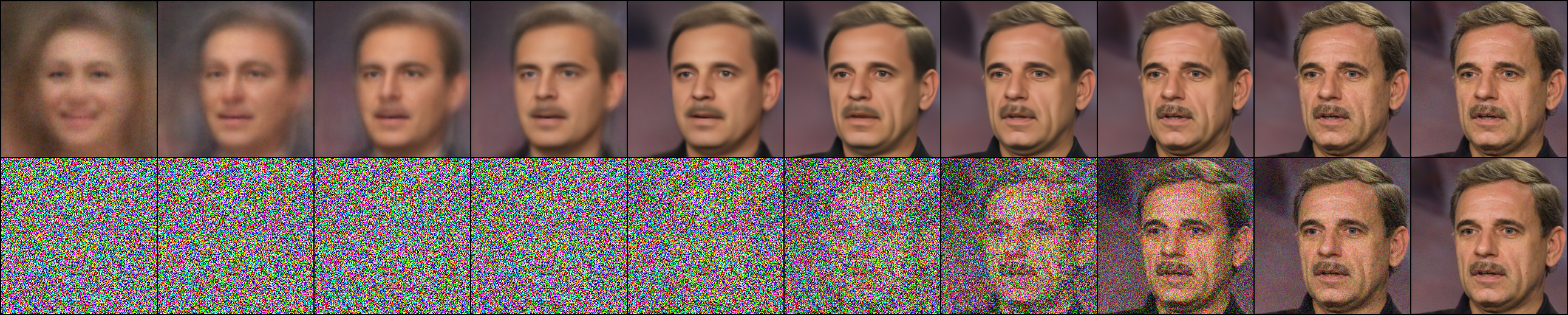}
    \caption{Sampling trajectory for greedy (midpoint) solving the HDR inverse problem. Top row is midpoint estimate and the bottom row is $\bfx_t$.}
\end{figure}

\begin{figure}[h]
    \includegraphics[width=\textwidth]{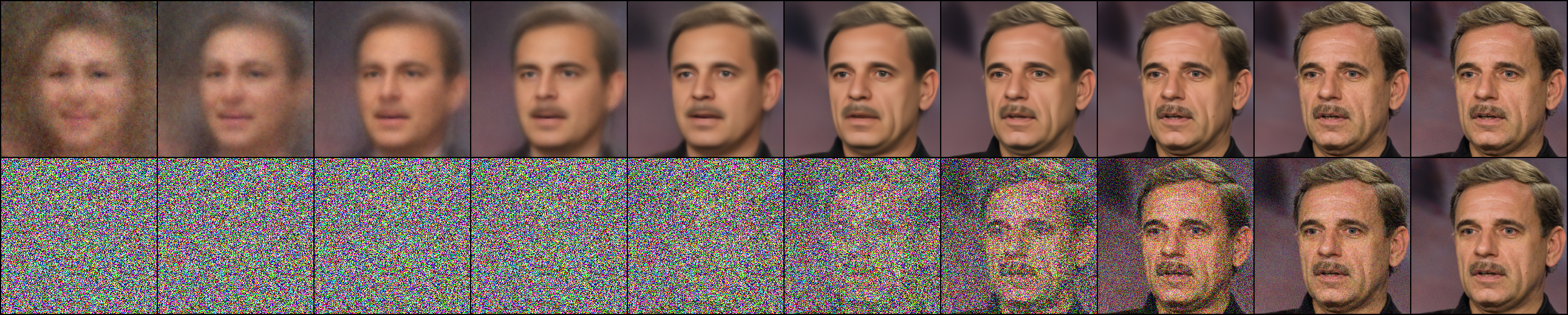}
    \caption{Sampling trajectory for greedy (2-step Euler) solving the HDR inverse problem. Top row is 2-step Euler estimate and the bottom row is $\bfx_t$.}
\end{figure}

\subsection{More qualitative samples for inverse problems}
We showcase some examples generated by the greedy gradient strategy (Euler) on the different inverse problems.

\begin{figure}[h]
    \centering
    \includegraphics[width=\textwidth]{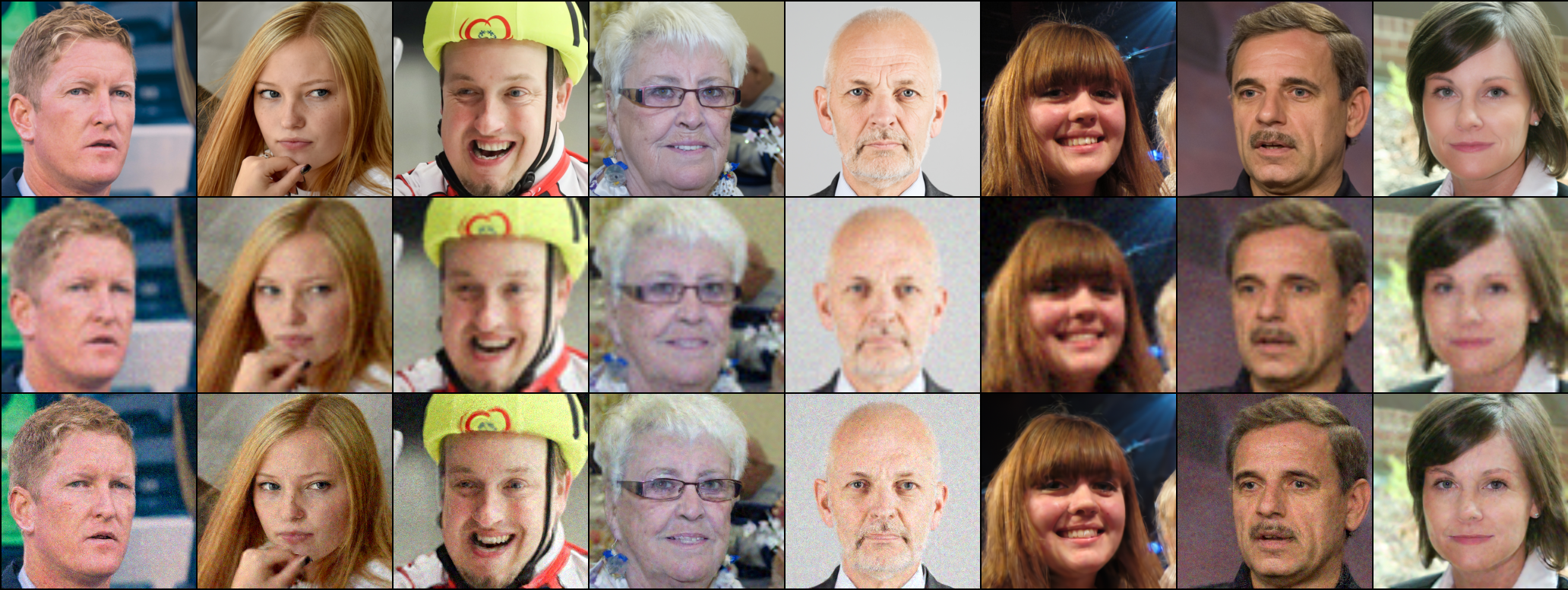}
    \caption{Qualitative visualization of using greedy guidance to solve the super resolution $4\times$ inverse problem. Top row is the ground truth, middle row is the measurement, and the bottom row is the reconstruction.}
\end{figure}

\begin{figure}[h]
    \centering
    \includegraphics[width=\textwidth]{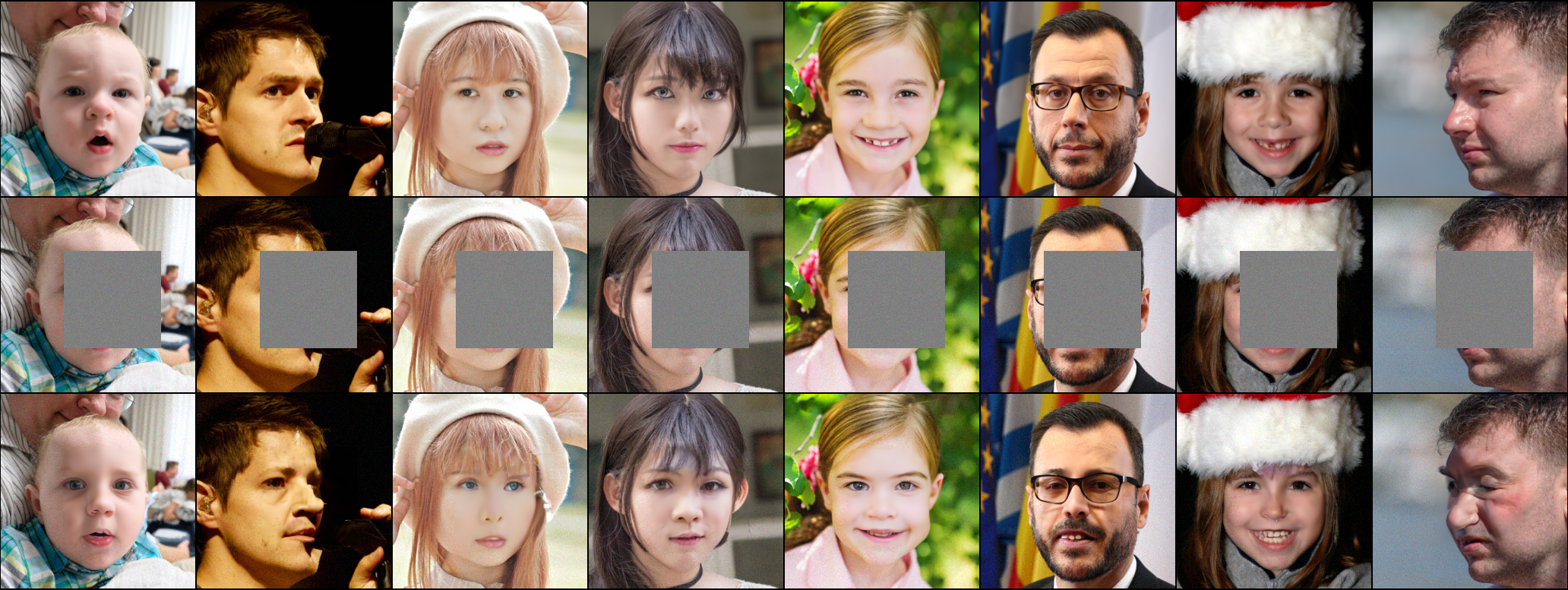}
    \caption{Qualitative visualization of using greedy guidance to solve the super resolution $4\times$ inverse problem. Top row is the ground truth, middle row is the measurement, and the bottom row is the reconstruction.}
\end{figure}

\begin{figure}[h]
    \centering
    \includegraphics[width=\textwidth]{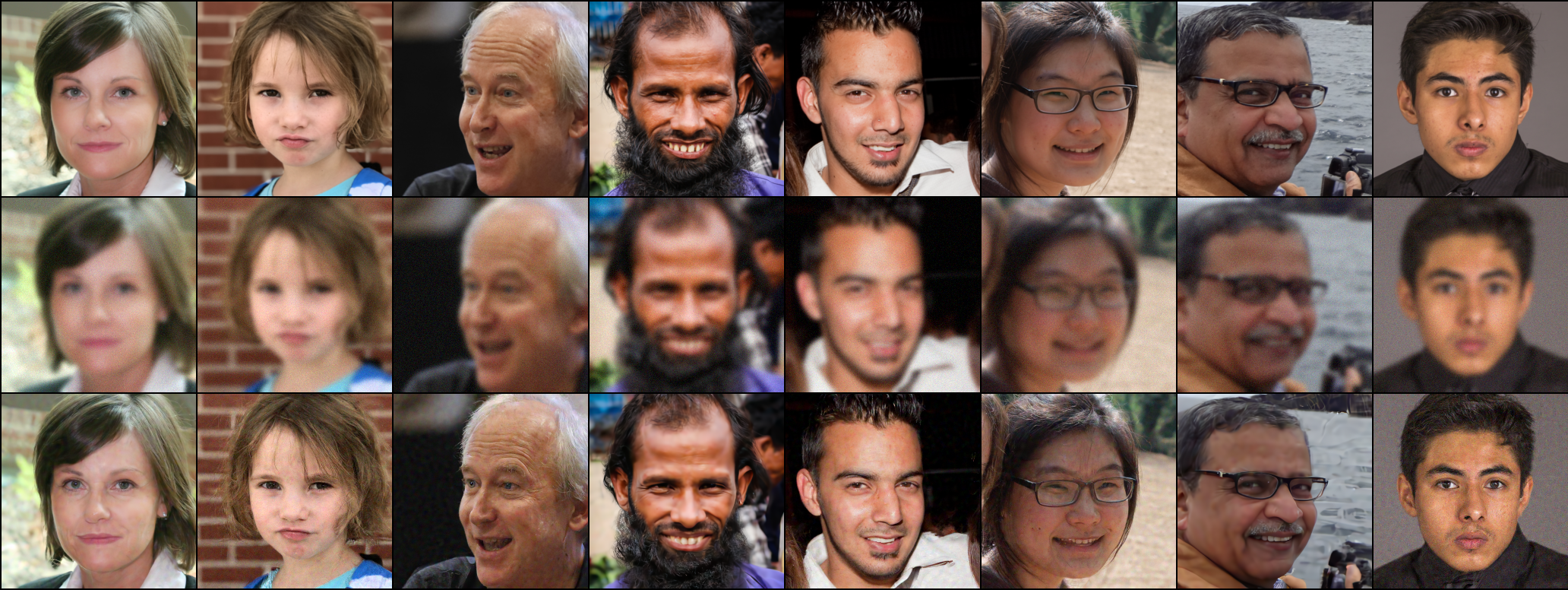}
    \caption{Qualitative visualization of using greedy guidance to solve the Gaussian deblurring inverse problem. Top row is the ground truth, middle row is the measurement, and the bottom row is the reconstruction.}
\end{figure}

\begin{figure}[h]
    \centering
    \includegraphics[width=\textwidth]{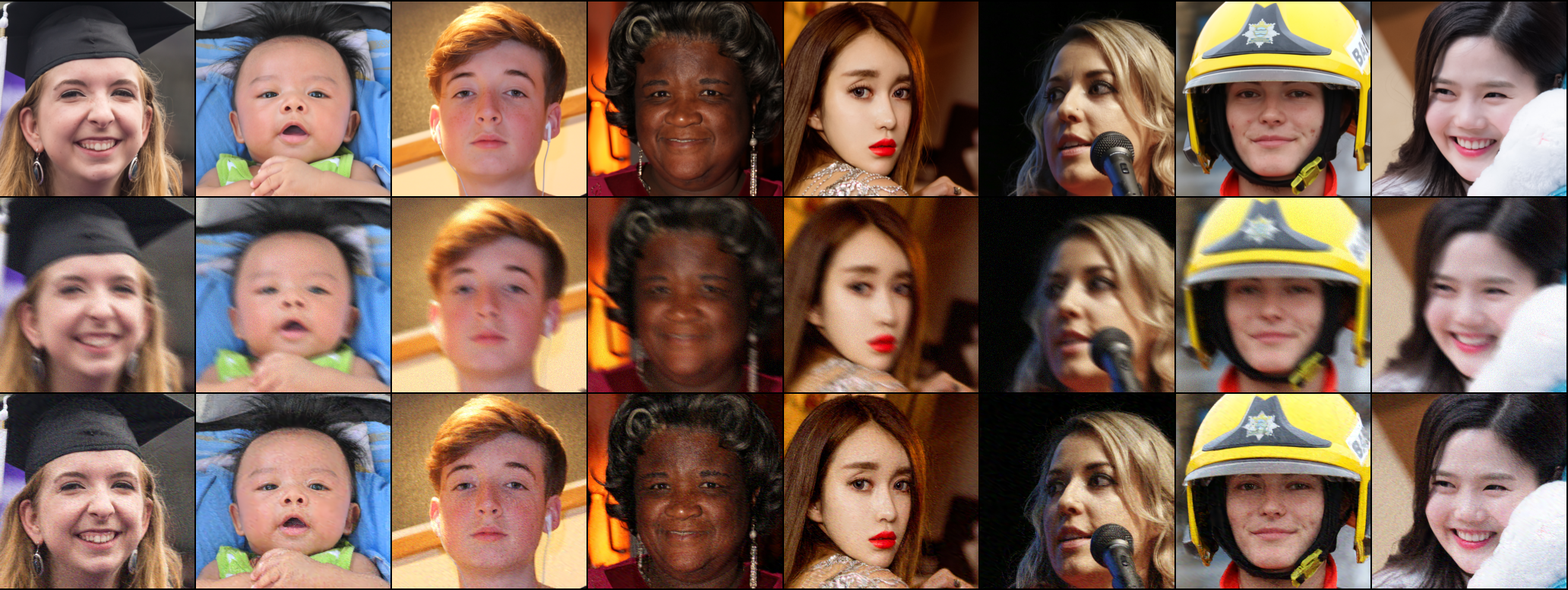}
    \caption{Qualitative visualization of using greedy guidance to solve the motion deblurring inverse problem. Top row is the ground truth, middle row is the measurement, and the bottom row is the reconstruction.}
\end{figure}

\begin{figure}[h]
    \centering
    \includegraphics[width=\textwidth]{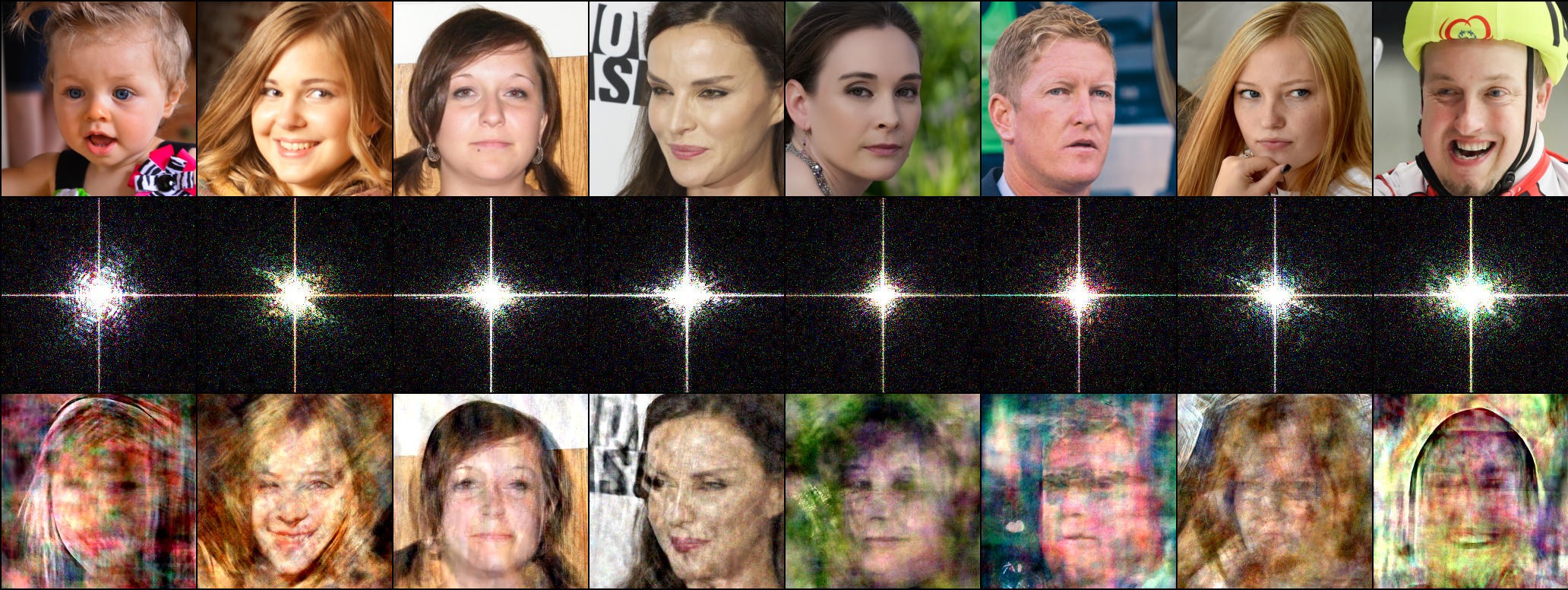}
    \caption{Qualitative visualization of using greedy guidance to solve the Phase retrieval inverse problem. Top row is the ground truth, middle row is the measurement, and the bottom row is the reconstruction.}
\end{figure}

\begin{figure}[h]
    \centering
    \includegraphics[width=\textwidth]{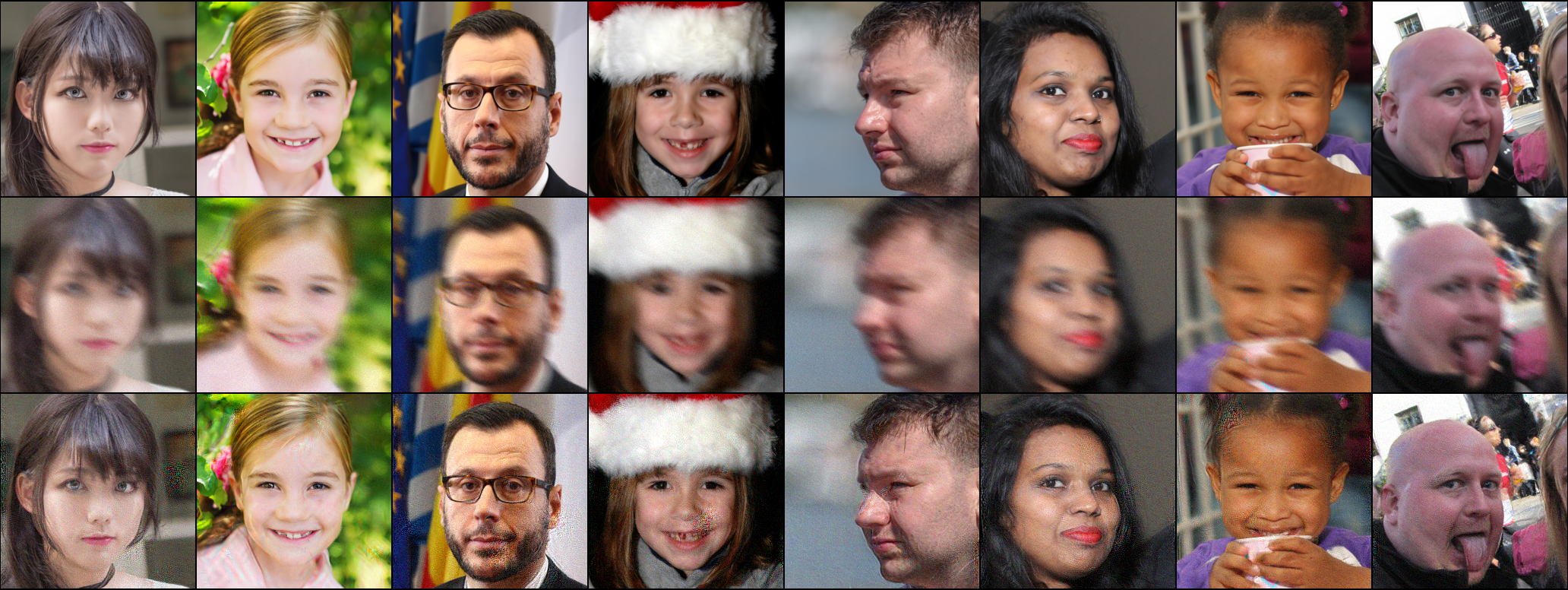}
    \caption{Qualitative visualization of using greedy guidance to solve the nonlinear deblurring inverse problem. Top row is the ground truth, middle row is the measurement, and the bottom row is the reconstruction.}
\end{figure}

\begin{figure}[h]
    \centering
    \includegraphics[width=\textwidth]{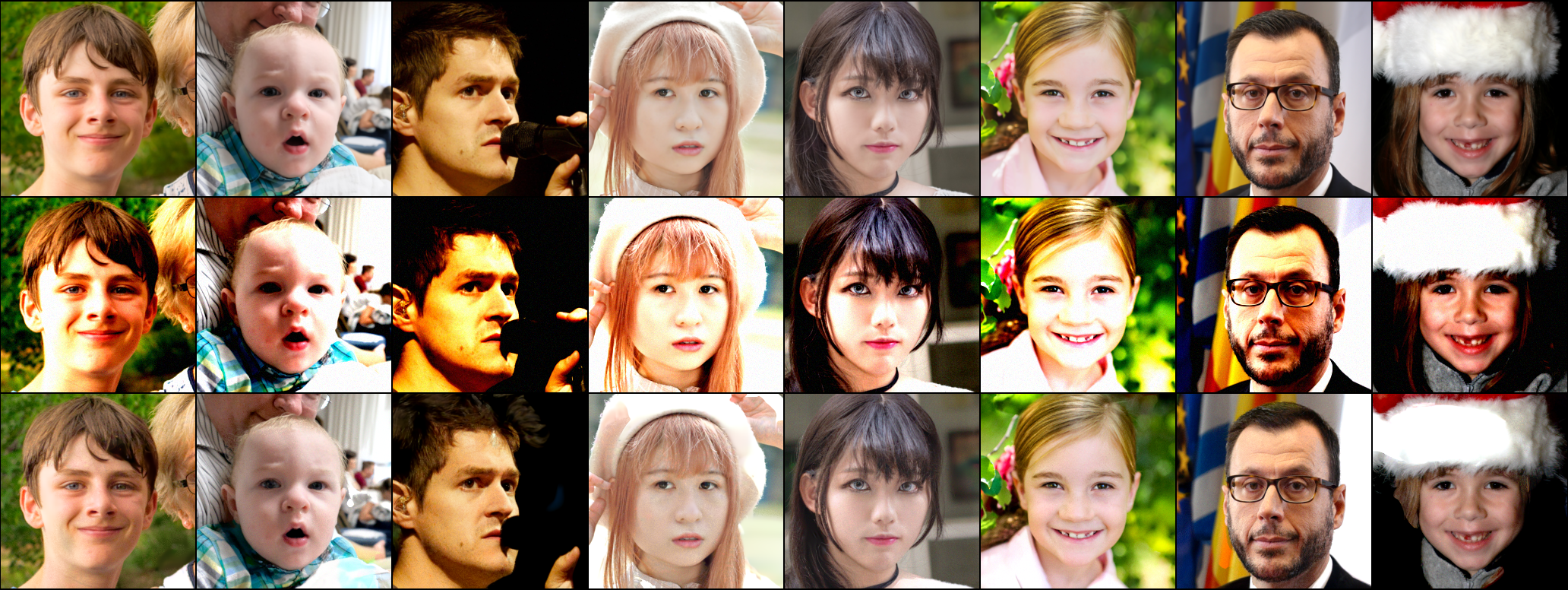}
    \caption{Qualitative visualization of using greedy guidance to solve the HDR inverse problem. Top row is the ground truth, middle row is the measurement, and the bottom row is the reconstruction.}
\end{figure}

\clearpage

\begin{figure}[t]
    \centering
    \begin{subfigure}{0.125\textwidth}
        \centering\includegraphics[width=\textwidth]{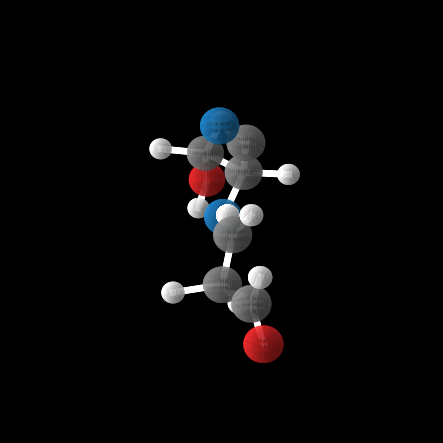}
    \end{subfigure}%
    \begin{subfigure}{0.125\textwidth}
        \centering\includegraphics[width=\textwidth]{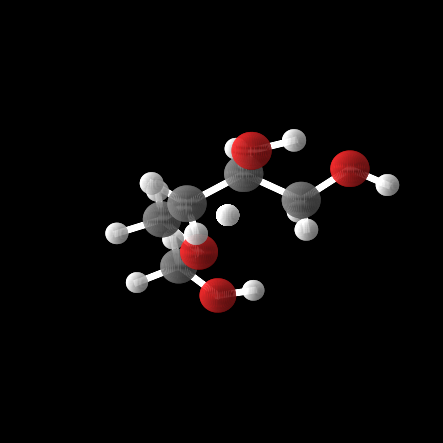}
    \end{subfigure}%
    \begin{subfigure}{0.125\textwidth}
        \centering\includegraphics[width=\textwidth]{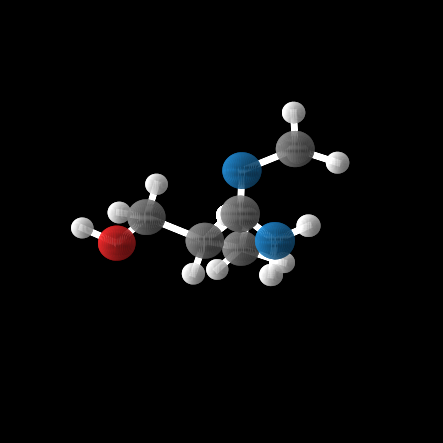}
    \end{subfigure}%
    \begin{subfigure}{0.125\textwidth}
        \centering\includegraphics[width=\textwidth]{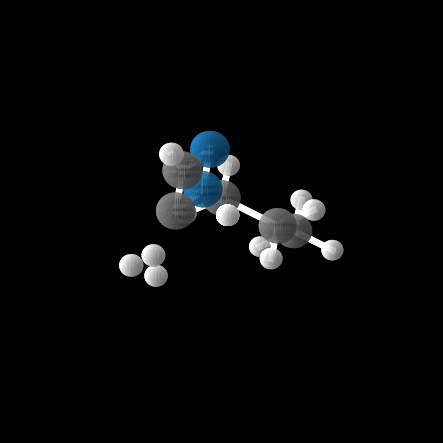}
    \end{subfigure}%
    \begin{subfigure}{0.125\textwidth}
        \centering\includegraphics[width=\textwidth]{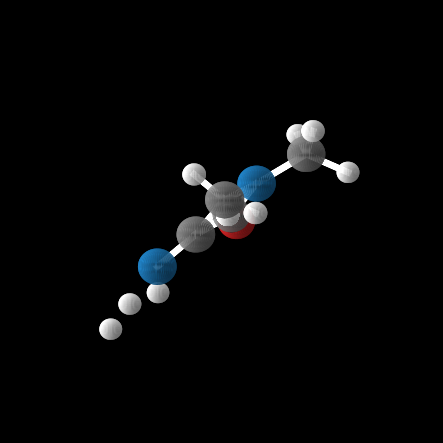}
    \end{subfigure}%
    \begin{subfigure}{0.125\textwidth}
        \centering\includegraphics[width=\textwidth]{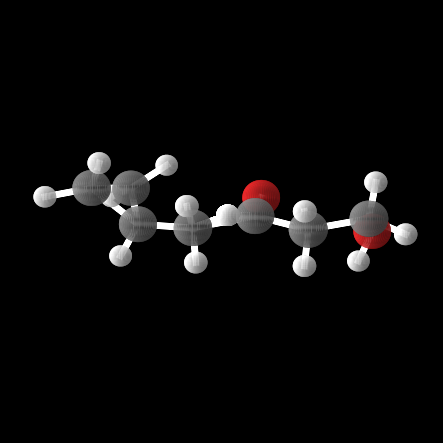}
    \end{subfigure}%
    \begin{subfigure}{0.125\textwidth}
        \centering\includegraphics[width=\textwidth]{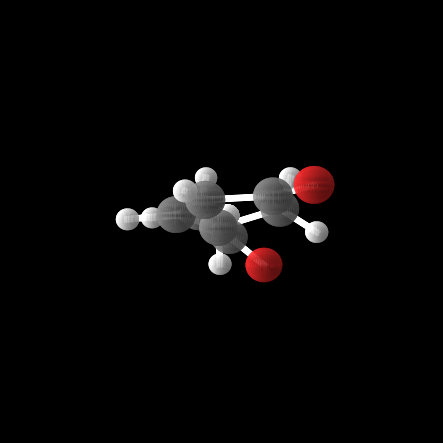}
    \end{subfigure}%
    \begin{subfigure}{0.125\textwidth}
        \centering\includegraphics[width=\textwidth]{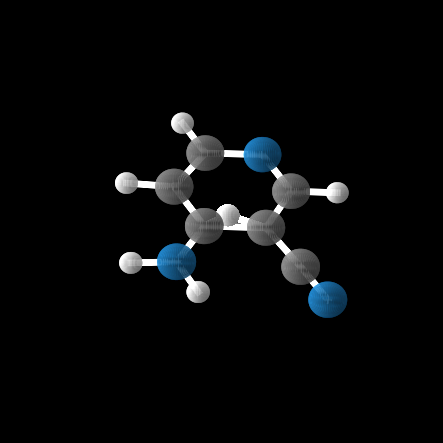}
    \end{subfigure}

    \begin{subfigure}{0.125\textwidth}
        \centering\includegraphics[width=\textwidth]{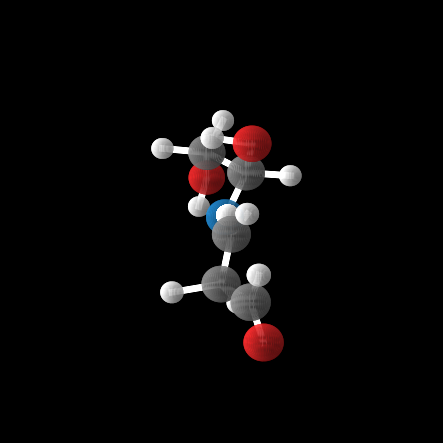}
        \caption*{0.014}
    \end{subfigure}%
    \begin{subfigure}{0.125\textwidth}
        \centering\includegraphics[width=\textwidth]{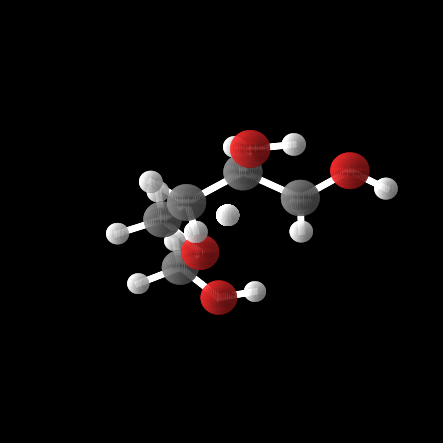}
        \caption*{0.166}
    \end{subfigure}%
    \begin{subfigure}{0.125\textwidth}
        \centering\includegraphics[width=\textwidth]{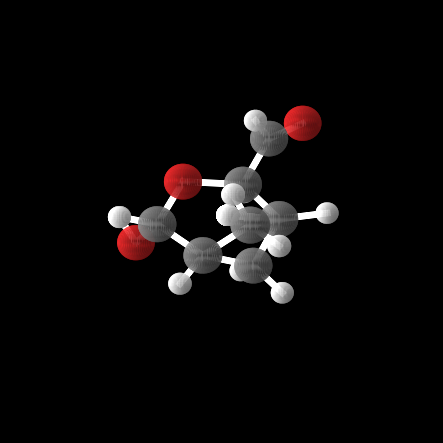}
        \caption*{1.388}
    \end{subfigure}%
    \begin{subfigure}{0.125\textwidth}
        \centering\includegraphics[width=\textwidth]{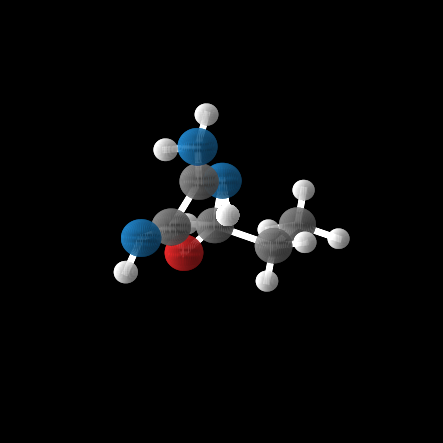}
        \caption*{2.322}
    \end{subfigure}%
    \begin{subfigure}{0.125\textwidth}
        \centering\includegraphics[width=\textwidth]{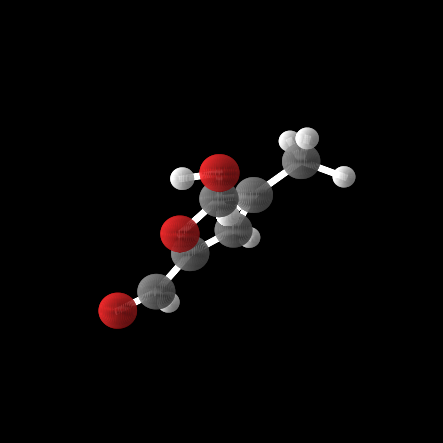}
        \caption*{2.804}
    \end{subfigure}%
    \begin{subfigure}{0.125\textwidth}
        \centering\includegraphics[width=\textwidth]{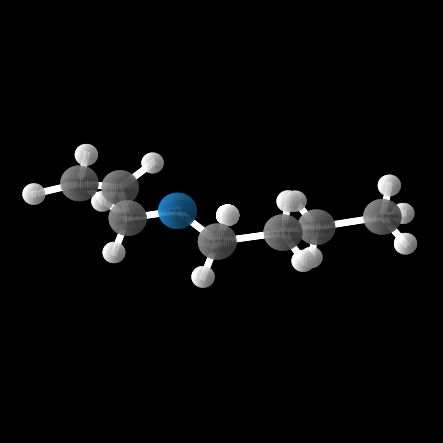}
        \caption*{3.878}
    \end{subfigure}%
    \begin{subfigure}{0.125\textwidth}
        \centering\includegraphics[width=\textwidth]{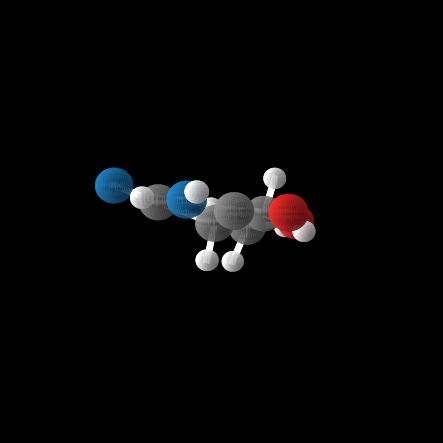}
        \caption*{4.922}
    \end{subfigure}%
    \begin{subfigure}{0.125\textwidth}
        \centering\includegraphics[width=\textwidth]{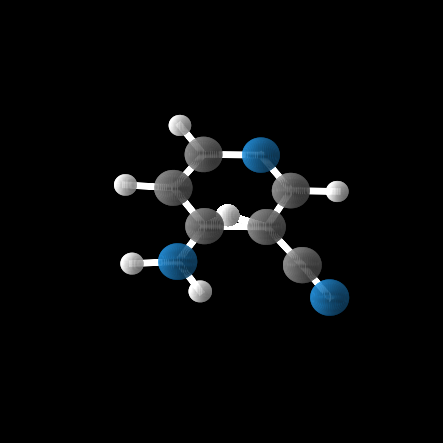}
        \caption*{7.951}
    \end{subfigure}
    \label{fig:molecules_vis2}
    \caption{Qualitative visualization of controlled generated molecules for various dipole moments $(\mu)$. Top row is generated using a end-to-end guidance with a DTO scheme and the bottom row is generated using posterior guidance.}
\end{figure}

\subsection{More qualitative samples for controlled molecule generation}
In \cref{fig:molecules_vis2} we present some qualitative results for property-guided molecule generation.
In particular, we target different dipole moments.

\section{Discussions}
\label{app:disc}

\subsection{Broader Impacts}
\label{app:broader}
Controllable generation can be used for many tasks both benign and malicious.
The insights from this paper could be used to develop more effective adversarial attacks, generation of harmful content, or other malicious applications.

\subsection{Limitations}
\label{app:limitations}
As this work is mostly theoretical, our experimental illustrations are limited, serving more to illustrate the key concepts rather than advancing the state-of-the-art within the particular problem.
We believe that future work can use these insights to make informed design choices when developing solutions to guided generation problems.

In our controllable molecule generation experiments, we take a na\"ive strategy for annealing the learning rate leaving performance on the table.
Moreover, we don't consider mixed accuracy schemes, \ie, using Euler for certain steps closer to the target and midpoint for steps further away \parencite[\cf][]{moufad2025variational}.

\end{document}